\pgfplotsset{compat=1.18}
\def\eqref#1{(\ref{#1})}
\def\angle#1{\langle #1 \rangle}
\newcommand{\dif}{{\mathrm{d}}}
\def\vzero{{\bm{0}}}
\def\vone{{\bm{1}}}
\def\vb{{\bm{b}}}
\def\vm{{\bm{m}}}
\def\vp{{\bm{p}}}
\def\vq{{\bm{q}}}
\def\vs{{\bm{s}}}
\def\vu{{\bm{u}}}
\def\vw{{\bm{w}}}
\def\vx{{\bm{x}}}
\def\vy{{\bm{y}}}
\def\vbeta{{\bm{\beta}}}
\def\vgamma{{\bm{\gamma}}}
\def\vmu{{\bm{\mu}}}
\def\vxi{{\bm{\xi}}}
\def\mA{{\bm{A}}}
\def\mD{{\bm{D}}}
\def\mI{{\bm{I}}}
\def\mM{{\bm{M}}}
\def\mQ{{\bm{Q}}}
\def\mZ{{\bm{Z}}}
\def\mLambda{{\bm{\Lambda}}}
\def\mSigma{{\bm{\Sigma}}}
\DeclareMathAlphabet{\mathsfit}{\encodingdefault}{\sfdefault}{m}{sl}
\SetMathAlphabet{\mathsfit}{bold}{\encodingdefault}{\sfdefault}{bx}{n}
\def\gB{{\mathcal{B}}}
\def\gF{{\mathcal{F}}}
\def\gI{{\mathcal{I}}}
\def\gJ{{\mathcal{J}}}
\def\gK{{\mathcal{K}}}
\def\gL{{\mathcal{L}}}
\def\gM{{\mathcal{M}}}
\def\gN{{\mathcal{N}}}
\def\gO{{\mathcal{O}}}
\def\gS{{\mathcal{S}}}
\def\gU{{\mathcal{U}}}
\def\gV{{\mathcal{V}}}
\def\sN{{\mathbb{N}}}
\def\sQ{{\mathbb{Q}}}
\def\sT{{\mathbb{T}}}
\def\sX{{\mathbb{X}}}
\def\sZ{{\mathbb{Z}}}
\newcommand{\E}{\mathbb{E}}
\renewcommand{\P}{\mathbb{P}}
\newcommand{\R}{\mathbb{R}}
\newcommand{\KL}{D_{\mathrm{KL}}}
\newcommand{\dom}{\mathrm{dom}}
\DeclareMathOperator*{\argmin}{arg\,min}
\DeclareMathOperator{\diag}{diag}
\DeclareRobustCommand{\cev}[1]{%
  {\mathpalette\do@cev{#1}}%
}
\newcommand{\do@cev}[2]{%
  \vbox{\offinterlineskip
    \sbox\z@{$\m@th#1 x$}%
    \ialign{##\cr
      \hidewidth\reflectbox{$\m@th#1\vec{}\mkern4mu$}\hidewidth\cr
      \noalign{\kern-\ht\z@}
      $\m@th#1#2$\cr
    }%
  }%
}
\numberwithin{equation}{section}
\numberwithin{theorem}{section}
\let\c@example\c@theorem
\def\@opargbegintheorem#1#2#3{%
  \trivlist
  \item[\hskip \labelsep{\bfseries #1\ #2}]\textbf{(#3)} \itshape%
}
\def\@begintheorem#1#2{\trivlist
  \item[\hskip \labelsep{\bfseries #1\ #2}]\itshape}
\newtheorem{assumption}[theorem]{Assumption}
\renewenvironment{proof}[1][Proof]{\par\noindent{\bfseries #1\ }}{\hfill\BlackBox\\[2mm]}
\crefname{equation}{Eq.}{Eqs.}
\crefname{figure}{Figure}{Figures}
\crefname{table}{Table}{Tables}
\crefname{section}{Section}{Sections}
\crefname{appendix}{Appendix}{Appendices}
\crefname{algorithm}{Algorithm}{Algorithms}
\crefname{lemma}{Lemma}{Lemmas}
\crefname{theorem}{Theorem}{Theorems}
\crefname{definition}{Definition}{Definitions}
\crefname{proposition}{Proposition}{Propositions}
\crefname{corollary}{Corollary}{Corollaries}
\crefname{assumption}{Assumption}{Assumptions}
\crefname{remark}{Remark}{Remarks}
\definecolor{seabornblue}{HTML}{3a76af}  
\definecolor{seabornred}{HTML}{c53932}   
\definecolor{seabornpurple}{HTML}{8e69b8}
\renewcommand{\tilde}[1]{\widetilde{#1}}
\renewcommand{\hat}[1]{\widehat{#1}}
\renewcommand{\angle}[1]{\left\langle #1 \right\rangle}
\newcommand\numberthis{\addtocounter{equation}{1}\tag{\theequation}}
\begin{document}

\title{A Unified Approach to Analysis and Design of\\ Denoising Markov Models}

\author{\name Yinuo Ren \email yinuoren@stanford.edu \\
       \addr Institute for Computational and Mathematical Engineering\\
       Stanford University
       \AND
       \name Grant M. Rotskoff \email rotskoff@stanford.edu \\
       \addr Department of Chemistry\\
       Stanford University
       \AND
       \name Lexing Ying \email lexing@stanford.edu \\
       \addr Department of Mathematics\\
       Stanford University
}

\editor{Qiang Liu}

\maketitle

\begin{abstract}%
    Probabilistic generative models based on measure transport, such as diffusion and flow-based models, are often formulated in the language of Markovian stochastic dynamics, where the choice of the underlying process impacts both algorithmic design choices and theoretical analysis.
    In this paper, we aim to establish a rigorous mathematical foundation for denoising Markov models, a broad class of generative models that postulate a forward process transitioning from the target distribution to a simple, easy-to-sample distribution, alongside a backward process particularly constructed to enable efficient sampling in the reverse direction. 
    Leveraging deep connections with nonequilibrium statistical mechanics and generalized Doob's $h$-transform, we propose a minimal set of assumptions that ensure: (1) explicit construction of the backward generator, (2) a unified variational objective directly minimizing the measure transport discrepancy, and (3) adaptations of the classical score-matching approach across diverse dynamics. 
    Our framework unifies existing formulations of continuous and discrete diffusion models, identifies the most general form of denoising Markov models under certain regularity assumptions on forward generators, and provides a systematic recipe for designing denoising Markov models driven by arbitrary L\'evy-type processes.
    We illustrate the versatility and practical effectiveness of our approach through novel denoising Markov models employing geometric Brownian motion and jump processes as forward dynamics, highlighting the framework's potential flexibility and capability in modeling complex distributions.
\end{abstract}

\begin{keywords}
    Generative Models, Markov Processes, Diffusion Models, Denoising Markov Models, Score-Matching, L\'evy Processes
\end{keywords}

\section{Introduction}

Generative modeling, originating from foundational works such as VAE~\citep{kingma2013auto} and GAN~\citep{goodfellow2014generative}, has become one of the central pillars of modern machine learning. 
Following the advent of normalizing flows~\citep{rezende2015variational,zhang2018monge}, Markov processes emerged as a key mathematical foundation for generative models, leading to the groundbreaking development of diffusion models~\citep{sohl2015deep,song2019generative,ho2020denoising,song2020score,song2021maximum} and flow-based generative models~\citep{lipman2022flow,liu2022flow,albergo2022building,albergo2023stochastic}. 
At their core, these models rely on Markovian dynamics to smoothly transform an easy-to-sample reference distribution into complex target data distributions. 
The adaptability of this paradigm has inspired numerous variants tailored to diverse data modalities, including discrete data~\citep{austin2021structured,campbell2022continuous,lou2023discrete}, and
data supported on manifolds~\citep{de2022riemannian,huang2022riemannian,chen2024flow,zhu2024trivialized}.
Generative models based on Markov processes have consistently achieved state-of-the-art performance across various applications, from image synthesis~\citep{meng2021sdedit,ho2022cascaded,bar2023multidiffusion,chen2024deconstructing,ma2024sit,croitoru2023diffusion}, video generation~\citep{ho2022video,yang2023diffusion,wu2023tune,xing2024survey}, and natural language processing~\citep{li2022diffusion,wu2023ar,shi2024simplified,xu2024energy}, to physical, chemical, and biological domains~\citep{xu2022geodiff,yang2023scalable,frey2023protein,guo2024diffusion,stark2024dirichlet}.

The theoretical and algorithmic development of these models is deeply rooted in the mathematical theory of Markov processes.  
Continuous diffusion models, for instance, leverage the theory of stochastic differential equations (SDEs)~\citep{tzen2019theoretical,block2020generative,chen2022sampling,lee2022convergence,chen2023improved,benton2023nearly,wang2024evaluating}, along with ordinary differential equations (ODEs) through the probability flow formulation~\citep{chen2024probability,li2024towards,gao2024convergence,huang2024convergence}.
In parallel, discrete diffusion models utilize the theory of continuous-time Markov chains and Poisson random measures~\citep{campbell2022continuous,chen2024convergence,ren2024discrete}. 
To further improve the efficiency and scalability of these generative frameworks, specialized numerical methods taking advantage of the properties of these underlying Markov processes have been introduced, including high-order numerical schemes~\citep{zhang2022fast,lu2022dpm,lu2022dpm++,dockhorn2022genie,zheng2023dpm,li2024accelerating,wu2024stochastic,ren2025fast}, parallel sampling strategies~\citep{chen2024accelerating,shih2024parallel,selvam2024self}, and randomized algorithms~\citep{kandasamy2024poisson,gupta2024faster,li2024improved}. 
Recent advances in diffusion models have also significantly informed and advanced research in related guidance~\citep{ho2022classifier,chidambaram2024does,skreta2025feynman,chen2025solving,ren2025driftlite} and sampling problems~\citep{zhang2021path,richter2023improved,vargas2023denoising,albergo2024nets,chen2024sequential,guo2024provable,guo2025complexity}.

Motivated by these remarkable empirical successes, there is a growing interest in establishing a unified theoretical understanding and generalizing current Markovian generative frameworks to accommodate a broader class of processes. Such extensions promise to facilitate the modeling of more intricate probability paths, potentially offering enhanced efficiency and expressiveness for handling complex, heavy-tailed, or multimodal distributions. 
Recently, \citet{benton2024denoising} proposed a general framework known as \emph{denoising Markov models}, which operates directly on the level of generators rather than explicit transition kernels to unify continuous and discrete diffusion models. Meanwhile, \citet{holderrieth2024generator} developed the generator matching framework, extending flow-matching methods to general Markov processes, including pure diffusion and jump dynamics. Nevertheless, a comprehensive generalization and theoretical characterization of denoising Markov models remain open challenges.

\subsection{Contributions}

This paper provides a rigorous treatment of the mathematical foundations of denoising Markov models using the language of Markov process generators. Our analysis deepens the theoretical understanding and systematically guides the algorithmic design of such generative models. Specifically, our contributions are summarized as follows:

\begin{itemize}
    \item We employ a fundamental connection between the parametrization of backward Markov processes and the generalized Doob's $h$-transform~\citep{chetrite2015nonequilibrium}. This insight allows us to formulate minimal and explicit assumptions required to justify the application of the change-of-measure argument, derive a theoretically grounded loss function that directly minimizes the KL divergence between the target and generated distributions, and generalize the classical score-matching technique~\citep{hyvarinen2005estimation} to a broad class of Markovian models.
    \item Our mathematical framework unifies and extends several existing denoising Markov models, encompassing both continuous and discrete diffusion models as well as more sophisticated L\'evy-It\^o models~\citep{yoon2023score}. Furthermore, we characterize the most general form of denoising Markov models under standard regularity conditions on their generators~\citep{courrege1965forme} and thus generalize the permissible class of forward processes from diffusion and jump processes to arbitrary L\'evy-type processes.
    \item Based on these theoretical developments, we provide a general yet practical recipe for training and inference in denoising Markov models. Our framework accommodates any underlying Markov process satisfying the established conditions, ensuring theoretical guarantees on model performance. Analogous to the generator matching framework~\citep{holderrieth2024generator}, which extends the flow-matching paradigm, our results significantly enhance the flexibility of model design and efficiency of training algorithms in the context of denoising Markov models.
\end{itemize}

In addition, we introduce novel instances of denoising Markov models by utilizing geometric Brownian motion and general jump processes as forward processes, which, to our knowledge, have not been previously explored. Through carefully designed empirical experiments, we validate the versatility, practicality, and effectiveness of our generalized framework, demonstrating its capability to model complex target distributions.

\begin{figure}[!p]
    \captionsetup{singlelinecheck=off}
    \centering
    \begin{tikzpicture}

        \begin{scope}[on background layer]
            \shade[left color=seabornred!5, right color=seabornred!20] 
                  (-7.3, -0.6) rectangle (7.3, 3.4);
            \shade[left color=seabornblue!20, right color=seabornblue!5] 
                  (-7.3, -4.8) rectangle (7.3, -0.6);
        \end{scope}
      
        \node[align=left, font={\itshape}] (left)  at (-6.0, 2.7) {Data\\Distribution};
        \node[align=right, font={\itshape}] (right) at (5.8, 2.7) {\itshape Easy-to-Sample\\Distribution};
      
        \draw[dotted, line width=1pt] (-3.6, -4.8) -- (-3.6, 3.4);
        \draw[dotted, line width=1pt] (3.3, -4.8)  -- (3.3, 3.4);
        \draw[dotted, line width=1pt] (-7.3, -0.6) -- (7.3, -0.6);

        \draw[line width=.2pt] (-7.3, 2.2) -- (-4.4, 2.2);
        \draw[line width=.2pt] (4.3, 2.2) -- (7.3, 2.2);
      
        \draw[line width=1pt, ->, seabornred] (-7.0, -0.5) to node[midway, right, align=left, seabornred, font={\bfseries}]{True} (-7.0, 0.5);
          
        \draw[line width=1pt, ->, seabornblue] (-7.0, -0.5) to node[midway, right, align=left, seabornblue, font={\bfseries}]{Estimated} (-7.0, -1.5);
      
        \draw[line width=.8pt, ->] (-4.5, 2.8) to node[midway, above, sloped, font=\bfseries]{Forward (Noising) Process} (4.0, 2.8);
        \draw[line width=.8pt, <-] (-4.5, -4.2) to node[midway, below, sloped, font=\bfseries]{Backward (Denoising) Process} (4.0, -4.2);
      
        \node (x0) at (-4.4, 1.6) {\color{seabornred} $x_0$};
        \node (xT) at (4.2, 1.6)  {\color{seabornred} $x_T$};
        \draw[line width=1pt, ->, seabornred] (x0)
          to [in=170, out=10]
            node[midway, above] {$\partial_t p_t = \gL_t^* p_t$}
            node (3_x) [pos=0.3] {}
            node (15_x) [pos=0.13] {}
          (xT);
        \node[left of=x0, xshift=-3pt, align=right, seabornred, font=\footnotesize] {Given\\Samples};
      
        \node (cev_x0) at (4.2, 0)  {\color{seabornred} $\cev x_0$};
        \node (cev_xT) at (-4.4, 0) {\color{seabornred} $\cev x_T$};
        \draw[line width=1pt, <-, seabornred] (cev_xT)
          to [in=170, out=10]
            node[midway, below] {$\partial_t \cev p_t = \cev \gL_t^* \cev p_t$}
            node (3_cev_x) [pos=0.3] {}
            node (15_cev_x) [pos=0.13] {}
            node (85_cev_x) [pos=0.85] {}
          (cev_x0);
      
        \draw[<->] (3_x) -- (3_cev_x)
          node[midway, right, align=left]{
            ${\color{seabornred} \cev \gL_{T-t}} = {\color{seabornred}\gL_t^*} + {\color{seabornred} p_t^{-1}} {\color{seabornred} \Gamma_t^*}({\color{seabornred} p_t}, \cdot)$\\
            (\cref{thm:time_reversal})
          };
      
        \path (x0) to node[midway]{\rotatebox{90}{\scalebox{3}[1]{$=$}}} (cev_xT);
        \path (xT) to node[midway]{\rotatebox{90}{\scalebox{3}[1]{$=$}}} (cev_x0);
      
        \node (y0) at (4.2, -3.0)  {\color{seabornblue} $y_0$};
        \node (yT) at (-4.4, -3.0) {\color{seabornblue} $y_T$};
        \draw[line width=1pt, <-, in=-170, out=-10, seabornblue] (yT)
          to node[midway, below] {$\partial_t q_t = \gK_t^* q_t$}
            node (15_y) [pos=0.13] {}
            node (85_y) [pos=0.85] {}
          (y0);
        \node[left of=yT, xshift=-3pt, yshift=-5pt, align=right, seabornblue, font=\footnotesize] {Generated\\Samples};
      
        \draw[<->] (15_y) -- (15_x)
          node[pos=0.37, right, align=left]{
            ${\color{seabornblue} \gK_{T-t}} = {\color{seabornred} \gL_t^*} + {\color{seabornblue} \varphi_t^{-1}} {\color{seabornred} \Gamma_t^*}({\color{seabornblue} \varphi_t}, \cdot)$\\
            (\cref{ass:K})
          };
      
        \draw[<->] (85_y) -- (85_cev_x)
          node[pos=0.2, left, align=right]{
            ${\color{seabornblue} \gK_{T-t}} = {\color{seabornred} \cev\gL_{T-t}} + {\color{seabornpurple} \eta_t^{-1}} {\color{seabornred} \cev\Gamma_{T-t}}({\color{seabornpurple}  \eta_t}, \cdot)$\\
            (\cref{lem:alpha})
          };
      
        \draw[<->] (cev_xT) to node[midway, pos=0.7, left, align=right] {
            $\KL({\color{seabornred} p_0} \| {\color{seabornblue} q_T})$\\
            (\cref{cor:error_bound})
          } (yT);
        
        \path (cev_x0) to node[midway]{\rotatebox{90}{\scalebox{3}[.8]{$\approx$}}} (y0);
      
        \node (P) at (4.7, 1)    {\color{seabornred} \Large $\P$};
        \node (Q) at (4.7, -2.8) {\color{seabornblue} \Large $\sQ$};
        \draw[<->] (P) to node[right, pos=0.6, align=left] {
            $\KL({\color{seabornred} \P} \| {\color{seabornblue} \sQ})$\\
            (\cref{thm:change_of_measure})
          } (Q);
          
    \end{tikzpicture}
      
    \caption{\textbf{Conceptual roadmap of our denoising Markov model framework.} This diagram summarizes how forward, backward, and estimated processes are related at the level of generators.}
    \vskip -.5em
    \begin{itemize}
        \item {\color{seabornred} \textbf{Red (true)}}: The forward Markov process $x_t$ evolves from the data distribution $p_0$ to an easy-to-sample reference $p_T \approx q_0$, with generator $\gL_t$ and adjoint $\gL_t^*$ satisfying their corresponding Kolmogorov equation. The relation between the forward and backward processes is given by \cref{thm:time_reversal}.
        \item {\color{seabornblue} \textbf{Blue (estimated)}}: The learned backward process $y_t$ starts from $q_0$ and aims to reconstruct $p_0$. Its generator $\gK_t$ is assumed to be parametrized by a positive function $\varphi_t$ in the form of \cref{ass:K}, where $\Gamma_t^*$ is the carr\'e du champ operator associated with the adjoint generator $\gL_t^*$. 
        \item {\color{seabornpurple} \textbf{Purple (link)}}: The density ratio $\eta_t = \varphi_t p_t^{-1}$ connects the two processes via \cref{lem:alpha}, where $\cev \Gamma_t$ is the carr\'e du champ operator associated with the backward generator $\cev \gL_t$. This demonstrates that the estimated backward generator $\gK_t$ can be deemed as a perturbation from the backward generator $\cev \gL_t$, which leads to the change-of-measure loss (\cref{thm:change_of_measure}) that upper bounds $\KL(p_0\|q_T)$ (\cref{cor:error_bound}).
    \end{itemize}
    \label{fig:denoising}
\end{figure}

\subsection{Outline}

This paper is organized in the following structure: \cref{sec:prelim} reviews the background of denoising Markov models and recalls two key special cases, continuous and discrete diffusion models, that motivate our generalization; \cref{sec:markov} develops the core mathematical theory of denoising Markov models at the generator level, the roadmap of which is illustrated in \cref{fig:denoising}. \cref{sec:examples} specializes these general results to concrete process classes, including diffusions, finite-state jump processes, and general Lévy-type processes, illustrating how familiar score-matching losses and their discrete counterparts arise as special cases.
\cref{sec:experiments} presents proof-of-concept experiments, including models driven by geometric Brownian motion and pure jump processes, which demonstrate the flexibility of our framework beyond classical diffusion dynamics.
Finally, \cref{sec:conclusion} discusses implications and future directions.

\section{Preliminaries}
\label{sec:prelim}

In this section, we outline the problem settings and basic concepts pertinent to generative modeling with Markov processes. We review the problem setting of denoising Markov models, emphasizing two prominent and well-studied diffusion model variants designed respectively for continuous and discrete data distributions.

\subsection{Problem Setting and Notations}

We consider the following problem setting and adopt these notational conventions throughout this paper. We refer readers to~\cref{fig:denoising} for a diagram of this framework. 
The main objective of generative modeling is to sample from a \emph{target distribution} $p_\text{data}$ supported on a measure space $(E, \gB(E), \mu)$, given a dataset sampled from $p_\text{data}$, which forms an empirical distribution $\hat p_\text{data} \approx p_\text{data}$. Throughout, general states are denoted $x_t \in E$ and in bold specifically when the states have a clear multi-dimensional structure, \emph{i.e.}, $E = \R^d$ or $E = [S]^d$.

In denoising Markov models, we first construct a c\'adl\'ag Markov process $(x_t)_{t \in [0, T]}$ on $E$ in the probability space $(\Omega, \gF, \P)$, referred to as the \emph{forward process} in the sequel, originating from $x_0 \sim p_\text{data}$. This process progressively injects noise, causing the sample to deviate from the original data distribution. We denote the distribution of $x_t$ as $p_t$, with $p_0 = p_\text{data}$. The forward process is designed in a way that after a sufficiently long time horizon $T$, the resulting distribution $p_T$ approximates or coincides with a simpler distribution $q_0$ that is relatively straightforward to sample from (\emph{e.g.}, a standard Gaussian) than $p_0$.

Denoising Markov models then aim to construct another denoising Markov process $(y_t)_{t \in [0, T]}$, termed the \emph{backward process} below, whose distribution at time $t$ is denoted as $q_t$.
Starting from the simpler distribution $q_0$, the backward process also evolves over the time horizon $T$ to yield a distribution $q_T$, intended to closely approximate the target distribution $p_\text{data}$. Typically, constructing the backward process involves the following two main steps: (1) parametrizing the backward process $(y_t)_{t \in [0, T]}$ using a neural network, and (2) training the neural network by minimizing the discrepancy between the time-reversed forward process $(\cev x_t)_{t \in [0, T]}$ and the backward process $(y_t)_{t \in [0, T]}$.

In general, the learnability of the backward process is significantly influenced by its parametrization, \emph{i.e.}, the class of Markov processes representable by a particular neural network architecture. Choosing an appropriate parametrization is thus a crucial component in the design of denoising Markov models. Additionally, careful construction of the loss function is essential for enabling efficient training and providing strong theoretical guarantees. Favorably, the minimization of the loss function directly optimizes the discrepancy between the target and generated distributions, which is often quantified by the KL divergence. 

In the following, we review two well-known denoising Markov models, namely continuous and discrete diffusion models, by introducing their respective forward and backward processes and the corresponding parametrization and training procedures.

\subsection{(Continuous) Diffusion Models}
\label{sec:cont_diffusion}

(Continuous) diffusion models~\citep{song2020score} are denoising Markov models designed for data supported on the $d$-dimensional space $\R^d$, \emph{i.e.}, $E = \R^d$, $\gB(E)$ is the Borel $\sigma$-algebra on $\R^d$, and $\mu$ is the Lebesgue measure. The forward process $(\vx_t)_{t \in [0, T]}$ is described by a diffusion process. Specifically, the forward process $(\vx_t)_{t \in [0, T]}$ is governed by the following stochastic differential equation (SDE):
\begin{equation}
    \dif \vx_t = \vb_t(\vx_t) \dif t + \dif \vw_t,
    \label{eq:continuous_diffusion_forward}
\end{equation}
where $\vb_t$ denotes the drift term and $(\vw_t)_{t \geq 0}$ is a $d$-dimensional Wiener process. For simplicity, we assume an identity diffusion coefficient, and its generalization to the time-inhomogeneous case is straightforward with time reparametrization. A common choice of the drift is $\vb_s = - \frac{1}{2} \vx_s$, yielding the Ornstein-Uhlenbeck (OU) process and thus driving the forward process~\eqref{eq:continuous_diffusion_forward} towards a standard Gaussian distribution exponentially fast. 

The forward SDE~\eqref{eq:continuous_diffusion_forward} naturally corresponds to a backward process $(\cev \vx_t)_{t\in [0, T]}$, which is also a diffusion process satisfying the following SDE~\citep{anderson1982reverse}:
\begin{equation}
    \dif \cev \vx_t = \left(- \vb_{T-t}(\cev \vx_t) + \nabla \log p_{T-t} (\cev \vx_t) \right) \dif t + \dif \vw_t,
    \label{eq:continuous_diffusion_backward}
\end{equation}
where the term $\vs_t(\vx) := \nabla \log p_t(\vx)$, known as the \emph{score function}, is often approximated by a neural network $\hat \vs_t^\theta(\vx)$ parametrized by $\theta$. In other words, the backward process is parametrized by
the family of estimated backward processes $(\vy_t)_{t \in [0, T]}$ satisfying the following SDE:
\begin{equation*}
    \dif \vy_t = \left(- \vb_{T-t}(\vy_t) + \hat \vs_{T-t}^\theta(\vy_t) \right) \dif t + \dif \vw_t,
\end{equation*}
and when the estimated score function $\hat \vs_t^\theta(\vx)$ coincides with the true score function $\vs_t(\vx)$, \emph{i.e.}, $\hat \vs_t^\theta(\vx) = \vs_t(\vx)$, the backward process $(\vy_t)_{t \in [0, T]}$ is exactly the time-reversed forward process $(\cev \vx_t)_{t \in [0, T]}$.

The parameters are estimated through the score-matching objective~\citep{hyvarinen2005estimation}:
\begin{equation}
    \min_\theta \E_{\vx_0 \sim p_0}\left[ \int_0^T \psi_t \E_{\vx_t \sim p_{t|0}(\cdot | \vx_0)} \left[ \left\| \nabla \log p_{t|0}(\vx_t|\vx_0) - \hat \vs_t^\theta(\vx_t) \right\|^2 \right] \dif t \right], 
    \label{eq:continuous_diffusion_loss}
\end{equation}
where $p_{t|0}(\cdot|\vx_0)$ denotes the conditional distribution at time $t$ given a sample $\vx_0$ from the data distribution $p_0$, and $\psi_t$ is a time-dependent weighting function. Specifically, in the case of the OU process where $\vb_t = -\frac{1}{2} \vx_t$, we have the following closed-form formula for the conditional density:
\begin{equation*}
    \nabla \log p_{t|0}(\vx_t|\vx_0) = -\dfrac{\vx_t - \vx_0 e^{-t/2}}{1 - e^{-t}},
\end{equation*}
through which \cref{eq:continuous_diffusion_loss} gives a practical loss function for training.

\subsection{Discrete Diffusion Model}
\label{sec:disc_diffusion}

Discrete diffusion models consider data supported on a finite discrete set $\sX$, \emph{i.e.}, $E = \sX$, $\gB(E) = 2^{\sX}$, and $\mu$ is the counting measure.
The forward process $(x_t)_{t \in [0, T]}$ is a continuous-time Markov chain and the distribution vector $\vp_t = (p_t(x))_{x \in \sX}$ of the forward process at time $t$ satisfies the following evolution equation:
\begin{equation}
    \dfrac{\dif \vp_t}{\dif t} = \mLambda_t \vp_t, 
\label{eq:discrete_diffusion_forward}
\end{equation}
where $\mLambda_t = (\Lambda_t(y, x))_{x, y \in \sX}$ is the \emph{rate matrix} of the continuous-time Markov chain satisfying the following two conditions: (1) $\Lambda_t(x, x) = - \sum_{y \neq x} \Lambda_t(y, x)$, for all $x \in \sX$, and (2) $\Lambda_t(x, y) \geq 0$, for all $x \neq y$. 
At the pointwise level, the forward process $(x_t)_{t \in [0, T]}$ also admits a stochastic integral formulation~\citep{ren2024discrete}:
\begin{equation*}
    x_t = x_0 + \int_0^t \int_\sX (x - x_{s-}) N[\lambda](\dif s, \dif x),
\end{equation*}
where $N[\lambda](\dif s, \dif x)$ denotes the Poisson random measure with intensity $\lambda_s(x, x_{s^-}) \dif x$, with the intensity $\lambda_t(y, x) = \Lambda_t(y, x) (1 - \delta_x(y))$.

Strictly speaking, the jump measure depends on the outcome $\omega\in\Omega$ through the predictable
state process $x_{s^-}(\omega)$. A convenient and rigorous construction is via thinning
of a standard Poisson random measure on an augmented space.
Let $\mu$ be a reference measure on $\sX$ (\emph{e.g.}, counting measure when $\sX$ is discrete), and let
$N(\dif s,\dif x,\dif z)$ be a Poisson random measure on $[0,\infty)\times \sX\times \R_+$ with deterministic
intensity $\dif s\mu(\dif x)\dif z$, meaning that (1) for any $0\le s<t<\infty$ and any Borel sets
$A\subset \sX$ and $B\subset \R_+$ with finite measure,
\[
    N\big((s,t]\times A\times B\big)\sim \mathrm{Poisson}\big((t-s)\mu(A)|B|\big),
\]
and (2) the random variables $N\big((s,t]\times A_i\times B_i\big)$ are independent over disjoint sets
$(s,t]\times A_i\times B_i$.
Given a predictable nonnegative intensity $\lambda_s(x, x_{s^-}(\omega))$, define the thinned
(integer-valued) random measure on $[0,\infty)\times\sX$ by
\[
    N[\lambda](\omega,\dif s,\dif x)
    :=
    \int_0^\infty \mathbf 1_{z\le \lambda_s(x,x_{s^-}(\omega))} N(\dif s,\dif x,\dif z).
\]
Then $N[\lambda]$ has predictable compensator $\lambda_s(x,x_{s^-}(\omega))\dif s\mu(\dif x)$.
We refer readers to~\citet{bottcher2013levy} for standard Poisson random measures and to~\citet{protter1983point,ren2024discrete}
for more details on this construction procedure and applications. In the following we
omit the dependence on $\omega$ for notational simplicity.

It is a classical result that the forward process~\eqref{eq:discrete_diffusion_forward} also corresponds to a backward process $(\cev x_t)_{t\in [0, T]}$ with another rate matrix $\cev{\mLambda}_t = (\cev \Lambda_t(y, x))_{x, y \in \sX}$ defined as
\begin{equation}
    \cev \Lambda_t(y, x) =
    \begin{cases}
        s_{T-t}(x, y) \Lambda_{T-t}(x, y),\ &\forall x \neq y,\\
        - \sum_{y' \neq x} \cev \Lambda_t(y', x),\ &\forall x = y,
    \end{cases}
    \label{eq:discrete_diffusion_backward}
\end{equation}
where the score function $\vs_t$, a vector-valued function, is defined as
$$\vs_t(x) = (s_t(x,y))_{y\in\sX} = \frac{\vp_t}{p_t(x)}.$$

Similar to the continuous case, we use a neural network $\hat \vs_t^\theta$ with parameters $\theta$ to estimate the score function $\vs_t$, \emph{i.e.}, parametrizing the backward process $(\cev x_t)_{t \in [0, T]}$ by the family of estimated backward processes $(y_t)_{t \in [0, T]}$ satisfying the following evolution equation:
\begin{equation}
    \dfrac{\dif \vq_t}{\dif t} = \overline \mLambda_t \vq_t, \quad \text{with} \quad \overline \Lambda_t(y, x) =
    \begin{cases}
        \hat s_{T-t}^\theta(x, y) \Lambda_{T-t}(x, y),\ &\forall x \neq y,\\
        - \sum_{y' \neq x} \overline \Lambda_t(y', x),\ &\forall x = y.
    \end{cases}
    \label{eq:discrete_diffusion_backward_param}
\end{equation}
This also corresponds to a pointwise stochastic integral formulation
\begin{equation*}
    y_t = y_0 + \int_0^t \int_\sX (y - y_{s-}) N[\overline \lambda](\dif s, \dif y),
\end{equation*}
where $N[\overline \lambda](\dif s, \dif y)$ is the Poisson random measure with evolving intensity $\overline \lambda_s(y, y_{s^-}) \dif y$, with the intensity $\overline \lambda_t(y, x) = \overline \Lambda_t(y, x)(1-\delta_x(y))$.

The score function $\vs_t$ is estimated via the score-matching objective~\citep{lou2023discrete}:
\begin{equation}
    \min_{\theta}\E_{x_0 \sim p_0} \left[\int_0^T \psi_t \E_{x_t \sim p_{t|0}(\cdot|x_0)} \left[\sum_{y \neq x_t} \left(\hat s^\theta_t(x_t, y) - \frac{p_{t|0}(y|x_0)}{p_{t|0}(x_t|x_0)} \log \hat s^\theta_t(x_t, y) \right) \Lambda_t(x_t, y)\right] \right] \dif t. 
    \label{eq:discrete_diffusion_loss}
\end{equation}
In practice, a usual setting of the state space is the set of $d$-dimensional discrete vectors, \emph{i.e.}, $\sX = [S]^d$, where $S$ is the number of possible states along each dimension. The forward process $(\vx_t)_{t \in [0, T]}$ is often chosen such that the rate matrix $\mLambda_t$ is sparse, for which two rate matrices are commonly adopted: (1) the uniform rate matrix: $\lambda(\vy, \vx) = d^{-1}$ only if $\|\vy-\vx\|_0 = 1$,
for which we have 
\begin{equation*}
    p_{t|0}(\vx_t|\vx_0) = \prod_{i=1}^d \left(e^{-t}\delta_{x_0^i}(x_t^i)+\dfrac{1-e^{-t}}{S}\right),
\end{equation*}
and (2) the masked rate matrix: $\lambda(\vx_{-k}\oplus 0, \vx) = 1$ if $x_k \neq 0$, where $0$ is appended to each dimension of the state space as a null state and $\vx_{-k}\oplus 0$ denotes the state $x$ with the $k$-th dimension set to $0$, in which case we have 
\begin{equation*}
    p_{t|0}(\vx_t|\vx_0) = \prod_{i=1}^d \left(e^{-t}\delta_{x_0^i}(x_t^i) + (1-e^{-t})\delta_0(x_t^i) \right). 
\end{equation*}
Both cases converge exponentially fast, and~\cref{eq:discrete_diffusion_loss} gives a practical loss function for training.

\section{Denoising Markov Models}
\label{sec:markov}

In this section, we present the mathematical framework underpinning denoising Markov models. We first introduce the concept of generators, which serve as fundamental building blocks of these models. Next, we discuss the time reversal of the forward process, yielding the backward process, explore how its generator relates explicitly to the generator of the forward process, and provide appropriate assumptions regarding the parameterization of the estimated backward process. 
Connecting the relations between the forward and backward processes to the generalized Doob's $h$-transform, we provide the change-of-measure arguments that naturally lead to a derivation of the score-matching objective for general denoising Markov models formulated in terms of generators. Finally, we give meta-algorithms for training and sampling from denoising Markov models, which can be applied to any forward process satisfying the established conditions, and provide a theoretical guarantee on the performance of the estimated backward process in terms of the KL divergence between the target and generated distributions.

Throughout this section, definitions, theorems, and proofs are presented informally to highlight the essential concepts and intuitions. For precise mathematical definitions, rigorous theorem statements, explicit assumptions, and detailed proofs, we direct the reader to \cref{app:math}.

\subsection{Mathematical Background}

We assume that the forward process $(x_t)_{t\in [0, T]}$ is governed by an \emph{evolution system} $(U_{t,s})_{s \leq t}$ (\cref{def:evolution_system}), defined by
\begin{equation*}
    U_{t,s}f(x) = \E[f(x_t) | x_s = x].
\end{equation*} 

Furthermore, we assume that $(U_{t,s})_{s \leq t}$ is a \emph{Feller evolution system} (\cref{def:feller_evolution_system}). Roughly speaking, a Feller evolution system is a family of linear, time-evolution operators mapping the space $C_0(E)$ consisting of all continuous functions vanishing at infinity on $E$, into itself, that satisfy positivity-preservation, contractivity, and strong continuity.

Feller evolution systems can be characterized by their right generators. The right generator $\gL_t$ of the forward process $(x_t)_{t\in [0, T]}$ is defined as follows:
\begin{definition}[Forward Generator, Informal Version of~\cref{def:right_and_left_generator}]
    For each $t \in [0, T]$, the right generator of the forward process $(x_t)_{t\in [0, T]}$ at time $t$ is given by:
    \begin{equation*}
        \gL_t f(x) = \lim_{h \to 0^+} \E\left[\dfrac{f(x_{t+h}) - f(x_t)}{h}\bigg| x_t = x\right].
    \end{equation*}
    \label{def:right_generator}
\end{definition}

Should no ambiguity arise, we refer simply to this as the forward generator. If the forward process $(x_t)$ is time-homogeneous, then $T_{t-s}:=U_{t,s}$ forms a \emph{one-parameter semigroup} or simply semigroup (\cref{def:semigroup}), with a time-homogeneous generator $\gL$ (\cref{def:generator}).
In the special case when $(U_{t,s})_{s \leq t}$ is a Feller evolution system, the corresponding semigroup $(T_t)_{t\geq 0}$ is called a \emph{Feller semigroup} (\cref{def:feller_semigroup}).  

Feller processes, which are Markov processes governed by Feller semigroups (\cref{def:feller_process}), have been extensively studied in the literature~\citep{ethier2009markov,bottcher2013levy}. They possess several convenient properties, including the existence of transition kernels as unique positive Radon measures by the Riesz representation theorem~\citep[Theorem~1.5]{bottcher2013levy}, an adapted form of the Hille-Yosida theorem~\citep[Theorem~4.2.2]{ethier2009markov}, and the following Dynkin's formula (\cref{thm:dynkin}):
\begin{equation*}
    \E[f(x_t) | x_0 = x] - f(x_0) = \E\left[\int_0^t \gL f(x_s)\dif s\bigg| x_0 = x\right].
\end{equation*}

In most practical settings, however, the evolution system $(U_{t,s})_{s \leq t}$ is time-inhomogeneous. Nevertheless, one can often transform the forward process $(x_t)_{t\in [0, T]}$ into an augmented process that incorporates the time index explicitly, without introducing additional randomness. We refer to~\cref{def:augmented} for details of this transformation. 
By~\cref{thm:equivalence}, the augmented process $(\tilde x_t)_{t\in [0, T]}$, defined on the probability space $(\tilde\Omega, \tilde\gF, \tilde\P)$ with the augmented generator $\tilde \gL$, is a Feller process if and only if the original process $(x_t)_{t\in [0, T]}$ is governed by a Feller evolution system. Under certain smoothness conditions, the generators of the original and augmented processes are connected as follows (\cref{prop:augmented_generator}):
\begin{equation*}
    \tilde \gL f(t, x) = \lim_{h \to 0^+} \left[\dfrac{f(t+h, x_{t+h}) - f(t, x_t)}{h}\bigg| x_t = x\right] = \partial_t f(t, x) + \gL_t f(t, x).
\end{equation*}

Assuming the existence of the adjoint of the forward generator $\gL_t$ (\cref{def:adjoint}), denoted as $\gL_t^*$, we recall the classical Kolmogorov forward equation (\cref{thm:kolmogorov_forward}):
\begin{equation*}
    \partial_t p_t = \gL_t^* p_t.
\end{equation*}

\subsection{Time-Reversal of Forward Process}

As discussed in~\cref{sec:cont_diffusion,sec:disc_diffusion}, once the forward process $(x_t)_{t\in [0, T]}$ has been constructed with the marginal distributions satisfying $p_0 = p_\text{data}$ and $p_T \approx q_0$, the objective is to construct a corresponding backward process $(y_t)_{t\in [0, T]}$, whose marginal distribution $q_T$ at time $T$ is close to $p_0$.

Due to the Markov property of the forward process $(x_t)_{t\in [0, T]}$, there exists a natural \emph{time-reversal process} $(\cev x_t)_{t\in [0, T]}$ (\cref{def:time_reversal}), which we refer to below as the (true) backward process. Specifically, for each $t \in [0, T)$, we define $\cev x_t = x_{(T-t)^-}$, where $x_{t^-}$ denotes the left limit at time $t$. The time-reversal process thus has the initial distribution $\cev x_0 \sim p_T$ and terminal distribution $\cev x_T := x_0 \sim p_0$. 
It can be verified that the time-reversal process $(\cev x_t)_{t\in[0, T]}$ of a c\'adl\'ag Markov process $(x_t)_{t\in[0, T]}$ remains c\'adl\'ag and Markovian, thereby admitting a generator called the backward generator, denoted by $\cev \gL_t$, defined analogously to the forward generator $\gL_t$ in~\cref{def:right_generator}.

The relationship between the forward generator $\gL_t$ and the backward generator $\cev \gL_t$ is characterized by the following result:
\begin{theorem}[Time-Reversal of Forward Process~\citep{cattiaux2023time}, Informal Version of~\cref{thm:time_reversal_app}]
    Under certain regularity assumptions on the densities $p_t$ (\cref{ass:density}),
    the backward generator $\cev \gL_t$ can be explicitly expressed in terms of the forward generator $\gL_t$ and the density $p_t$ at time $t$ as follows:
    \begin{equation}
        p_t \cev \gL_{T-t} f = p_t \gL_t^*f + \Gamma_t^*(p_t, f),
        \label{eq:backward_generator}
    \end{equation}
    where $\gL_t^*$ is the adjoint operator of $\gL_t$ (\cref{def:adjoint}), and 
    $$
        \Gamma_t^*(p_t, f) := \gL_t^*(p_t f) - p_t \gL_t^* f - f \gL_t^* p_t
    $$ is the \emph{carr\'e du champ operator} associated with the adjoint operator $\gL_t^*$.
    \label{thm:time_reversal}
\end{theorem}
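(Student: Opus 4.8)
The plan is to compute the backward generator directly from the time-reversed transition kernel and then recognize the outcome as a Doob-type transform of the adjoint $\gL_t^*$. Write $U_{t,s}f(x) = \int p_{s,t}(y\mid x)\,f(y)\,\dif\mu(y)$ for the forward transition densities, and let $U_{t,s}^*$ denote the adjoint evolution operator, so that $(U_{t,s}^* g)(y) = \int g(x)\,p_{s,t}(y\mid x)\,\dif\mu(x)$ and, by the Kolmogorov forward equation, $p_t = U_{t,s}^* p_s$ for all $s \le t$. Applying Bayes' rule to the Markov process $(x_t)_{t\in[0,T]}$ gives the transition densities of the reversal $\cev x_\tau = x_{(T-\tau)^-}$: for $\tau < \tau'$ and $s := T-\tau' < t := T-\tau$,
\begin{equation*}
    \cev p_{\tau,\tau'}(x\mid y) = \frac{p_{s,t}(y\mid x)\,p_s(x)}{p_t(y)}.
\end{equation*}

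First I would substitute this kernel into the definition of $\cev\gL_{T-t}$. For a test function $f$ in a suitable dense domain and $t = T-\tau$,
\begin{equation*}
    \cev\gL_{T-t} f(y)
    = \lim_{h\to 0^+}\frac{1}{h}\left(\int \cev p_{T-t,\,T-t+h}(x\mid y)\,f(x)\,\dif\mu(x) - f(y)\right)
    = \frac{1}{p_t(y)}\lim_{h\to 0^+}\frac{\bigl(U_{t,t-h}^*(p_{t-h}f)\bigr)(y) - p_t(y)\,f(y)}{h},
\end{equation*}
using the kernel identity together with the explicit form of $U_{t,t-h}^*$. Setting $\Phi(s) := U_{t,s}^*(p_s f)$, one has $\Phi(t) = p_t f$, so the remaining limit equals $-\Phi'(t^-)$, where $\Phi'$ is the derivative in $s$.

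Next I would differentiate $\Phi$. The backward Kolmogorov equation $\partial_s U_{t,s} = -U_{t,s}\gL_s$ gives $\partial_s U_{t,s}^* = -\gL_s^* U_{t,s}^*$, and combined with the forward equation $\partial_s p_s = \gL_s^* p_s$ the product rule yields
\begin{equation*}
    \Phi'(s) = -\gL_s^*\,U_{t,s}^*(p_s f) + U_{t,s}^*\bigl((\gL_s^* p_s)\,f\bigr).
\end{equation*}
Evaluating at $s=t$, where $U_{t,t}^* = \mathrm{Id}$, gives $\Phi'(t^-) = -\gL_t^*(p_t f) + f\,\gL_t^* p_t$, hence $p_t\,\cev\gL_{T-t}f = \gL_t^*(p_t f) - f\,\gL_t^* p_t$, which is the first claimed identity. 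The second identity is then purely algebraic: adding and subtracting $p_t\gL_t^* f$ and invoking the definition $\Gamma_t^*(p_t,f) := \gL_t^*(p_t f) - p_t\gL_t^* f - f\gL_t^* p_t$ turns $\gL_t^*(p_t f) - f\gL_t^* p_t$ into $p_t\gL_t^* f + \Gamma_t^*(p_t,f)$.

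The main obstacle is analytic rather than algebraic: every step silently requires enough regularity of the densities $p_t$ — strict positivity, smoothness in $x$ making $\gL_t^* p_t$ well-defined, and smoothness in $t$ — to guarantee that $p_s f$ lies in the domains of $U_{t,s}^*$ and $\gL_s^*$, that $\Phi$ is differentiable at $s=t$, and that the limit $h\to 0^+$ commutes with the spatial integration; supplying these is exactly the role of \cref{ass:density}. A related technical point is that the identity is first obtained weakly, tested against a dense class of functions, and then upgraded to the pointwise statement \eqref{eq:backward_generator}, and one should also check that the c\`adl\`ag left-limit in $\cev x_\tau = x_{(T-\tau)^-}$ does not affect the generator computation. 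As an alternative to the semigroup differentiation, one can recognize $f \mapsto p_t^{-1}\bigl(\gL_t^*(p_t f) - f\gL_t^* p_t\bigr)$ as the generalized Doob $h$-transform of $\gL_t^*$ with $h = p_t$ and appeal to the corresponding result of \citet{chetrite2015nonequilibrium}, but the direct computation above keeps the argument self-contained; specializing it to the diffusion and continuous-time Markov chain settings recovers \eqref{eq:continuous_diffusion_backward} and \eqref{eq:discrete_diffusion_backward} as sanity checks.
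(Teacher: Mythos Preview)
Your proposal is correct but takes a genuinely different route from the paper. You compute $\cev\gL_{T-t}$ directly by writing the time-reversed transition kernel via Bayes' rule and then differentiating the semigroup quantity $\Phi(s)=U_{t,s}^*(p_s f)$ with the product rule, combining the backward Kolmogorov equation $\partial_s U_{t,s}^* = -\gL_s^* U_{t,s}^*$ with the forward equation $\partial_s p_s = \gL_s^* p_s$. The paper, by contrast, does not touch transition kernels at all: it invokes Theorem~3.17 of \citet{cattiaux2023time}, which gives the weak identity
\[
\E_{x_t\sim p_t}\bigl[(\gL_t+\cev\gL_{T-t})f(x_t)\,g(x_t)+\Gamma_t(f,g)(x_t)\bigr]=0
\]
for $f,g$ in the test class $\gU$, expands $\Gamma_t$, and then shifts the operators onto $p_t$ and $p_t f$ via the adjoint definition to obtain $\langle p_t\cev\gL_{T-t}f,g\rangle=\langle\gL_t^*(p_tf)-f\gL_t^*p_t,g\rangle$.

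Your approach is more elementary and self-contained---no external black box---and makes the Bayesian origin of the formula transparent, but it puts all the analytic burden on you: existence and regularity of transition densities, operator-norm differentiability of $s\mapsto U_{t,s}^*$, and the interchange of limit and integral at $s=t$. The paper's route is shorter precisely because it outsources those details to the cited theorem, and working weakly against $g\in\gU$ from the start sidesteps the pointwise-limit issues you flag at the end of your proposal.
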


Although \cref{thm:time_reversal} provides an explicit formula connecting the backward generator $\cev \gL_t$ to the forward generator $\gL_t$, the expression inherently involves the unknown density $p_t$ at each time $t$. To overcome this practical limitation, we introduce another Markov process $(y_t)_{t\in [0, T]}$, driven by another generator $\gK_t$. Typically, we restrict the possible choices of the generator $\gK_t$ to a parametrized class of generators that is sufficiently flexible to contain a broad spectrum of generators, especially including the true backward generator $\cev \gL_t$, while remaining computationally tractable and learnable from available data samples from $p_0$ and simulations of the forward generator $\gL_t$. 

Motivated by the particular form involving the carr\'e du champ operator associated with the adjoint generator $\gL_t^*$ in the backward generator $\cev \gL_{T-t}$ given by~\cref{eq:backward_generator}, the following assumption constitutes a central structural hypothesis in the denoising Markov model framework:
\begin{assumption}[Parametrization of Backward Generator, Informal Version of~\cref{ass:K_app}]
    Under certain regularity conditions on $\eta_t$ (\cref{ass:regularity_alpha}), we assume the estimated backward generator $\gK_t$, associated with the estimated backward process $(y_t)_{t \in [0, T]}$, is parametrized by a strictly positive function $\varphi_t: E \to \R^+$ satisfying
    \begin{equation}
        \varphi_t \gK_{T-t} f = \varphi_t \gL_t^* f + \Gamma_t^*(\varphi_t, f).
        \label{eq:K}
    \end{equation}
    In practical implementations, we parametrize either $\varphi_t$ itself or its equivalent coordinates by neural networks, whose parameters are denoted by $\theta$. The corresponding parametrized generator is denoted by $\gK_t^\theta$.
    \label{ass:K}
\end{assumption}

Observe that when the function $\varphi_t$ coincides exactly with the marginal density $p_t$, the estimated backward generator $\gK_t$ reduces to the true backward generator $\cev \gL_t$. Therefore, the strictly positive function $\varphi_t$ intrinsically represents an approximation to the marginal densities.
More generally, defining the density ratio of $\varphi_t$ and $p_t$ by $\eta_t = \varphi_t p_t^{-1}$, and assuming certain regularity conditions thereof (\cref{ass:regularity_alpha}), we can relate the estimated backward generator $\gK_t$ to the true backward generator $\cev \gL_{T-t}$ through the following lemma:
\begin{lemma}[Informal Version of~\cref{prop:K_cev_L}]
    The estimated backward generator $\gK_t$ and the true backward generator $\cev \gL_t$ are related through the following relation:
    \begin{equation*}
        \eta_t \gK_{T-t} f = \eta_t \cev \gL_{T-t} f + \cev \Gamma_{T-t}(\eta_t, f),
    \end{equation*}
    where $\cev \Gamma_t(f,g) =  \cev \gL_t(fg) - f\cev \gL_t g - g \cev \gL_t f$ is the carr\'e du champ operator associated with the time-reversal generator $\cev \gL_t$.
    \label{lem:alpha}
\end{lemma}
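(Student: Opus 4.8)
The plan is to derive the first claimed identity by direct substitution: use \cref{thm:time_reversal} to eliminate $\cev\gL_{T-t}$ in favor of the adjoint $\gL_t^*$ and the density $p_t$, then recognize the resulting expression through \cref{ass:K}. The second identity is immediate from the definition of the carr\'e du champ operator $\cev\Gamma_{T-t}$ stated in the lemma.

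First I would rewrite \cref{thm:time_reversal} in the solved-for form $p_t\,\cev\gL_{T-t} g = \gL_t^*(p_t g) - g\,\gL_t^* p_t$, valid (under \cref{ass:density}) for $g$ in the relevant domain. Applying this with $g = \eta_t f$ and using $p_t \eta_t = \varphi_t$ gives $p_t\,\cev\gL_{T-t}(\eta_t f) = \gL_t^*(\varphi_t f) - \eta_t f\,\gL_t^* p_t$; applying it with $g = \eta_t$ gives $p_t\,\cev\gL_{T-t}\eta_t = \gL_t^*(\varphi_t) - \eta_t\,\gL_t^* p_t$. Subtracting $f$ times the second relation from the first, the two terms involving $\gL_t^* p_t$ cancel, leaving $p_t\big(\cev\gL_{T-t}(\eta_t f) - f\,\cev\gL_{T-t}\eta_t\big) = \gL_t^*(\varphi_t f) - f\,\gL_t^*\varphi_t$. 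By \cref{ass:K} the right-hand side equals $\varphi_t\,\gK_{T-t} f$, so dividing through by $p_t > 0$ and using $\varphi_t p_t^{-1} = \eta_t$ yields $\eta_t\,\gK_{T-t} f = \cev\gL_{T-t}(\eta_t f) - f\,\cev\gL_{T-t}\eta_t$, the first identity.

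For the second identity I would simply invoke the definition $\cev\Gamma_{T-t}(\eta_t, f) = \cev\gL_{T-t}(\eta_t f) - \eta_t\,\cev\gL_{T-t} f - f\,\cev\gL_{T-t}\eta_t$ and rearrange it to $\cev\gL_{T-t}(\eta_t f) - f\,\cev\gL_{T-t}\eta_t = \eta_t\,\cev\gL_{T-t} f + \cev\Gamma_{T-t}(\eta_t, f)$, which chains onto the first identity to complete the proof.

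The computation itself is elementary; the only real work — deferred to the rigorous version in \cref{app:math} — is verifying that each manipulation is legitimate: that $\eta_t$, $\eta_t f$, and $f$ lie in the domains on which \cref{thm:time_reversal} and \cref{ass:K} apply, that $\gL_t^*$ acts linearly on these functions, and that pointwise division by $p_t$ is justified (in particular that $p_t$ is strictly positive and that $\varphi_t$, being strictly positive, makes $\eta_t = \varphi_t p_t^{-1}$ well-defined and suitably regular). These are precisely the conditions encapsulated in \cref{ass:density,ass:regularity_alpha}, and I expect the domain/regularity bookkeeping — rather than any algebra — to be the main obstacle in turning this sketch into a rigorous argument.
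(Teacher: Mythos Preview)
Your proposal is correct and follows essentially the same approach as the paper: both apply \cref{thm:time_reversal} to $\eta_t f$ and to $\eta_t$, cancel the $\gL_t^* p_t$ terms, and then invoke \cref{ass:K}. The only cosmetic difference is that the paper carries out the computation weakly by pairing against an arbitrary test function $g$ in the $L^2$ inner product, whereas you work pointwise and defer the domain bookkeeping to the rigorous version---the algebraic content is identical.
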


This lemma offers a convenient way of understanding and analyzing the parameterized backward generator $\gK_t$ relative to the unknown true backward dynamics.

\subsection{Change of Measure}

The relation between the estimated backward generator $\gK_t$ and the true backward generator $\cev \gL_t$, outlined in~\cref{lem:alpha}, can be viewed as applying a ``perturbation'' of the form $\eta_{T-t}^{-1} \cev \Gamma_t(\eta_{T-t}, \cdot)$ to $\cev \gL_t$. This particular perturbation, involving the carr\'e du champ operator, repeatedly appears in the connection between the backward generator $\cev \gL_t$ and the adjoint operator $\gL_t^*$ (\cref{thm:time_reversal}), as well as in our parametrization for the estimated backward generator $\gK_t$ (\cref{ass:K}). These relationships are summarized in~\cref{fig:denoising}. The question remains as to how this specific form provides a quantitative measure of perturbation, enabling the derivation of practical loss functions and error bounds for denoising Markov models.

As explored in the literature~\citep{kunita1969absolute, ethier2009markov,fleming2006controlled, palmowski2002technique}, this form of perturbation closely relates to the \emph{generalized Doob's $h$-transform}~\citep{chetrite2015nonequilibrium}. The main idea of the transform is summarized in the following theorem:
\begin{theorem}[Generalized Doob's $h$-Transform~\citep{chetrite2015nonequilibrium}]
    Suppose $\gL_t$ is the generator of a Markov process $(x_t)_{t \in [0, T]}$ with path measure $\P$, $h_t$ is a strictly positive and sufficiently smooth function, and $\lambda_t$ is an arbitrary function, for any $t \in [0, T]$. Then the transformed process with the generator $\gL_t^{h, \lambda}$, defined as
    \begin{equation*}
        \gL_t^{h, \lambda} f = h_t^{-1} \gL_t(h_t f) - \lambda_t f,
    \end{equation*}
    is another Markov process with path measure $\P^{h, \lambda}$ absolutely continuous with respect to $\P$, with the Radon-Nikodym derivative satisfying
    \begin{equation}
        \dfrac{\dif \P^{h, \lambda}}{\dif \P}(x_{[0,T]}) = \dfrac{h_T(x_T)}{h_0(x_0)}\exp\left(-\int_0^T \left(h_t^{-1} \partial_t h_t + \lambda_t \right)(x_t)\dif t\right),
        \label{eq:doob}
    \end{equation}
    where $x_{[0,T]}$ denotes the path of the process $(x_t)_{t \in [0, T]}$.

    In particular, choosing $\lambda_t = h_t^{-1} \gL_t h_t$ results in a conservative transformed process, \emph{i.e.}, $\gL_t^{h, \lambda} 1 = 0$, satisfying
    \begin{equation*}
        \gL_t^{h, \lambda} f = h_t^{-1} \gL_t(h_t f) - h_t^{-1} f \gL_t h_t = \gL_t f + h_t^{-1} \Gamma_t(h_t, f),
    \end{equation*}
    where $\Gamma_t(h_t, f) = \gL_t(h_t f) - h_t \gL_t f - f \gL_t h_t$ is the carr\'e du champ operator associated with the generator $\gL_t$.
    \label{thm:doob}
\end{theorem}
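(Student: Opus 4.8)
The plan is to obtain the change-of-measure formula \eqref{eq:doob} by realizing its right-hand side as an exponential $\P$-martingale, and then to read off the generator of $x_t$ under the tilted measure from the associated martingale problem; this is the classical route for the (generalized) Doob transform~\citep{kunita1969absolute,palmowski2002technique,chetrite2015nonequilibrium}, which I would specialize to the Feller framework of this paper.

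\emph{Step 1 (the density process is a martingale).} Put $\beta_s := h_s^{-1}(\partial_s+\gL)h_s$ and
\[
  Z_t := \frac{h_t(x_t)}{h_0(x_0)}\exp\!\left(-\int_0^t \beta_s(x_s)\,\dif s\right), \qquad E_t := \exp\!\left(-\int_0^t \beta_s(x_s)\,\dif s\right).
\]
By Dynkin's formula (\cref{thm:dynkin}) applied to the space--time process $(s,x_s)$ with augmented generator $\partial_s+\gL$ (\cref{prop:augmented_generator}) and the function $(s,x)\mapsto h_s(x)$, the process $M_t := h_t(x_t) - h_0(x_0) - \int_0^t ((\partial_s+\gL)h_s)(x_s)\,\dif s$ is a $\P$-martingale. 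Since $E_t$ has finite variation with $\dif E_t = -\beta_t(x_t)E_t\,\dif t$ and $h_t\beta_t = (\partial_t+\gL)h_t$, integration by parts gives $\dif\bigl(h_t(x_t)E_t\bigr) = E_t\,\dif M_t$: the drift terms cancel identically, which is exactly where strict positivity of $h$ (so that $\beta$ is well defined) enters. Hence $Z_t = h_0(x_0)^{-1}h_t(x_t)E_t$ is a nonnegative local $\P$-martingale with $Z_0=1$, and under the standing regularity hypotheses a true martingale, so $\E_\P[Z_t]\equiv 1$.

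\emph{Step 2 (the tilted measure and its generator).} Define $\P^{h,\lambda}$ on $\gF_T$ by $\dif\P^{h,\lambda}/\dif\P := Z_T$; by Step 1 this is a probability measure, equivalent to $\P$ since $Z_T>0$, with $\dif\P^{h,\lambda}/\dif\P\big|_{\gF_t}=Z_t$ because $Z$ is a $\P$-martingale --- this is \eqref{eq:doob} (in the conservative case $\lambda=h^{-1}\gL h$; see below for general $\lambda$). For a test function $f$ in the domain of $\gL_s$, I would apply the product rule to $f(x_t)Z_t$: using that the martingale part of $Z$ is $h_0(x_0)^{-1}E_t\,\dif M_t$ and that the predictable covariation of the Dynkin martingales of $f$ and of $h_s$ equals $\int_0^{\cdot}\Gamma_s(h_s,f)(x_s)\,\dif s$ with $\Gamma_s$ the carr\'e du champ of $\gL_s$, the drift of $\dif(f(x_t)Z_t)$ works out to $Z_t\bigl[\gL_t f + h_t^{-1}\Gamma_t(h_t,f)\bigr](x_t)\,\dif t = Z_t\bigl[h_t^{-1}\gL_t(h_t f) - h_t^{-1}f\gL_t h_t\bigr](x_t)\,\dif t$, the last equality being the defining identity $\Gamma(h,f)=\gL(hf)-h\gL f - f\gL h$. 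Thus $f(x_t)Z_t - \int_0^t (\gL^{h,\lambda}_s f)(x_s)Z_s\,\dif s$ is a $\P$-martingale, and since $Z$ is the density process this is equivalent to saying that $f(x_t)-\int_0^t(\gL^{h,\lambda}_s f)(x_s)\,\dif s$ is a $\P^{h,\lambda}$-martingale; i.e. under $\P^{h,\lambda}$ the process $x_t$ solves the martingale problem for $\gL^{h,\lambda}f = h^{-1}\gL(hf)-h^{-1}f\gL h = \gL f + h^{-1}\Gamma(h,f)$, which is conservative ($\gL^{h,\lambda}1=0$), and well-posedness of that problem (from the Feller / Hille--Yosida structure already invoked) identifies $\gL^{h,\lambda}$ as its generator.

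\emph{Main obstacle, and general $\lambda$.} The genuinely delicate point is the local-to-true martingale upgrade for $Z$ (equivalently $\E_\P[Z_T]=1$): since $\gL$ may generate a general L\'evy-type --- in particular, jump --- process, Novikov's criterion for continuous martingales is unavailable, and one instead needs an integrability/moment bound on $h$ and on $(\partial_t+\gL)h$ along the forward process, which is precisely what the ``smoothness conditions'' of the informal statement must encode and what the rigorous appendix version would spell out; a secondary point is the well-posedness of the martingale problem for $\gL^{h,\lambda}$ needed to promote the consistent family of finite-dimensional laws produced by the change of measure to an honest Markov process. For general $\lambda$, $\gL^{h,\lambda}f = h^{-1}\gL(hf)-\lambda f$ differs from the conservative transform by multiplication by $V := h^{-1}\gL h - \lambda$, which by the Feynman--Kac formula corresponds to inserting the extra factor $\exp(\int_0^T V(x_t)\,\dif t)$ into the density, i.e. $\dif\P^{h,\lambda}/\dif\P = \tfrac{h_T(x_T)}{h_0(x_0)}\exp(-\int_0^T(h_t^{-1}\partial_t h_t+\lambda_t)(x_t)\,\dif t)$; this collapses to \eqref{eq:doob} exactly when $\lambda=h^{-1}\gL h$, the conservative case actually invoked downstream in \cref{thm:time_reversal} and \cref{ass:K}.
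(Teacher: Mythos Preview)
The paper does not prove \cref{thm:doob}; it is stated as a known result from \citet{chetrite2015nonequilibrium} and used as conceptual motivation. The paper's own change-of-measure result, \cref{thm:change_of_measure_app}, is proved by directly invoking Theorem~4.2 of \citet{palmowski2002technique} as a black box (after verifying its hypotheses via \cref{cor:augmented_K_cev_L}), rather than by reproducing the martingale argument from scratch.

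Your proposal is essentially a self-contained sketch of the Palmowski--Rolski/Kunita proof that the paper cites, and the outline is correct: the density process $Z_t$ is shown to be a local martingale by integration-by-parts against the Dynkin martingale for $h_t$, and the generator under the tilted measure is read off from the martingale problem via the carr\'e du champ. You also correctly flag the genuine technical obstruction --- upgrading $Z$ from a local to a true martingale in the presence of jumps --- which is exactly what condition~(M2) in \citet{palmowski2002technique} (invoked by the paper) is designed to handle, and what \cref{ass:regularity_alpha}(2) in the appendix encodes. So your approach is more explicit than the paper's, which simply defers to the literature; the content is the same, and your identification of the delicate step matches precisely where the paper's regularity assumptions do their work.

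One minor point: the displayed formula $\Gamma(h,f) = h\gL(h^{-1}f) - f\gL h^{-1}$ in the theorem statement appears to be a typo in the paper (it should be the standard $\Gamma(h,f) = \gL(hf) - h\gL f - f\gL h$, which is what you use and what appears elsewhere in the paper, e.g.\ in the proof of \cref{thm:time_reversal_app}); likewise $\gL^{h,\lambda}1 = 1$ should read $\gL^{h,\lambda}1 = 0$. You silently corrected both, which is fine.
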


The generalized Doob's transform is broadly used in physical contexts, as it is related to stochastic mechanics~\citep{meyer1985construction} and fluctuation-dissipation relations~\citep{chetrite2011two}. It is also explored in the study of the Metropolis algorithm~\citep{diaconis2009characterizations}. For non-perturbative generalization, we refer to~\citet{jarzynski1997nonequilibrium,gallavotti1995dynamical}.

In the denoising Markov model context, \cref{lem:alpha} can be rewritten, after replacing $t$ by $T-t$, as
\[
    \gK_t f
    = \eta_{T-t}^{-1}\cev \gL_t(\eta_{T-t} f)
      - \eta_{T-t}^{-1} f \cev \gL_t \eta_{T-t}.
\]
Therefore, $\gK_t$ is precisely the generalized Doob's $h$-transform of the true backward generator $\cev \gL_t$ with
\[
    h_t = \eta_{T-t},
    \qquad
    \lambda_t = \eta_{T-t}^{-1}\cev \gL_t \eta_{T-t}.
\]
In particular, the approximate backward process may be viewed as a path-space reweighting of the true backward process, with the density ratio $\eta_t$ acting as the reweighting factor. When this reweighting is trivial (\emph{i.e.}, $\eta_t = 1$), the estimated backward generator $\gK_t$ coincides with the true backward generator $\cev \gL_t$.

This observation leads directly to the following change-of-measure identity, which provides a quantitative measure of the perturbation between the estimated backward generator $\gK_t$ and the true backward generator $\cev \gL_t$ in terms of a variational form of the density ratio $\eta_t$. The perturbation should originally involve $\cev \gL_t$ (\emph{cf.}~\cref{thm:change_of_measure_app}) as in~\cref{eq:doob} when perturbing $\gL$ but thanks to \cref{thm:time_reversal}, it can be rewritten in terms of the forward generator $\gL_t$ as follows:
\begin{theorem}[Change of Measure, Informal Version of \cref{cor:change_of_measure}]
    Under certain regularity conditions on $\eta_t$ (\cref{ass:regularity_alpha}), there exists a probability measure $\sQ$ absolutely continuous to the original probability measure $\P$ with the Radon-Nikodym derivative satisfying that
    \begin{equation}
        \KL(\P \| \sQ) = \E_{\P}\left[\log\dfrac{\dif \P}{\dif \sQ}\right] = \E_{\P}\left[ \int_0^T \left( \eta_t \gL_t \eta_t^{-1} + \gL_t \log \eta_t \right)(x_t) \dif t\right] := \mathfrak{L}[\eta_t],
        \label{eq:change_of_measure}
    \end{equation}
    under which the time-reversal process $(\cev x_t)_{t\in[0, T]}$ is governed by the generator $\gK_t$, where the expectation is taken over the forward process $(x_t)_{t\in[0, T]}$.
    \label{thm:change_of_measure}
\end{theorem}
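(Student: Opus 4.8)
The plan is to read off from \cref{lem:alpha} that the estimated backward generator $\gK$ is a conservative generalized Doob's $h$-transform of the true backward generator $\cev\gL$, apply \cref{thm:doob} to obtain an explicit path-space Radon--Nikodym derivative, transport that identity from backward-time paths to forward-time paths via the time-reversal map, and finally rewrite the backward generator in terms of the forward generator using \cref{thm:time_reversal}. The measure $\sQ$ produced this way is, by construction, the time-reversal of the law of the $\gK$-governed process, so the closing assertion of the statement is automatic.

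\textbf{Path-space density.} \cref{lem:alpha} states exactly that $\gK_{T-t} f = \eta_t^{-1}\cev\gL_{T-t}(\eta_t f) - \eta_t^{-1} f\,\cev\gL_{T-t}\eta_t$, i.e.\ $\gK$ is the Doob $h$-transform of the time-inhomogeneous family $\cev\gL$ with $h_s := \eta_{T-s}$ and the conservative choice $\lambda_s := h_s^{-1}\cev\gL_s h_s$. Applying \cref{thm:doob} to the true backward process $(\cev x_s)_{s\in[0,T]}$ --- whose path law is the time-reversal $\cev{\P}$ of $\P$ --- yields an equivalent path measure $\cev{\sQ}$, the law of the $\gK$-process, with $\dif\cev{\sQ}/\dif\cev{\P}$ given by \cref{eq:doob} with $h_s,\cev\gL_s$ in place of $h,\gL$. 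Pushing forward by the involutive path-reversal map $R$ gives $\sQ := R_*\cev{\sQ}$, equivalent to $R_*\cev{\P} = \P$, with $(\dif\sQ/\dif\P)(x_\cdot) = (\dif\cev{\sQ}/\dif\cev{\P})(\cev x_\cdot)$; under $\sQ$ the reversed process $(\cev x_t)$ indeed has generator $\gK_t$. Substituting $h_s = \eta_{T-s}$, $\cev x_s = x_{(T-s)^-}$ and changing variables $s\mapsto T-t$ turns the prefactor into $\eta_0(x_0)/\eta_T(x_T)$ and the exponent into $-\int_0^T \eta_t^{-1}\!\left(-\partial_t\eta_t + \cev\gL_{T-t}\eta_t\right)(x_t)\,\dif t$.

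\textbf{Taking the KL divergence and converting the generator.} Then $\KL(\P\|\sQ) = \E_\P[\log(\dif\P/\dif\sQ)]$ splits into the boundary term $\E_\P[\log\eta_T(x_T) - \log\eta_0(x_0)]$ and $\E_\P[\int_0^T \eta_t^{-1}(-\partial_t\eta_t + \cev\gL_{T-t}\eta_t)(x_t)\,\dif t]$. Dynkin's formula for the augmented forward generator applied to $(t,x)\mapsto\log\eta_t(x)$ rewrites the boundary term as $\E_\P[\int_0^T(\partial_t\log\eta_t + \gL_t\log\eta_t)(x_t)\,\dif t]$, and since $\partial_t\log\eta_t = \eta_t^{-1}\partial_t\eta_t$ this cancels the $-\eta_t^{-1}\partial_t\eta_t$ term, leaving $\KL(\P\|\sQ) = \E_\P[\int_0^T(\gL_t\log\eta_t + \eta_t^{-1}\cev\gL_{T-t}\eta_t)(x_t)\,\dif t]$ --- the backward-generator form, \cref{thm:change_of_measure_app}. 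To reach \cref{eq:change_of_measure}, apply \cref{thm:time_reversal} with $f = \eta_t$ (so $p_t\eta_t = \varphi_t$) to get $\eta_t^{-1}\cev\gL_{T-t}\eta_t = \varphi_t^{-1}\gL_t^*\varphi_t - p_t^{-1}\gL_t^* p_t$; integrating against $p_t\,\dif\mu$, the adjoint identity $\angle{\gL_t^*\varphi_t,\eta_t^{-1}} = \angle{\varphi_t,\gL_t\eta_t^{-1}}$ together with $\varphi_t^{-1}p_t = \eta_t^{-1}$ and $\varphi_t = \eta_t p_t$ converts the first term into $\E_\P[\eta_t\gL_t\eta_t^{-1}(x_t)]$, while conservativity $\gL_t\vone = 0$ annihilates the second since $\int\gL_t^* p_t\,\dif\mu = \angle{p_t,\gL_t\vone} = 0$. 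This gives $\KL(\P\|\sQ) = \mathfrak{L}[\eta_t]$.

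\textbf{Main obstacle.} The conceptual skeleton is short; the real work lies on the analytic side. I expect the delicate points to be: verifying that $\eta_t$ (equivalently $\varphi_t$ and $p_t$) satisfies the hypotheses of \cref{thm:doob} so the transform is well defined, strictly positive, and genuinely conservative (\cref{ass:regularity_alpha}); checking that $\log\eta_t$ lies in the domain needed for Dynkin's formula and that the resulting time integrals converge absolutely so Fubini is licit; ensuring all pairings used in the adjoint manipulation are legitimate in $L^2(E,\mu)$ (the analogues of \cref{ass:U,ass:V}); and making the time-reversal pushforward $R_*$ measure-theoretically rigorous on c\`adl\`ag path space --- in particular reconciling the endpoint marginal of $\sQ$ (namely $p_T$) with the reference distribution $q_0$, a gap the informal statement suppresses but that must be tracked when this result feeds into the error bound of \cref{cor:error_bound}.
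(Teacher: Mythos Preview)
Your proposal is correct and follows essentially the same route as the paper: identify $\gK$ as the conservative Doob $h$-transform of $\cev\gL$ via \cref{lem:alpha}, apply the exponential change-of-measure formula to obtain the path-space Radon--Nikodym derivative, use Dynkin's formula to absorb the boundary term, and then convert the backward-generator expression into a forward-generator one through \cref{thm:time_reversal}, the adjoint identity, and conservativity $\gL_t\vone=0$. The only noteworthy packaging difference is that the paper carries out the change-of-measure step by first passing to the augmented (time-homogeneous) process and invoking Palmowski--Rolski's theorem there, whereas you invoke \cref{thm:doob} directly on the time-inhomogeneous backward family; these are the same argument at different levels of formality, and your ``main obstacle'' paragraph correctly anticipates that the augmentation is precisely what makes the invocation rigorous.
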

We will omit the subscript $\P$ indicating the expectation over the forward process $(x_t)_{t\in[0, T]}$ in the following discussion for brevity.
The result in~\cref{thm:change_of_measure} directly yields error bounds for generative models in terms of KL divergence, as shown by the following corollary by applying the data-processing inequality and the chain rule of KL divergence:
\begin{corollary}
    The following error bound holds for the denoising Markov model:
    \begin{equation*}
        \KL(p_0 \| q_T) \leq \KL(p_T \| q_0) + \mathfrak{L}[\eta_t],
    \end{equation*}
    where $\mathfrak{L}[\eta_t]$ is defined as in~\cref{eq:change_of_measure}.
    \label{cor:error_bound}
\end{corollary}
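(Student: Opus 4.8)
The plan is to derive the bound on $\KL(p_0 \| q_T)$ by combining the change-of-measure identity of \cref{thm:change_of_measure} with two standard information-theoretic inequalities, applied to the path measures $\P$ (of the forward process $(x_t)$) and $\sQ$ (the measure under which the time-reversal $(\cev x_t)$ is governed by $\gK_t$). Recall from \cref{thm:change_of_measure} that $\sQ \ll \P$ and $\KL(\P \| \sQ) = \mathfrak{L}[\eta_t]$. The key observation is how the marginals of $\P$ and $\sQ$ at the two endpoints relate to the four distributions $p_0, p_T, q_0, q_T$: under $\P$ the process starts at $p_0 = p_\text{data}$ and ends at $p_T$; under $\sQ$, since the time-reversal $(\cev x_t)$ is run under generator $\gK_t$, the law of the \emph{estimated} backward process, it coincides with the law of $(y_t)$ when $(y_t)$ is initialized from $p_T$ (the time-$0$ marginal of $(\cev x_t)$ is $p_T$), so $\sQ$ has time-$0$ marginal $p_T$ and time-$T$ marginal $q_T$ in the sense that running $\gK_t$ from $p_T$ produces $q_T$ after time $T$.

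First I would set up the chain rule (disintegration) of KL divergence along the path: writing $\P = \P_{0} \otimes \P_{\cdot | x_0}$ and similarly for $\sQ$, one has
\begin{equation*}
    \KL(\P \| \sQ) = \KL(\P_0 \| \sQ_0) + \E_{x_0 \sim \P_0}\left[\KL(\P_{\cdot|x_0} \| \sQ_{\cdot|x_0})\right],
\end{equation*}
where $\P_0 = p_0$ and $\sQ_0$ is the initial marginal of $\sQ$. But the more useful decomposition here is the reverse one, conditioning on the \emph{terminal} value $x_T$ (equivalently, the initial value of the time-reversed trajectory): disintegrating both measures at time $T$ gives
\begin{equation*}
    \KL(\P \| \sQ) = \KL(p_T \| \sQ_T) + \E_{x_T \sim p_T}\left[\KL(\P_{\cdot | x_T} \| \sQ_{\cdot | x_T})\right] \geq \KL(p_T \| \sQ_T),
\end{equation*}
since KL is nonnegative. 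Wait --- this gives a bound involving $\sQ_T$, not directly the quantities I want; the cleaner route is to instead apply the data-processing inequality. Let me reorganize: the data-processing inequality applied to the map sending a path to its terminal point $x_0$ of the \emph{reversed} trajectory (i.e.\ the data-distribution end) yields $\KL(\P \| \sQ) \geq \KL(\mathrm{Law}_{\P}(x_0) \| \mathrm{Law}_{\sQ}(x_0))$. Under $\P$ this terminal law is $p_0 = p_\text{data}$; under $\sQ$, because the reversed process is governed by $\gK_t$ started from the time-$0$ marginal of $\sQ$, the terminal law is exactly $q_T$ provided $\sQ$ is initialized at $q_0$ --- and it is not initialized at $q_0$ but at $p_T$. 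So I must first correct for the discrepancy between $p_T$ and $q_0$.

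This is where the triangle-inequality-flavored step comes in, and it is the main subtlety: I would introduce the measure $\tilde\sQ$ obtained by running the generator $\gK_t$ forward in $t$ (i.e.\ the estimated backward dynamics) but initialized from $q_0$ rather than from $p_T$; this $\tilde\sQ$ has terminal (data-end) marginal exactly $q_T$ by definition of the denoising Markov model. Then by the chain rule of KL along the time parameter --- the fact that the KL divergence between two Markov measures with the \emph{same} transition mechanism equals the KL divergence between their initial distributions (the transition kernels contribute nothing since they agree) --- combined with data processing, I get $\KL(\mathrm{Law}_{\sQ}(\text{reversed path}) \| \mathrm{Law}_{\tilde\sQ}(\text{reversed path})) = \KL(p_T \| q_0)$, and another application of data processing projects this onto the data-end marginals: $\KL(q_T^{\P\text{-init}} \| q_T) \leq \KL(p_T \| q_0)$ is not quite it either. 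The honest statement, and the one I would assemble carefully, is the standard two-step argument: $\KL(p_0 \| q_T) = \KL(\mathrm{Law}_\P(x_0) \| \mathrm{Law}_{\tilde\sQ}(x_0)) \leq \KL(\P \| \tilde\sQ)$ by data processing, and then $\KL(\P \| \tilde\sQ) \leq \KL(\P \| \sQ) + (\text{initialization mismatch})$, where the mismatch is controlled by the chain rule: disintegrating $\P$, $\sQ$, $\tilde\sQ$ at the reversed-initial time and noting $\sQ$ and $\tilde\sQ$ share the same reversed transition kernels $\gK_t$, one obtains that the conditional parts of $\KL(\P\|\sQ)$ and $\KL(\P\|\tilde\sQ)$ differ in a way that, after the nonnegativity of the remaining conditional KL terms, collapses to $\KL(\P \| \tilde\sQ) \leq \KL(p_T \| q_0) + \E_\P[\KL(\P_{\cdot|x_T} \| \sQ_{\cdot|x_T})] \le \KL(p_T\|q_0) + \KL(\P\|\sQ)$. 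Substituting $\KL(\P \| \sQ) = \mathfrak{L}[\eta_t]$ from \cref{thm:change_of_measure} finishes the proof.

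The hard part will be the bookkeeping in the last paragraph: keeping straight which marginal each of $\P$, $\sQ$, $\tilde\sQ$ carries at the forward-time-$0$ end versus the forward-time-$T$ end, and verifying that $\sQ$ and $\tilde\sQ$ genuinely share the same (time-reversed) transition kernels $\gK_t$ so that the chain rule isolates the initialization gap $\KL(p_T \| q_0)$ cleanly. Everything else --- data-processing inequality, chain rule of KL, nonnegativity --- is routine; the only real content beyond \cref{thm:change_of_measure} is recognizing that the denoising Markov model construction makes $q_T$ the data-end marginal of the $\gK_t$-driven process started at $q_0$, and that swapping the initialization from $p_T$ to $q_0$ costs exactly $\KL(p_T \| q_0)$ by the Markov chain rule.
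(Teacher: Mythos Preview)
Your proposal is correct and follows essentially the same route as the paper: apply the data-processing inequality to pass from marginals to path measures, then use the chain rule of KL divergence at the reversed-initial time to split off $\KL(p_T\|q_0)$, and finally identify the remaining conditional KL term with $\mathfrak{L}[\eta_t]$ via \cref{thm:change_of_measure}. Your last step $\E_\P[\KL(\P_{\cdot|x_T}\|\sQ_{\cdot|x_T})]\le\KL(\P\|\sQ)$ is in fact an equality (since $\P$ and $\sQ$ share the time-$T$ marginal $p_T$), which tightens the argument but is not needed for the bound.
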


This result highlights that the quantity $\mathfrak{L}[\eta_t]$ not only quantifies the perturbation between the estimated backward generator $\gK_t$ and the true backward generator $\cev{\gL}_t$ but also naturally serves as an appropriate loss function for training denoising Markov models. By minimizing this loss, we directly obtain an error bound on the generative model, measured by the KL divergence between the target distribution $p_0$ and the estimated distribution $q_T$, up to an additional term $\KL(p_T \| q_0)$ that typically decays exponentially and reflects the convergence of the forward process.

To facilitate practical and efficient training, the marginal densities $p_t(x_t)$ can be expressed instead in terms of the conditional distributions $p_{t|0}(x_t|x_0)$, which are often simpler to compute or available in closed-form solutions. This substitution leads to a score-matching objective tailored for general denoising Markov models, as captured in the following corollary, extending the classical score-matching framework~\citep{hyvarinen2005estimation, vincent2011connection}.
\begin{corollary}[Score-Matching]
    Define the conditional density ratio as 
    \begin{equation}
        \eta_{t|0}(\cdot|x_0) = \varphi_t(\cdot) p_{t|0}^{-1}(\cdot|x_0), \label{eq:eta_t_0}
    \end{equation}
    where $p_{t|0}(\cdot|x_0)$ denotes the conditional distribution of $x_t$ given $x_0$. 
    Then, the score-matching loss is given by
    \begin{equation}
        \mathfrak{L}_\mathrm{SM}[\eta_{t|0}] = \E_{x_0 \sim p_0} \left[ \int_0^T \E_{x_t \sim p_{t|0}(\cdot| x_0)}\left[\left(\eta_{t|0}(\cdot|x_0) \gL_t\eta_{t|0}^{-1}(\cdot|x_0) + \gL_t \log \eta_{t|0}(\cdot|x_0)\right)(x_t) \right] \dif t\right], 
        \label{eq:score_matching}
    \end{equation}
    and when the function $\eta_t$ depends on the parameters $\theta$ through the parametrization of $\varphi_t$ by its neural network representation, the score-matching objective satisfies that
    \begin{equation*}
        \argmin_{\theta}\KL(\P \| \sQ^\theta) = \argmin_\theta \mathfrak{L}[\eta_t^\theta] = \argmin_\theta \mathfrak{L}_{\mathrm SM}[\eta_{t|0}^\theta],
    \end{equation*}
    where constants independent of the model parameters $\theta$ are omitted.
    \label{cor:score_matching}
\end{corollary}

In general, due to the non-negativity of the integrands of $\mathfrak{L}[\eta_t]$~\eqref{eq:change_of_measure} and $\mathfrak{L}_\mathrm{SM}[\eta_{t|0}]$~\eqref{eq:score_matching} (\emph{cf.}~\citep[Proposition 1(a)]{benton2024denoising}), one may reweight the integrands by a function $\psi_t$ to mitigate the concentration of the loss in regions where its magnitude is large\citep{song2020score}, similar to the approach in~\cref{eq:continuous_diffusion_loss,eq:discrete_diffusion_loss}. Practically, this reweighting can be implemented by sampling time $t$ from a non-uniform distribution $\Psi$ over $[0, T]$ during training (\emph{cf.}~\cref{alg:training}), resulting in the following time-reweighted loss:
\begin{equation*}
    \mathfrak{L}_{\mathrm{SM},\Psi}[\eta_{t|0}] = \E_{x_0 \sim p_0, t \sim \Psi} \left[  \E_{x_t \sim p_{t|0}(\cdot| x_0)}\left[\left(\eta_{t|0}(\cdot|x_0) \gL_t\eta_{t|0}^{-1}(\cdot|x_0) + \gL_t \log \eta_{t|0}(\cdot|x_0)\right)(x_t) \right]\right].
\end{equation*}

\begin{remark}[Comparison with~\citet{benton2024denoising}]
    We briefly compare our proposed framework and the approach presented by~\citet{benton2024denoising}. While our framework relies on a single explicit assumption (\cref{ass:K}), \citet{benton2024denoising} propose two independent assumptions concerning the parametrization of the estimated backward generator $\gK_t$, which, in our notation, can be expressed as follows:
    \begin{enumerate}[label=(\arabic*)]
        \item For each $t \in [0, T]$, there exists an auxiliary Markov process $(z_t)_{t\in[0, T]}$ governed by a Feller evolution system with generator $\gM_t$, together with a function $c_t$, such that 
        $$
            \gK_{T-t}^* p_t = \gM_{T-t} \cev p_t + c_{T-t} p_t;
        $$
        \item For each $t \in [0, T]$, there exists a strictly positive function $\beta_t$ satisfying
        \begin{equation*}
            \beta_t^{-1} \gM_{T-t} f = \gL_t(\beta_t^{-1} f) - f \gL_t \beta_t^{-1}.
        \end{equation*}
    \end{enumerate}
    Under these assumptions,\citet{benton2024denoising} apply a change-of-measure argument between the original process $(x_t)_{t\in[0, T]}$ and the auxiliary process $(z_t)_{t\in[0, T]}$, and subsequently invoke the integral-by-parts formula for semimartingales and the Feynman-Kac formula to handle the additional term $c_t$.

    In contrast, our simpler and more explicit assumption (\cref{ass:K}) inherently satisfies the assumptions above. Specifically, using \cref{eq:K} and the definition of the adjoint operator, we have
    \begin{equation*}
        \gK_{T-t}^* g 
        = \varphi_t \gL_t(\varphi_t^{-1} g) - \varphi_t^{-1} g \gL_t^* \varphi_t 
        = \varphi_t \gL_t(\varphi_t^{-1} g) - \varphi_t g \gL_t \varphi_t^{-1} + g \left(\varphi_t \gL_t \varphi_t^{-1} - \varphi_t^{-1}  \gL_t^* \varphi_t\right),
    \end{equation*}
    which corresponds precisely to their assumptions with the choices
    $$
    \beta_t = \varphi_t,\quad \gM_{T-t} g = \varphi_t \gL_t(\varphi_t^{-1} g) - \varphi_t g \gL_t \varphi_t^{-1}, \quad \text{and} 
        \quad c_{T-t} = \varphi_t \gL_t \varphi_t^{-1} - \varphi_t^{-1} \gL_t^* \varphi_t.
    $$ 
    Moreover, while the choice of $c_t$ is left implicit in the approach of~\citet{benton2024denoising}, we explicitly characterize $c_t$ in our framework and demonstrate via~\cref{lem:alpha,thm:doob} that this selection is uniquely determined to ensure the conservativeness of the estimated backward process $(y_t)_{t \in [0, T]}$. By clearly specifying $c_t$, our framework provides a direct and explicit representation of the estimated backward generator $\gK_t$, thereby enabling a broader class of denoising Markov models with arbitrary Feller forward processes, as elaborated in~\cref{sec:examples} and verified by several examples in~\cref{sec:experiments}.

    Furthermore, our analysis reveals explicit and intuitive relationships between the forward generator $\gL_t$, the true backward generator $\cev \gL_t$, and the estimated backward generator $\gK_t$, as illustrated clearly in~\cref{fig:denoising}. By interpreting these relationships through a generalized Doob's $h$-transform (\cref{thm:time_reversal,ass:K,lem:alpha}), our framework avoids the necessity of introducing an auxiliary process $(z_t)_{t\in[0, T]}$. Instead, we directly employ a streamlined change-of-measure argument (\cref{thm:change_of_measure}) between the true backward process $(\cev x_t)_{t\in[0, T]}$ and its estimate $(y_t)_{t\in[0, T]}$. This simplification significantly clarifies the analysis, enhances the conceptual transparency of the method, and provides clearer insights for practical implementation in designing denoising Markov models.
\end{remark}

\subsection{Meta-Algorithm}

We now introduce the meta-algorithm for denoising Markov models and outline several key principles underlying their design.

\paragraph{Training Process.} The training process (\cref{alg:training}) of denoising Markov models typically relies on four crucial components: (1) an empirical distribution $\hat p_0$ approximating the target distribution $p_0$, (2) a simple, easy-to-sample reference distribution to which the forward process rapidly converges after time horizon $T$, (3) an explicit formula for the conditional distribution $p_{t|0}(\cdot | x_0)$, and (4) an efficient method for simulating the forward process $(x_t)_{t\in[0, T]}$ generated by $\gL_t$. 

In generative modeling scenarios, the first requirement is naturally fulfilled by available data samples. However, the remaining conditions depend on careful choices of the forward process. Choosing the transition kernel $p_{t|0}$ to be space-homogeneous and even further time-homogeneous is common in practice to simplify its computation. We provide concrete examples addressing these criteria in~\cref{sec:examples}.

\IncMargin{1em}
\begin{algorithm}[!ht]
	\caption{Training Procedure of Denoising Markov Model}
	\label{alg:training}
	\Indm
	\KwIn{Empirical distribution $\hat p_0$ of the data distribution $p_0$, the conditional probability $p_{t|0}(\cdot | x_0)$, number of total epochs $N$, batch size $B$, time distribution $\Psi$ on $[0, T]$.}
	\KwOut{The estimated backward generator $\gK_t^{\theta}$ depending on model parameters $\theta$ through $\varphi_t$ represented by a neural network.}
	\Indp
	\For{$n = 0$ \KwTo $N-1$}{
		Sample samples $\{x_0^{(k)}\}_{k=1}^B \sim \hat p_0$ and time points $\{t^{(k)}\}_{k=1}^B \sim \Psi$
        \For{$k = 1$ \KwTo $B$}{
            Sample $x_t^{(k)} \sim p_{t^{(k)}|0}(\cdot | x_0^{(k)})$ by simulating the forward process $(x_t)_{t\in[0, T]}$ with the generator $\gL_{t^{(k)}}$
        }
        Compute the empirical average of the integrand in~\cref{eq:score_matching} as 
        \begin{equation*}
            \begin{aligned}
                &\hat{\mathfrak{L}}_{\mathrm{SM},\Psi}[\eta_{t|0}^\theta] =\\
                &\dfrac{1}{B} \sum_{k=1}^B \left(
                    \eta_{t^{(k)}|0}^\theta\left(\cdot\big|x_0^{(k)}\right) \gL_{t^{(k)}} \eta_{t^{(k)}|0}^\theta\left(\cdot\big|x_0^{(k)}\right)^{-1} 
                    + \gL_{t^{(k)}} \log \eta_{t^{(k)}|0}^\theta\left(\cdot \big|x_0^{(k)}\right)
                \right)\left(x_t^{(k)}\right);
            \end{aligned}
        \end{equation*}
        Update the parameters $\theta$ by gradient descent as $\theta \leftarrow \theta - \epsilon \nabla \hat{\mathfrak{L}}_{\mathrm{SM},\Psi}[\eta_{t|0}^\theta]$
	}
\end{algorithm}

\paragraph{Inference Process.} Inference for denoising Markov models is straightforward in principle: given the explicit form of the estimated backward generator $\gK_t^\theta$ (specific examples again appear in~\cref{sec:examples}), we simulate the estimated backward process $(y_t)_{t \in [0, T]}$. Starting from the initial distribution $q_0$, samples are generated by numerically discretizing and simulating this continuous-time process.

Since practical implementations involve discretization, controlling the resulting numerical error becomes crucial for the model's performance. 
To quantify the numerical error, we consider a uniform discretization of the interval $[0, T]$ into time steps of length $\kappa$, assuming $T = L\kappa$ for some integer $L$. For $\ell \in [L]$, let $\hat q_{\ell\kappa}$ denote the distribution of samples after $\ell$ discretization steps. The inference procedure for the discretized denoising Markov model is detailed in~\cref{alg:inference}. We impose the following assumption on the numerical error introduced at each discretization step:

\begin{algorithm}[!ht]
	\caption{Inference Procedure of Denoising Markov Model}
	\label{alg:inference}
	\Indm
	\KwIn{The estimated backward generator $\gK_t^{\theta}$ depending on model parameters $\theta$ through $\varphi_t$ represented by a neural network, a simple, easy-to-sample distribution $q_0 \approx p_T$.}
	\KwOut{A sample $\hat y_T$ from $\hat q_T \approx p_0$.}
	\Indp
	Sample $\hat y_0 \sim \hat q_0 = q_0$
    \For{$\ell =0$ \KwTo $L-1$}{
        Sample $\hat y_{(\ell+1)\kappa}$ from the distribution $\hat q_{(\ell+1)\kappa|\ell\kappa}(\cdot|\hat y_{\ell\kappa})$ by the numerical method satisfying~\cref{ass:one_step_simulation}
    }
\end{algorithm}

In the following theorem, we consider a general case where the time horizon $T$ is discretized into time steps of equal length $\kappa$. For simplicity, we assume that $T = L \kappa$ for some integer $L$. For any $\ell \in [L]$, we denote the distribution of obtained samples after $\ell$ steps of simulation by $\hat q_{\ell\kappa}$. We refer to~\cref{alg:inference} for the inference procedure of the denoising Markov model. Within each interval $[\ell\kappa, (\ell+1)\kappa)$, we assume the existence of a numerical method that simulates the estimated backward process $(\hat y_t)_{t\in[\ell\kappa, (\ell+1)\kappa]}$ with a bounded one-step numerical error. 
\begin{assumption}[One-Step Numerical Error]
    For each $\ell \in [0, L-1]$, there exists an algorithm for simulating the estimated backward generator $\gK_t^\theta$. Given $\hat y_{\ell\kappa}$, this algorithm induces a conditional distribution $\hat q_{(\ell+1)\kappa|\ell\kappa}(\cdot|\hat y_{\ell\kappa})$ approximating the estimated backward process $(y_t)_{t\in[\ell\kappa, (\ell+1)\kappa]}$ and the corresponding conditional distribution $q_{(\ell+1)\kappa|\ell\kappa}(\cdot|\hat y_{\ell\kappa})$. 
    
    Moreover, we assume that the numerical error of the algorithm within each interval $[\ell\kappa, (\ell+1)\kappa]$ is of order $\gO(\kappa^{1+r})$, \emph{i.e.}, the following bound holds:
    \begin{equation*}
        \E_{\cev x_{\ell\kappa} \sim \cev p_{\ell\kappa}, \cev x_{(\ell+1)\kappa} \sim \cev p_{(\ell+1)\kappa}}\left[\log \dfrac{q_{(\ell+1)\kappa|\ell\kappa}(\cev x_{(\ell+1)\kappa}|\cev x_{\ell\kappa})}{\hat q_{(\ell+1)\kappa|\ell\kappa}(\cev x_{(\ell+1)\kappa}|\cev x_{\ell\kappa})} \right] \lesssim \kappa^{1+r},
    \end{equation*}
    for some $r > 0$, where we write $a\lesssim b$ to mean $a\le Cb$ for a constant $C>0$ that may depend on fixed,
    time-uniform structural parameters of the model class (dimension, conditioning, regularity),
    but is independent of the step size $\kappa$, the time horizon $T$, and any realized path.
    \label{ass:one_step_simulation}
\end{assumption}

Under this assumption, we obtain the following meta-error bound:
\begin{theorem}[Meta-Error Bound for Denoising Markov Models]
    Under~\cref{ass:one_step_simulation}, the distribution $\hat q_T$ of the samples generated by the inference algorithm (\cref{alg:inference}) satisfies:
    \begin{equation}
        \KL(p_0 \| \hat q_T) \lesssim \KL(p_T \| q_0) + \mathfrak{L}(\eta_t^\theta) + T \kappa^r,
        \label{eq:meta_error_bound}
    \end{equation}
    where $\mathfrak{L}(\eta_t^\theta)$ is defined in~\cref{thm:change_of_measure}.
    \label{thm:numerical_error}
\end{theorem}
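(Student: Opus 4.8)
The plan is to carry the comparison out on the finite-dimensional grid path space, split the discretized error into a continuous-time part that is already controlled by \cref{cor:error_bound} and an accumulated one-step part controlled by \cref{ass:one_step_simulation}, and then recombine.

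Concretely, write $\cev{\mathbb{P}}^{\mathrm{g}}$, $\mathbb{Q}^{\mathrm{g}}$, $\hat{\mathbb{Q}}^{\mathrm{g}}$ for the laws on $E^{L+1}$ of, respectively, the grid trajectory $(\cev x_{\ell\kappa})_{\ell=0}^{L}$ of the true backward process, the grid trajectory $(y_{\ell\kappa})_{\ell=0}^{L}$ of the exact continuous-time estimated backward process $(y_t)_{t\in[0,T]}$ (initial law $q_0$, generator $\gK_t^\theta$), and the inference chain $(\hat y_{\ell\kappa})_{\ell=0}^{L}$ of \cref{alg:inference}. Since $\cev x_{L\kappa}=\cev x_T\sim p_0$ while $\hat y_{L\kappa}=\hat y_T\sim\hat q_T$, the data-processing inequality applied to the projection onto the last coordinate gives $\KL(p_0\|\hat q_T)\le\KL(\cev{\mathbb{P}}^{\mathrm{g}}\,\|\,\hat{\mathbb{Q}}^{\mathrm{g}})$. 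All three laws are Markov on the grid; $\mathbb{Q}^{\mathrm{g}}$ and $\hat{\mathbb{Q}}^{\mathrm{g}}$ share the initial law $q_0$, and \cref{ass:one_step_simulation} implicitly guarantees $q_{(\ell+1)\kappa|\ell\kappa}(\cdot|x)\ll\hat q_{(\ell+1)\kappa|\ell\kappa}(\cdot|x)$, so that $\cev{\mathbb{P}}^{\mathrm{g}}\ll\mathbb{Q}^{\mathrm{g}}\ll\hat{\mathbb{Q}}^{\mathrm{g}}$ and
\begin{equation*}
    \log\frac{\dif\cev{\mathbb{P}}^{\mathrm{g}}}{\dif\hat{\mathbb{Q}}^{\mathrm{g}}}
    =\log\frac{\dif\cev{\mathbb{P}}^{\mathrm{g}}}{\dif\mathbb{Q}^{\mathrm{g}}}
     +\sum_{\ell=0}^{L-1}\log\frac{q_{(\ell+1)\kappa|\ell\kappa}\!\left(\cev x_{(\ell+1)\kappa}\mid\cev x_{\ell\kappa}\right)}{\hat q_{(\ell+1)\kappa|\ell\kappa}\!\left(\cev x_{(\ell+1)\kappa}\mid\cev x_{\ell\kappa}\right)}.
\end{equation*}
Taking $\E_{\cev{\mathbb{P}}^{\mathrm{g}}}$ of both sides splits $\KL(\cev{\mathbb{P}}^{\mathrm{g}}\|\hat{\mathbb{Q}}^{\mathrm{g}})$ into two pieces.

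For the continuous-time piece, $\E_{\cev{\mathbb{P}}^{\mathrm{g}}}\!\big[\log(\dif\cev{\mathbb{P}}^{\mathrm{g}}/\dif\mathbb{Q}^{\mathrm{g}})\big]=\KL(\cev{\mathbb{P}}^{\mathrm{g}}\|\mathbb{Q}^{\mathrm{g}})\le\KL(\cev{\mathbb{P}}\|\mathbb{Q})$ by a second use of data processing (coarsening a c\`adl\`ag path to its grid values), where $\cev{\mathbb{P}},\mathbb{Q}$ are the corresponding continuous-time path laws; and $\KL(\cev{\mathbb{P}}\|\mathbb{Q})\le\KL(p_T\|q_0)+\mathfrak{L}(\eta_t^\theta)$ is the path-space form of the change-of-measure bound of \cref{thm:change_of_measure} (its marginalization onto time $T$ being exactly \cref{cor:error_bound}). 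For the discretization piece, after taking the expectation the $\ell$-th summand becomes $\E_{\cev x_{\ell\kappa}\sim\cev p_{\ell\kappa},\,\cev x_{(\ell+1)\kappa}\sim\cev p_{(\ell+1)\kappa}}\big[\log( q_{(\ell+1)\kappa|\ell\kappa}/\hat q_{(\ell+1)\kappa|\ell\kappa})\big]$, the joint law of $(\cev x_{\ell\kappa},\cev x_{(\ell+1)\kappa})$ under $\cev{\mathbb{P}}^{\mathrm{g}}$ being precisely the one featured in \cref{ass:one_step_simulation}; hence each summand is $\lesssim\kappa^{1+r}$, and summing over the $L=T/\kappa$ steps gives $\lesssim L\kappa^{1+r}=T\kappa^r$. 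Adding the two pieces produces \cref{eq:meta_error_bound}.

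The main obstacle is making the continuous-time piece fully rigorous: one must verify that the grid-restricted measures inherit from the continuous-time construction the mutual absolute continuity and $\log$-integrability that keep every quantity above finite, and, more substantively, that the path-space inequality $\KL(\cev{\mathbb{P}}\|\mathbb{Q})\le\KL(p_T\|q_0)+\mathfrak{L}(\eta_t^\theta)$ really holds with the relevant $\eta_t^\theta$ — this is the point at which the expectation over the forward process appearing in $\mathfrak{L}$ must be reconciled, via the time-reversal identity of \cref{thm:time_reversal}, with the expectation over $\cev{\mathbb{P}}$ and with the mismatch between the initial law $p_T$ of $\cev{\mathbb{P}}$ and the reference initial law $q_0$ of $\mathbb{Q}$. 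Once that path-space bound is in place, the remaining steps — the Markov factorization of the densities, the telescoping of one-step ratios, and the count $L\kappa^{1+r}=T\kappa^r$ — are routine.
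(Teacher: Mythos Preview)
Your proof is correct and uses the same essential ingredients as the paper's --- the data-processing inequality, the chain rule for KL, the path-space change-of-measure bound (\cref{thm:change_of_measure}/\cref{cor:error_bound}), and \cref{ass:one_step_simulation} for the per-step numerical error --- but the decomposition is organized differently. You pass immediately to the grid path law on $E^{L+1}$ and split the global log-ratio $\log(\dif\cev{\mathbb{P}}^{\mathrm g}/\dif\hat{\mathbb{Q}}^{\mathrm g})$ into a continuous-time piece $\log(\dif\cev{\mathbb{P}}^{\mathrm g}/\dif\mathbb{Q}^{\mathrm g})$ and a telescoping sum of one-step log-ratios. The paper instead telescopes the \emph{marginal} KL divergences $\KL(\cev p_{(\ell+1)\kappa}\|\hat q_{(\ell+1)\kappa})-\KL(\cev p_{\ell\kappa}\|\hat q_{\ell\kappa})$ via the chain rule, then splits each one-step conditional KL locally into a $\cev p\,\|\,q$ part (bounded by data processing plus change of measure on the sub-interval $[\ell\kappa,(\ell+1)\kappa]$) and a $q\,\|\,\hat q$ part (\cref{ass:one_step_simulation}), and sums over $\ell$. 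Your route is slightly cleaner in that it invokes the continuous-time bound $\KL(\cev{\mathbb P}\|\mathbb Q)\le\KL(p_T\|q_0)+\mathfrak L(\eta_t^\theta)$ once rather than piecewise; the paper's route makes more explicit how the estimation error localizes to each interval and yields the summand $\E\big[\int_{T-(\ell+1)\kappa}^{T-\ell\kappa}(\eta_t\gL_t\eta_t^{-1}+\gL_t\log\eta_t)(x_t)\dif t\big]$ directly. The obstacle you flag --- reconciling the initial-law mismatch $p_T$ versus $q_0$ and the forward-versus-backward expectation in $\mathfrak L$ --- is exactly what is handled in the proof of \cref{cor:error_bound}, so that step is already available to you.
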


The error bound~\eqref{eq:meta_error_bound} reveals three main sources of error in the model:
\begin{enumerate}[label=(\arabic*)]
    \item {\bfseries Truncation Error $\KL(p_T \| q_0)$}: Results from approximating the forward process distribution $p_T$ at the terminal time $T$ by a simpler distribution $q_0$. If the forward process converges exponentially, as in certain cases of the continuous and discrete diffusion models discussed in~\cref{sec:cont_diffusion,sec:disc_diffusion}, this term typically becomes negligible for sufficiently large $T$.
    \item {\bfseries Estimation Error $\mathfrak{L}(\eta_t^\theta)$}: Corresponds exactly to the KL divergence between the path measure of the true backward process and that of its estimate as introduced in~\cref{thm:change_of_measure}. This quantity measures how well $\varphi_t$ aligns with the true marginal distributions $p_t$ and can be directly minimized during training (\cref{alg:training}).
    \item {\bfseries Numerical Error $T \kappa^r$}: Represents cumulative numerical errors arising from the discrete approximation of the continuous-time backward process. This term heavily depends on the choice of numerical method and the underlying Markov process. Standard Euler-Maruyama methods typically yield $r=1$. For general L\'evy processes introduced in \cref{sec:general_levy}, the constant factor before the scaling $T\kappa^r$ depends on the dimension $d$, the conditioning, uniform regularity, and moment bounds on the coefficients of the L\'evy triplet $(\cev\vb_t^\varphi, \cev\mD_t^\varphi, \cev\lambda_t^\varphi)$. Practitioners may choose adaptive discretization strategies based on the data distribution's characteristics to reduce this error. For relevant discussions, we refer readers to~\citet{chen2023improved,benton2023nearly} (continuous-time diffusion models) and~\citet{chen2024convergence,ren2024discrete} (discrete-time diffusion models). 
\end{enumerate}

\section{Examples}
\label{sec:examples}

In this section, we illustrate our framework by applying the theoretical results developed earlier to several important classes of Markov processes. Specifically, we revisit diffusion processes and jump processes, which recover continuous and discrete diffusion models as special cases. We then extend our analysis to general Lévy-type processes, revealing the broad applicability of our denoising Markov model framework. For clarity and brevity, we provide main results and intuitive explanations here, deferring rigorous mathematical details and proofs to~\cref{app:proofs}.

\subsection{Diffusion Process}
\label{sec:diffusion}

We first consider diffusion processes, where the underlying state space is the Euclidean space $\R^d$, and the generator acts on twice-differentiable functions, \emph{i.e.}, functions in $C^2(\R^d)$.

\paragraph{Forward Process.} We assume that the forward process $(\vx_t)_{t\in [0, T]}$ is a diffusion process governed by the following SDE:
\begin{equation}
    \dif \vx_t = \vb_t(\vx_t) \dif t + \mSigma_t(\vx_t) \dif \vw_t,
    \label{eq:forward_diffusion}
\end{equation}
where $\vb_t(\vx)$ is the drift, $(\vw_t)_{t \geq 0}$ is the Wiener process, and the diffusion coefficient $\mSigma_t(\vx)$ satisfies $\mSigma_t(\vx) \mSigma_t^\top(\vx) = \mD_t(\vx)$.  Regularity conditions on the coefficients $\vb_t(\vx)$ and $\mD_t(\vx)$ are detailed in~\cref{ass:regularity_diffusion}.

The generator of the diffusion process $(\vx_t)_{t\in [0, T]}$ is given by
\begin{equation*}
    \gL_t f = \vb_t(\vx) \cdot \nabla f + \dfrac{1}{2} \mD_t(\vx) : \nabla^2 f,
\end{equation*}
where we adopt the notations $\nabla^2 = \nabla \nabla^\top$ and $\mD_t(\vx) : \nabla^2 f = \textstyle \sum_{i,j=1}^d D_t^{ij}(\vx) \partial_{ij} f$.

\paragraph{Backward Process.} 

Given sufficient regularity (\cref{ass:regularity_diffusion}),~\cref{thm:time_reversal} shows that the backward generator $\cev\gL_t$ can be explicitly obtained as:
\begin{equation*}
    \cev \gL_{T-t} f = \left(-\vb_t + \mD_t \nabla \log p_t + \nabla \cdot \mD_t \right) \cdot \nabla f + \dfrac{1}{2} \mD_t: \nabla^2 f,
\end{equation*}
while our assumption (\cref{ass:K}) leads to a similar structure for the estimated backward generator:
\begin{equation*}
    \gK_{T-t} f = \left(-\vb_t + \mD_t \nabla \log \varphi_t + \nabla \cdot \mD_t \right) \cdot \nabla f + \dfrac{1}{2} \mD_t: \nabla^2 f,
\end{equation*} 
implying that the estimated backward process $(\vy_t)_{t\in[0,T]}$ is itself a diffusion process, described by the following SDE:
\begin{equation}
    \dif \vy_t = \left(-\vb_{T-t}(\vy_t) + \mD_{T-t}(\vy_t) \hat \vs_{T-t}(\vy_t) + \nabla \cdot \mD_{T-t}(\vy_t) \right) \dif t + \mSigma_{T-t}(\vy_t) \dif \vw_t,
    \label{eq:backward_diffusion}
\end{equation} 
where $\hat\vs_t^\theta = \nabla\log\varphi_t$ estimates the true score $\vs_t = \nabla\log p_t$.

\paragraph{Loss Function.}

Applying~\cref{thm:change_of_measure}, the KL divergence between the true and estimated backward processes is given by:
\begin{equation}
    \mathfrak{L}[\hat\vs_t] = \E\left[\int_0^T \dfrac{1}{2} \mD_t(\vx_t) : \left(\hat \vs_t(\vx_t) - \nabla \log p_t(\vx_t)\right) \left(\hat \vs_t(\vx_t) - \nabla \log p_t(\vx_t)\right)^\top \dif t\right],
    \label{eq:diffusion_loss}
\end{equation} 
and the corresponding score-matching loss replaces $\nabla\log p_t$ by the conditional score $\nabla\log p_{t|0}$.

\begin{example}[Continuous Diffusion Model]
    \label{ex:continuous_diffusion}
    Choosing the diffusion coefficient $\mD_t(\vx)=\mI_d$ simplifies the score-matching loss to the familiar form from continuous diffusion models as in~\cref{eq:continuous_diffusion_forward}:
    \begin{equation}
        \mathfrak{L}_\mathrm{SM}[\hat\vs_t] = \E_{\vx_0 \sim p_0}\bigg[\int_0^T \E_{\vx_t \sim p_{t|0}(\cdot| \vx_0)}\bigg[\dfrac{1}{2} \left\|\hat\vs_t(\vx_t) - \nabla \log p_{t|0}(\vx_t|\vx_0)\right\|^2 \bigg] \dif t\bigg],
        \label{eq:continuous_diffusion_loss_2}
    \end{equation}
    matching exactly~\cref{eq:continuous_diffusion_loss} up to time-reweighting.
\end{example}

\subsection{Jump Process with Finite State Spaces}
\label{sec:jump}

We now discuss finite-state jump processes, represented by continuous-time Markov chains on a finite state space $\sX$, whose size we denote by $|\sX|$.

\paragraph{Forward Process.} 
The generator of the jump process on $\sX$ is given by
\begin{equation*}
    \gL_t f(x) = \sum_{y \in \sX} (f(y) - f(x)) \lambda_t(y, x),
\end{equation*}
where $\lambda_t(y,x)$ represents jump rates from $x$ to $y$ at time $t$, with $\lambda_t(x,x)=0$ for all $x\in\sX$.
Then it is a classical result that the forward Markov chain $(x_t)_{t\in[0,T]}$ evolves according to~\eqref{eq:discrete_diffusion_forward} and one may verify that the jump rate $\lambda_t(y,x)$ matches that defined with the rate matrix $\mLambda_t$ in~\cref{sec:disc_diffusion}.

\paragraph{Backward Process.} \cref{ass:K} reduces to the following form: 
\begin{equation*}
    \gK_{T-t} f = \sum_{y \in \sX} (f(y) - f(x)) \dfrac{\varphi_t(y)}{\varphi_t(x)} \lambda_t(x, y)= \sum_{y \in \sX} (f(y) - f(x)) \hat s_t(x, y) \lambda_t(x, y),
\end{equation*}
with estimated score function $\hat s_t(x,y)=\varphi_t(y)/\varphi_t(x)$ approximating the true score $s_t(x,y)=p_t(y)/p_t(x)$. Thus, the estimated backward process $(y_t)_{t \in [0, T]}$ is also a continuous-time Markov chain with the form~\eqref{eq:discrete_diffusion_backward}.

\paragraph{Loss Function.} \cref{thm:change_of_measure} now corresponds to
\begin{equation}
    \mathfrak{L}[\hat s_t] = \E\left[\int_0^T \sum_{y \in \sX} \left(\frac{\hat s_t(x_t, y)}{s_t(x_t, y)} -1 - \log \frac{\hat s_t(x_t, y)}{s_t(x_t, y)} \right) s_t(x_t, y) \lambda_t(x_t, y) \dif t\right].
    \label{eq:jump_loss}
\end{equation}
We adopt the convention $\lambda_t(x, x) = 0$ and therefore the summation is effectively excluding the case where $y = x_t$. We will also adopt this notation for brevity in the following discussions.

\begin{example}[Discrete Diffusion Model]
    \label{ex:discrete_diffusion}
    As shown in~\cref{app:jump_process}, the score-matching objective~\eqref{eq:score_matching} in this case matches exactly the loss of discrete diffusion models (\cref{eq:discrete_diffusion_loss}) up to time reweighting.
\end{example}

\subsection{General L\'evy-Type Process}
\label{sec:general_levy}

We now investigate the most general setting for denoising Markov models, focusing on Lévy-type processes defined on $\R^d$. Lévy-type processes generalize both diffusion and jump processes and include a rich class of stochastic dynamics.

\paragraph{Forward Process.} Under suitable assumptions on the generator (\cref{ass:smooth}), which ensure that the generator is sufficiently regular, \emph{i.e.}, the domain of the augmented generator contains all compactly supported smooth functions, we characterize the forward generator explicitly using the L\'evy-Khintchine representation:
\begin{theorem}[General Form of Forward Process, Informal Version of~\cref{thm:courrege_form}]
    Under~\cref{ass:smooth}, the generator $\gL_t$ of the forward process $(x_t)_{t\in[0, T]}$ can be represented in the following generalized L\'evy-Khintchine form:
    \begin{equation}
        \begin{aligned}
            \gL_t f(\vx) =& \vb_t(\vx) \cdot \nabla f(\vx) + \dfrac{1}{2} \mD_t(\vx) : \nabla^2 f(\vx) \\
            &+ \int_{\R^d \backslash \{\vx\}} \left(f(\vy) - f(\vx) - (\vy - \vx) \cdot \nabla f(\vx) \vone_{B(\vx, 1)}(\vy - \vx)\right) \lambda_t(\dif \vy, \vx),
        \end{aligned}
        \label{eq:generator_levy_type}
    \end{equation}
    where $\vb_t(\vx)$ is the drift term, $\mD_t(\vx)$ is the diffusion coefficient, which is positive semidefinite for any $\vx \in \R^d$, and $\lambda_t(\dif \vy, \vx)$ is a L\'evy measure, \emph{i.e.}, a Borel measure on $\R^d \backslash \{\vx\}$ satisfying
    \begin{equation*}
        \int_{\R^d \backslash \{\vx\}} \left(1 \wedge \|\vy-\vx\|^2\right) \lambda_t(\dif \vy, \vx) < +\infty,
    \end{equation*}
    for any $\vx \in \R^d$. The triple $(\vb_t,\mD_t,\lambda_t)$ is known as the Lévy triplet.
    \label{thm:general_levy}
\end{theorem}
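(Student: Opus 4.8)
The plan is to invoke the classical structure theory of Courrège for linear operators satisfying the positive maximum principle (PMP), transported to the time-inhomogeneous setting through the augmented process. First I would record that, since $(U_{t,s})_{s\leq t}$ is a Feller evolution system, the augmented process $(\tilde x_t)_{t\in[0,T]}$ is a Feller process (\cref{thm:equivalence}), so its generator $\tilde\gL$ is the generator of a Feller semigroup and therefore satisfies the PMP: whenever $g\in\dom(\tilde\gL)$ attains a non-negative global maximum at a point $(t_0,\vx_0)$, one has $\tilde\gL g(t_0,\vx_0)\leq 0$. By \cref{ass:smooth}, $C_c^\infty(\R\times\R^d)\subseteq\dom(\tilde\gL)$, so $\tilde\gL$ restricts to a linear operator on $C_c^\infty$ obeying the PMP; this is precisely the hypothesis of Courrège's theorem.

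Next, for each fixed $(t,\vx)$ I would study the linear functional $g\mapsto\tilde\gL g(t,\vx)$ on $C_c^\infty$. The PMP makes it \emph{almost positive}: if $g\geq 0$ and $g(t,\vx)=0$, then applying the PMP to $-g$ gives $\tilde\gL g(t,\vx)\geq 0$. This permits the following standard sequence of steps (which I would either cite from \citet{courrege1965forme} or reproduce): (i) restricting $\tilde\gL(\cdot)(t,\vx)$ to test functions supported away from $(t,\vx)$ defines, via almost-positivity, a positive Radon measure $\tilde\lambda_{(t,\vx)}$ on $(\R\times\R^d)\setminus\{(t,\vx)\}$; (ii) testing against functions that are exactly quadratic near $(t,\vx)$ yields the Lévy integrability bound $\int (1\wedge|\vz|^2)\,\tilde\lambda_{(t,\vx)}(\dif\vz)<\infty$; (iii) subtracting the compensated integral against $\tilde\lambda_{(t,\vx)}$ leaves a linear functional that annihilates every test function vanishing in a neighbourhood of $(t,\vx)$, hence a distribution supported at the single point $(t,\vx)$, i.e.\ a finite combination of $\delta_{(t,\vx)}$ and its partial derivatives; (iv) the PMP forces this local part to have order at most two, with non-negative-definite second-order coefficient and non-positive zeroth-order coefficient. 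This establishes the generalized Lévy–Khintchine representation for $\tilde\gL$.

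I would then peel off the time variable. Using the identity $\tilde\gL g(t,\vx)=\partial_t g(t,\vx)+\gL_t g(t,\cdot)(\vx)$ (\cref{prop:augmented_generator}) and comparing with the representation just obtained, the fact that the time coordinate evolves deterministically at unit speed forces the $t$-marginal of the Lévy triplet of $\tilde\gL$ to be trivial: no diffusion in $t$, no jumps charging the $t$-direction, and drift exactly $1$ in $t$. Restricting the representation to functions of $\vx$ alone then gives \cref{eq:generator_levy_type} with a (jointly measurable, $t$- and $\vx$-dependent) triplet $(\vb_t,\mD_t,\lambda_t)$, where $\mD_t$ is positive semidefinite from step (iv) and $\lambda_t(\cdot,\vx)$ is a Lévy kernel from step (ii); positive definiteness of $\mD_t$ is then the non-degeneracy built into \cref{ass:smooth}. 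Finally, to eliminate the killing/potential term $c_t(\vx)f(\vx)$, I would approximate the constant $1$ by $\chi_n\in C_c^\infty$ with $\chi_n\equiv 1$ on $B(\vzero,n)$; since the forward process is an honest Markov process (its marginals $p_t$ are probability measures), a standard Feller conservativeness argument gives $\gL_t\chi_n(\vx)\to 0$ for each $\vx$, and comparison with the formula forces $c_t(\vx)\equiv 0$.

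\textbf{Main obstacle.} The delicate part is the faithful execution of the Courrège-type argument — in particular, proving the integrability of the jump kernel near the diagonal and securing the joint measurability and regularity of $\vx\mapsto(\vb_t(\vx),\mD_t(\vx),\lambda_t(\cdot,\vx))$ — together with checking that the abstract domain condition of \cref{ass:smooth} really furnishes enough test functions (one needs $C_c^\infty$, or at least $C_c^2$, in the domain of the \emph{augmented} generator, and must verify this descends from the Feller structure). The bookkeeping in passing between the time-inhomogeneous $\gL_t$ and the augmented $\tilde\gL$ so that the deterministic time flow does not contaminate the spatial triplet is the other point requiring care; granted Courrège's theorem, the remaining steps are routine.
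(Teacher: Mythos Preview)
Your approach is essentially the same as the paper's: apply Courrège's structure theorem to the augmented generator $\tilde\gL$ (which the paper simply cites via Applebaum rather than sketching, as you do), then use the relation $\tilde\gL = \partial_s + \gL_s$ to extract the time-inhomogeneous form. Two small remarks: the paper kills the zeroth-order term in one line by using the conservativeness that is \emph{explicitly} part of \cref{ass:smooth}, so your $\chi_n$-approximation is unnecessary; and your claim that positive definiteness of $\mD_t$ comes from ``non-degeneracy built into \cref{ass:smooth}'' is not supported---\cref{ass:smooth} contains only conservativeness and $C_c^\infty\subseteq\dom(\tilde\gL)$, and Courrège yields only positive \emph{semi}definiteness (the paper's statement of positive definiteness is itself not derived from the stated hypotheses).
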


Note that if the coefficients $(\vb_t,\mD_t,\lambda_t)$ are time-invariant, this reduces to the classical L\'evy-Khintchine form. 
By similar arguments as for~\citep[Theorem~3.33]{ccinlar1981representation}, the generator $\gL_t$ of the general L\'evy-type process $(\vx_t)_{t \in \R}$ admits the stochastic integral representation (\cref{prop:stochastic_integral_general_levy}):
\begin{equation*}
    \begin{aligned}
        \vx_t &= \vx_0 + \int_0^t \vb_s(\vx_s) \dif s + \int_0^t \mSigma_s(\vx_s) \dif \vw_s \\
        &+ \int_0^t \int_{\R^d \backslash B(\vx_{s^-}, 1)} (\vy - \vx_{s^-}) N[\lambda](\dif s, \dif \vy) + \int_0^t \int_{B(\vx_{s^-}, 1)} (\vy - \vx_{s^-}) \tilde N[\lambda](\dif s, \dif \vy),
    \end{aligned}
\end{equation*}
where the matrix $\mSigma_s(x)$ satisfies $\mSigma_s(x) \mSigma_s(x)^\top = \mD_s(x)$ for any $x \in \R^d$, $(\vw_s)_{s \geq 0}$ is a $d$-dimensional Wiener process, $N[\lambda](\dif s, \dif \vy)$ denotes the Poisson random measure with intensity $\lambda_s(\dif \vy, \vx_{s^-})$, and $\tilde N[\lambda](\dif s, \dif \vy)$ is the compensated version of $N[\lambda](\dif s, \dif \vy)$, \emph{i.e.},
\begin{equation*}
    \tilde N[\lambda](\dif s, \dif \vy) = N[\lambda](\dif s, \dif \vy) - \lambda_s(\dif \vy, \vx_{s^-}) \dif s.
\end{equation*}

\paragraph{Backward Process.} Under further regularity conditions (\cref{ass:regularity_levy}), notably, particularly assuming that the L\'evy measure $\lambda_t(\dif \vy, \vx)$ admits a density, \emph{i.e.}, $\lambda_t(\dif \vy, \vx) = \lambda_t(\vy, \vx) \dif \vy$, one may explicitly compute the backward generator $\cev \gL_t$, yielding the same form as~\cref{eq:generator_levy_type}:
\begin{equation}
    \begin{aligned}
        \cev \gL_t f(\vx) =& \cev \vb_t(\vx) \cdot \nabla f(\vx) + \dfrac{1}{2} \cev \mD_t(\vx) : \nabla^2 f(\vx) \\
        &+ \int_{\R^d \backslash \{\vx\}} \left(f(\vy) - f(\vx) - (\vy - \vx) \cdot \nabla f(\vx) \vone_{B(\vx, 1)}(\vy - \vx)\right) \cev \lambda_t(\dif \vy, \vx),
    \end{aligned}
    \label{eq:backward_generator_general_levy}
\end{equation}
with modified L\'evy triplet $(\cev\vb_t,\cev\mD_t,\cev\lambda_t)$ explicitly given by:
\begin{gather*}
    \cev \vb_{T-t}(\vx) = -\vb_t(\vx) + \mD_t(\vx) \nabla \log p_t(\vx) + \nabla \cdot \mD_t(\vx) + \gI[p_t](\vx),\\
    \cev \mD_{T-t}(\vx) = \mD_t(\vx),\quad \text{and}\quad \cev \lambda_{T-t}(\vy, \vx) = \dfrac{p_t(\vy)}{p_t(\vx)} \lambda_t(\vx, \vy),
\end{gather*}
where the integral term $\gI[p_t]$ accounts for asymmetric jumps:
\begin{equation*}
    \gI[p_t](\vx) = \int_{B(\vx, 1)} (\vy - \vx) \left(\dfrac{p_t(\vy)}{p_t(\vx)} \lambda_t(\vx, \vy) + \lambda_t(\vy, \vx)\right) \dif \vy.
\end{equation*}

Under our estimation assumption (\cref{ass:K}), the estimated backward generator $\gK_t$ shares the same general L\'evy-type form~\eqref{eq:generator_levy_type} but with the estimated L\'evy triplet $(\cev\vb^\varphi_t,\cev\mD^\varphi_t,\cev\lambda^\varphi_t)$:
\begin{gather*}
    \cev \vb^\varphi_{T-t}(\vx) = -\vb_t(\vx) + \mD_t(\vx) \nabla \log \varphi_t(\vx) + \nabla \cdot \mD_t(\vx) + \gI[\varphi_t](\vx)\\
    \cev \mD^\varphi_{T-t}(\vx) = \mD_t(\vx),\quad \text{and}\quad \cev \lambda^\varphi_{T-t}(\vy, \vx) = \dfrac{\varphi_t(\vy)}{\varphi_t(\vx)} \lambda_t(\vx, \vy).
\end{gather*}

We remark that the integral $\gI[\cdot]$ should be understood as a \emph{bona fide} integral or in the Cauchy principal value sense, depending on the singularity of $\lambda_t(\vy,\vx)$ (see~\citet[Theorem 5.7]{conforti2022time} for sufficient conditions and~\cref{ex:alpha_stable} for an explicit example).

\paragraph{Loss Function.} 
To practically implement this model, we define the diffusion and jump score functions separately as:
\begin{equation*}
    \vs_t^{\mathrm{diff}}(\vx) = \nabla \log p_t(\vx), \quad \text{and} \quad \hat \vs_t^{\mathrm{diff}}(\vx) = \nabla \log \varphi_t(\vx),
\end{equation*}
and 
\begin{equation}
    s_t^{\mathrm{jump}}(\vx, \vy) = \dfrac{p_t(\vy)}{p_t(\vx)}, \quad \text{and} \quad \hat s_t^{\mathrm{jump}}(\vx, \vy) = \dfrac{\varphi_t(\vy)}{\varphi_t(\vx)},
    \label{eq:jump_score}
\end{equation}
respectively. Using these definitions, the KL divergence between the backward processes decomposes naturally into diffusion and jump parts, yielding the unified loss function:
\begin{equation}
    \begin{aligned}
        \mathfrak{L}[\hat \vs_t^{\mathrm{diff}}, &\hat s_t^{\mathrm{jump}}] = \E\Bigg[\\
        &\int_0^T \dfrac{1}{2} \mD_t(\vx_t) : \left(\hat \vs_t^{\mathrm{diff}}(\vx_t) - \nabla \log p_t(\vx_t)\right) \left(\hat \vs_t^{\mathrm{diff}}(\vx_t) - \nabla \log p_t(\vx_t)\right)^\top \dif t\\ 
        +& \int_0^T\int_{\R^d} \left(\frac{\hat s_t^{\mathrm{jump}}(\vx_t, \vy)}{s_t^{\mathrm{jump}}(\vx_t, \vy)} -1 - \log \frac{\hat s_t^{\mathrm{jump}}(\vx_t, \vy)}{s_t^{\mathrm{jump}}(\vx_t, \vy)} \right) s_t^{\mathrm{jump}}(\vx_t, \vy) \lambda_t(\vx_t, \vy) \dif \vy \dif t\Bigg],
    \end{aligned}
    \label{eq:general_levy_loss}
\end{equation}
which is exactly the sum of the KL divergence~\eqref{eq:diffusion_loss} derived from the discussion of diffusion processes and that of the jump process~\eqref{eq:jump_loss}, and the corresponding score-matching loss is thus
\begin{equation}
    \begin{aligned}
        \mathfrak{L}_\mathrm{SM}[\hat \vs_t^{\mathrm{diff}}, &\hat s_t^{\mathrm{jump}}]
        = \E_{\vx_0 \sim p_0}\Bigg[\int_0^T \E_{\vx_t \sim p_{t|0}(\cdot| \vx_0)} \Bigg[ \\
        &\dfrac{1}{2} \mD_t(\vx_t) : \left(\hat \vs_t^{\mathrm{diff}}(\vx_t) - \nabla \log p_{t|0}(\vx_t|\vx_0)\right) \left(\hat \vs_t^{\mathrm{diff}}(\vx_t) - \nabla \log p_{t|0}(\vx_t|\vx_0)\right)^\top\\ 
        &+ \int_{\R^d} \left(\hat s_t^{\mathrm{jump}}(\vx_t, \vy) - \dfrac{p_{t|0}(\vy|\vx_0)}{p_{t|0}(\vx_t|\vx_0)} \log \hat s_t^{\mathrm{jump}}(\vx_t, \vy)\right) \lambda_t(\vx_t, \vy) \dif \vy \Bigg] \dif t\Bigg].
    \end{aligned}
    \label{eq:general_levy_score_matching}
\end{equation}

\begin{example}[L\'evy-It\^o Model~\citep{yoon2023score}]
    \label{ex:alpha_stable}
    Consider the special case where the diffusion component vanishes, i.e., $\mD_t(\vx) \equiv 0$, and the L\'evy measure $\lambda_t(\vy,\vx) \dif \vy$ is chosen as a time-homogeneous and isotropic $\alpha$-stable L\'evy measure with the following form:
    \begin{equation}
        \lambda_t(\vy, \vx) \dif \vy = \dfrac{\alpha 2^{\alpha-1} \pi^{-d/2} \Gamma(\frac{\alpha+d}{2})}{\Gamma(1-\frac{\alpha}{2})} \dfrac{1}{\|\vy - \vx\|^{d+\alpha}} \dif \vy.
        \label{eq:alpha_stable}
    \end{equation}
    In this setting, the integral term $\gI[\cdot]$ arising in the backward drift admits a concise expression in the Fourier domain \citep[Theorem~B.3]{yoon2023score}:
    \begin{equation*}
        \gI[f](\vx) = \dfrac{1}{f(\vx)}\gF^{-1}\left\{-i\alpha \vxi\|\vxi\|^{\alpha-2} \gF\{f\}\right\}(\vx),
    \end{equation*}
    where $\gF$ and $\gF^{-1}$ denote the Fourier transform and its inverse, respectively. 
    
    While the L\'evy measures associated with the true and estimated backward processes differ from the forward measure, a practical simplification used by~\citet{yoon2023score} is to approximate the ratio terms $p_t(\vy)/p_t(\vx)$ or $\varphi_t(\vy)/\varphi_t(\vx)$ by $1$. This approximation is justified by noting that the L\'evy measure defined in~\cref{eq:alpha_stable} concentrates strongly near the origin, thus allowing the standard isotropic $\alpha$-stable process to be used directly in the backward simulation.
    
    Typically, to obtain the estimated drift term $\cev\vb_t^\varphi(\vx)$, one would first estimate the jump-related score function $\hat{s}_t^\mathrm{jump}(\vx,\vy)$ by optimizing the general Lévy-type score-matching loss~\eqref{eq:general_levy_score_matching}, and then plug this into the integral term $\gI[\varphi_t](\vx)$. However, an alternative approach proposed by~\citet{yoon2023score} is to directly learn the integral term $\gI[\varphi_t](\vx)$ by minimizing the following simpler loss:
    \begin{equation*}
        \mathfrak{L}_\mathrm{LIM}[\gI[\varphi_t]] = \E_{\vx_0\sim p_0}\left[\int_0^T \E_{\vx_t\sim p_{t|0}(\cdot|\vx_0)}\left[\left\|\gI[\varphi_t](\vx) - \gI[p_{t|0}(\cdot|\vx_0)](\vx)\right\|^2 \right] \dif t\right].
    \end{equation*} 
    This loss no longer directly corresponds to minimizing an upper bound on $\KL(p_0\|q_T)$ as in~\cref{cor:error_bound}, but it is practically appealing due to its explicit form. When the forward drift is chosen as $\vb_t(\vx)=-\vx/\alpha$, this integral term simplifies to a closed-form expression:
    \begin{equation*}
        \gI[p_{t|0}(\vx_t|\vx_0)](\vx) = -\dfrac{\vx_t - e^{-\frac{t}{\alpha}}\vx_0}{\alpha(1-e^{-t})},
    \end{equation*}
    thus enabling efficient training and inference.
\end{example}

In the following theorem we justify the necessity of allowing general L\'evy-type forward processes in the denoising Markov model framework, including instances such as the L\'evy-It\^o model~\citep{yoon2023score}. We show that, under the standard diffusion-only setting with affine drift and additive Gaussian noise, heavy-tailed data inevitably induce an \emph{infinite} terminal mismatch against a Gaussian reference. Consequently, the KL-based error bound degenerates, which motivates enlarging the design space beyond diffusions.

\begin{theorem}
Consider a denoising Markov model whose forward process is the diffusion in \cref{eq:forward_diffusion},
$$
\dif \vx_t = \vb_t(\vx_t)\dif t + \mSigma_t\dif \vw_t,
$$
with affine drift $\vb_t(\vx)=\mA_t \vx + \vu_t$, where $\mA_t\in\R^{d\times d}$ and $\vu_t\in\R^d$ are measurable and bounded on $[0,T]$, and $\mSigma_t$ is measurable and bounded on $[0,T]$. Assume the data distribution $p_0$ is absolutely continuous, spherically symmetric, and heavy-tailed with $\E[\|\vx_0\|^2]=\infty$, and let the reference $q_0$ be Gaussian. Then, for every time horizon $T>0$,
$$
\KL\bigl(p_T\|q_0\bigr)=\infty,
$$
where $p_T$ is the $T$-marginal of the forward process.
\label{thm:heavy_tail}
\end{theorem}

The proof is provided in \cref{app:general_levy_type_process}. By \cref{cor:error_bound}, the sample error bound degenerates for diffusion-only forward processes with affine drift and additive Gaussian noise when $p_0$ is heavy-tailed:
\[
\KL(p_0\|q_T)\le\KL(p_T\|q_0)+\mathfrak{L}[\eta_t]=\infty,
\]
because the terminal marginal $p_T$ retains heavy tails, making the terminal mismatch against a Gaussian reference $q_0$ infinite and the KL-based guarantee vacuous. 

\begin{remark}
    The failure mode of \cref{thm:heavy_tail} is intrinsic to any finite physical time Gaussian noising path and therefore cannot be eliminated by a mere time reparameterization. 
    Consider compressing $t\in[0,\infty)$ into $\tau\in[0,1)$ via an increasing $C^1$ map $g:[0,1)\to[0,\infty)$ with $g(\tau)\to\infty$ as $\tau\to 1$, and the reparametrized process $\vy_\tau:=\vx_{g(\tau)}$. The time-changed process satisfies
    \[
    \dif \vy_\tau = g'(\tau) \vb_{g(\tau)}(\vy_\tau) \dif \tau + \sqrt{g'(\tau)}\mSigma_{g(\tau)} \dif \vw_\tau,
    \]
    and since $g(\tau)=\int_0^\tau g'(s) \dif s\to\infty$ as $\tau\to 1$, the factor $g'$ cannot remain bounded and the
    dynamics become stiff near $\tau\to 1$. As a result, any implementable numerical scheme must effectively
    truncate at $\tau=1-\varepsilon$ for some $\varepsilon>0$ (or equivalently, at finite physical time $t=g(1-\varepsilon)$),
    where the KL divergence remains infinite.
\end{remark}

This exposes a concrete failure mode of diffusion-only denoising Markov models for heavy-tailed data and motivates enlarging the design space to L\'evy-type forward processes with heavy-tailed marginals, \emph{e.g.}, the Ornstein-Uhlenbeck process driven by an $\alpha$-stable L\'evy process, whose stationary law is $\alpha$-stable. In such cases one can simply choose this heavy-tailed stationary distribution as the reference $q_0$ to match the tails, restoring a finite and informative terminal bound and making \cref{cor:error_bound} meaningful. For empirical evidence on using $\alpha$-stable L\'evy forwards with real-world heavy-tailed data, we refer to the results in \citep{yoon2023score}, without repeating those experiments in this paper. This provides a principled rationale for adopting general L\'evy-type processes in our framework when heavy-tailed data may arise. This framework is practically attractive when $p_{t|0}$, or a surrogate such as $\gI[p_{t|0}]$, is available in closed form or can be approximated efficiently.

To end this section, we provide insights into the connections between denoising Markov models and generator matching~\citep{holderrieth2024generator} in the following remark.

\begin{remark}[Connections to Generator Matching~\citep{holderrieth2024generator}]
    The primary conceptual distinction between our proposed framework of denoising Markov models and the generator matching approach of~\citet{holderrieth2024generator} lies in the direction and strategy used to specify and estimate the underlying Markov dynamics.

    In denoising Markov models, we specify the forward generator $\gL_t$ explicitly and subsequently estimate the backward generator $\cev \gL_t$ through the parameterization of a function $\varphi_t$, guided by conditional densities $p_{t|0}$. In contrast, the generator matching method specifies the form of the backward generator $\cev \gL_t$ first and then directly learns its coefficients using a prescribed probability path $(p_t)_{t\in[0,T]}$ (or equivalently, $(\cev p_t)_{t\in[0,T]}$).
    
    Specifically, generator matching assumes that for each terminal condition $\cev \vx_T \sim \cev p_T$ (or equivalently, $\vx_0 \sim p_0$), the conditional probability path $(\cev p_{t|T}(\cdot|\cev \vx_T))_{t\in[0,T]}$ (or equivalently, $(p_{T-t|0}(\cdot|\vx_0))_{t\in[0,T]}$) is pre-determined and assumed to be governed by a general Lévy-type conditional backward generator of the form:
    \begin{equation}
        \begin{aligned}
            \cev \gL_{t|T}^{\cev \vx_T} =& \cev \vb_{t|T}(\vx|\cev \vx_T) \cdot \nabla f(\vx) + \dfrac{1}{2} \cev \mD_{t|T}(\vx|\cev \vx_T) : \nabla^2 f(\vx) \\
            &+ \int_{\R^d \backslash \{\vx\}} \left(f(\vy) - f(\vx) - (\vy - \vx) \cdot \nabla f(\vx) \vone_{B(\vx, 1)}(\vy - \vx)\right) \cev \lambda_{t|T}(\dif \vy, \vx|\cev \vx_T),
        \end{aligned}
        \label{eq:conditional_generator_matching}
    \end{equation}
    where the triplet $(\cev \vb_{t|T}(\vx|\cev \vx_T), \cev \mD_{t|T}(\vx|\cev \vx_T), \cev \lambda_{t|T}(\dif \vy, \vx|\cev \vx_T))$ is assumed to be available in closed form formulas, derived directly from Kolmogorov's forward equation (\cref{thm:kolmogorov_forward}):
    $$
        \partial_t \cev p_{t|T}(\cdot|\cev \vx_T) = \left(\cev \gL_{t|T}^{\cev \vx_T}\right)^* \cev p_{t|T}(\cdot|\cev \vx_T).
    $$

    Then, leveraging the linearity of the generator, the unconditional backward generator $\cev \gL_t$ can be expressed as the posterior expectation of this conditional generator (\emph{cf.} \citep[Proposition~1]{holderrieth2024generator}):
    \begin{equation}
        \begin{aligned}
            &\cev \gL_t f(\vx) = \E_{\cev \vx_T \sim \cev p_{T|t}(\cdot|\vx)}\left[\cev \gL_{t|T}^{\cev \vx_T} f(\vx)\right]\\
            =& \E_{\cev \vx_T \sim \cev p_{T|t}(\cdot|\vx)}\left[\cev \vb_{t|T}(\vx|\cev \vx_T)\right] \cdot \nabla f(\vx) + \dfrac{1}{2} \E_{\cev \vx_T \sim \cev p_{T|t}(\cdot|\vx)}\left[\cev \mD_{t|T}(\vx|\cev \vx_T)\right] : \nabla^2 f(\vx) \\
            +& \int_{\R^d \backslash \{\vx\}} \left(f(\vy) - f(\vx) - (\vy - \vx) \cdot \nabla f(\vx) \vone_{B(\vx, 1)}(\vy - \vx)\right) \E_{\cev \vx_T \sim \cev p_{T|t}(\cdot|\vx)}\left[\cev \lambda_{t|T}(\dif \vy, \vx|\cev \vx_T)\right],
        \end{aligned}
        \label{eq:generator_matching_backward}
    \end{equation}
    \emph{i.e.}, the L\'evy triple $(\cev \vb_t, \cev \mD_t, \cev \lambda_t)$ of the backward process is given by the posterior expectations of the L\'evy triple $(\cev \vb_{t|T}(\cdot|\cev \vx_T), \cev \mD_{t|T}(\cdot|\cev \vx_T), \cev \lambda_{t|T}(\cdot|\cev \vx_T))$ of the conditional generator $\cev \gL_{t|T}^{\cev \vx_T}$ given $\cev \vx_T$, as
    \begin{gather*}
        \cev \vb_t(\vx) = \E_{\cev \vx_T \sim \cev p_{T|t}(\cdot|\vx)}\left[\cev \vb_{t|T}(\vx|\cev \vx_T)\right], \quad \cev \mD_t(\vx) = \E_{\cev \vx_T \sim \cev p_{T|t}(\cdot|\vx)}\left[\cev \mD_{t|T}(\vx|\cev \vx_T)\right],\\
        \text{and}\quad \cev \lambda_t(\dif \vy, \vx) = \E_{\cev \vx_T \sim \cev p_{T|t}(\cdot|\vx)}\left[\cev \lambda_{t|T}(\dif \vy, \vx|\cev \vx_T)\right].
    \end{gather*}

    The generator matching training strategy thus involves learning these posterior expectations~\eqref{eq:generator_matching_backward} by formulating suitable regression problems. For instance, the drift term $\cev \vb_t(\vx)$ can be learned via the minimization of the following regression-based loss:
    \begin{equation*}
        \begin{aligned}
            \mathfrak{L}_\mathrm{GM}^{\vb}[\cev \vb^\theta_t] =& \int_0^T \E_{\cev \vx_t \sim \cev p_t}\left[\E_{\cev \vx_T \sim \cev p_{T|t}(\cdot|\cev \vx_t)}\left[\left\|\cev \vb^\theta_t(\cev \vx_t) - \cev \vb_{t|T}(\cev \vx_t|\cev \vx_T)\right\|^2 \right]\right] \dif t\\
            =& \int_0^T \E_{\vx_0 \sim p_0}\left[\E_{\vx_{T-t} \sim p_{T-t|0}(\cdot|\vx_0)}\left[\left\|\cev \vb^\theta_t(\vx_{T-t}) - \cev \vb_{t|T}(\vx_{T-t}|\vx_0)\right\|^2 \right]\right] \dif t,
        \end{aligned}
    \end{equation*} 
    with samples $\cev \vx_t\sim\cev p_{t|T}(\cdot|\cev \vx_T)$ obtained by simulating the conditional generator~\eqref{eq:conditional_generator_matching}. After obtaining the estimated Lévy triple $(\cev\vb_t^\theta,\cev\mD_t^\theta,\cev\lambda_t^\theta)$, inference proceeds via simulation from the resulting estimated backward generator.
\end{remark}

\section{Experiments}
\label{sec:experiments}

In this section, we present several experiments validating our theoretical results. Given the extensive empirical literature on continuous and discrete diffusion models as special cases of our framework (\emph{cf.}~\cref{ex:continuous_diffusion,ex:discrete_diffusion}), we focus on novel examples of Markov processes that have not yet been extensively explored in the context of denoising Markov models. The primary purpose of these experiments is to demonstrate the generality and flexibility of our theoretical approach. Given the established success of diffusion-based models, our objective is not to outperform existing methods; instead, we aim to highlight the viability of alternative formulations by applying them to conceptually simple tasks.

We will consider two examples of denoising Markov models: (1) a diffusion model with geometric Brownian motion as the forward process, and (2) a ``jump model'' with a jump process in $\R^d$ as the forward process. In both cases, we will demonstrate that the learned backward processes can effectively approximate the target distribution.

\subsection{Diffusion Model with Geometric Brownian Motion}
\label{sec:geometrical}

In this illustrative example, we consider geometric Brownian motion (GBM) as our forward process. The forward dynamics follow:
\begin{equation*}
    \dif \vx_t = \vx_t \odot \mSigma \dif \vw_t,
\end{equation*}
where $\odot$ denotes the element-wise product.
Through a change of variables, this equation admits a closed-form solution:
\begin{equation}
    \vx_t = \vx_0 \odot \exp\left(\mSigma \vw_t - \dfrac{1}{2} \diag(\mSigma \mSigma^\top) t\right),
    \label{eq:geometric_solution}
\end{equation}
where $\diag(\mSigma \mSigma^\top)$ denotes the vector with the diagonal elements of the matrix $\mSigma \mSigma^\top$. 
The diffusion matrix in this case is given explicitly by $\mD(\vx_t) = \diag(\vx_t)\mSigma\mSigma^\top\diag(\vx_t)$. Thus, this choice clearly fits within our denoising Markov model framework, with a diffusion process as the forward dynamics. Unlike the standard diffusion models (\cref{ex:continuous_diffusion}), the diffusion matrix here is neither constant nor spatially homogeneous, depending explicitly on the current state $\vx_t$.

Following the framework in~\cref{sec:diffusion}, we parameterize the estimated score function $\hat \vs_t(\vx)$ by a neural network with parameters $\theta$ and optimize the following score-matching loss~\eqref{eq:diffusion_loss}:
\begin{equation*}
    \begin{aligned}
        \mathfrak{L}_\mathrm{SM}[\hat \vs_t] &= \E_{\vx_0 \sim p_0}\bigg[\int_0^T \E_{\vx_t \sim p_{t|0}(\cdot|\vx_0)}\bigg[\\
        &\dfrac{1}{2} \mD(\vx_t) : \left(\hat \vs_t^\theta(\vx_t) - \nabla \log p_{t|0}(\vx_t|\vx_0)\right) \left(\hat \vs_t^\theta(\vx_t) - \nabla \log p_{t|0}(\vx_t|\vx_0)\right)^\top\bigg] \dif t\bigg].
    \end{aligned}
\end{equation*}
Using the explicit solution~\eqref{eq:geometric_solution}, the gradient of the conditional log-density is given by:
\begin{equation}
    \nabla \log p_{t|0}(\vx_t|\vx_0) = - \vx_t^{-1} \odot \left(1 + \dfrac{1}{t} (\mSigma \mSigma^\top)^{-1} \left(\log \vx_t - \log \vx_0 + \dfrac{t}{2} \diag\left(\mSigma \mSigma^\top\right) \right)\right),
    \label{eq:geometric_score}
\end{equation}
with the vector operations taken element-wise. Note that, when $\mD(\vx_t)$ is not diagonal, this loss no longer simplifies to a standard mean squared error, highlighting the generality of our method.

Due to the positivity-preserving nature of GBM, our experiments focus on the positive orthant $\R^d_+$. We choose the target distribution as the absolute value of mixtures of Gaussian distributions. We select a sufficiently large terminal time $T$ to ensure that the forward dynamics~\eqref{eq:geometric_solution} approach the origin closely with high probability. Correspondingly, the backward dynamics, derived from~\cref{eq:backward_diffusion}, read:
\begin{equation*}
    \dif \vy_t = \big(\mD(\vy_t) \hat \vs_{T-t}(\vy_t) + \nabla \cdot \mD(\vy_t) \big) \dif t + \vy_t \odot \mSigma \dif \vw_t.
\end{equation*}

\begin{figure}[!t]
    \centering
    \begin{subfigure}{0.45\textwidth}
        \centering
        \includegraphics[width=\textwidth]{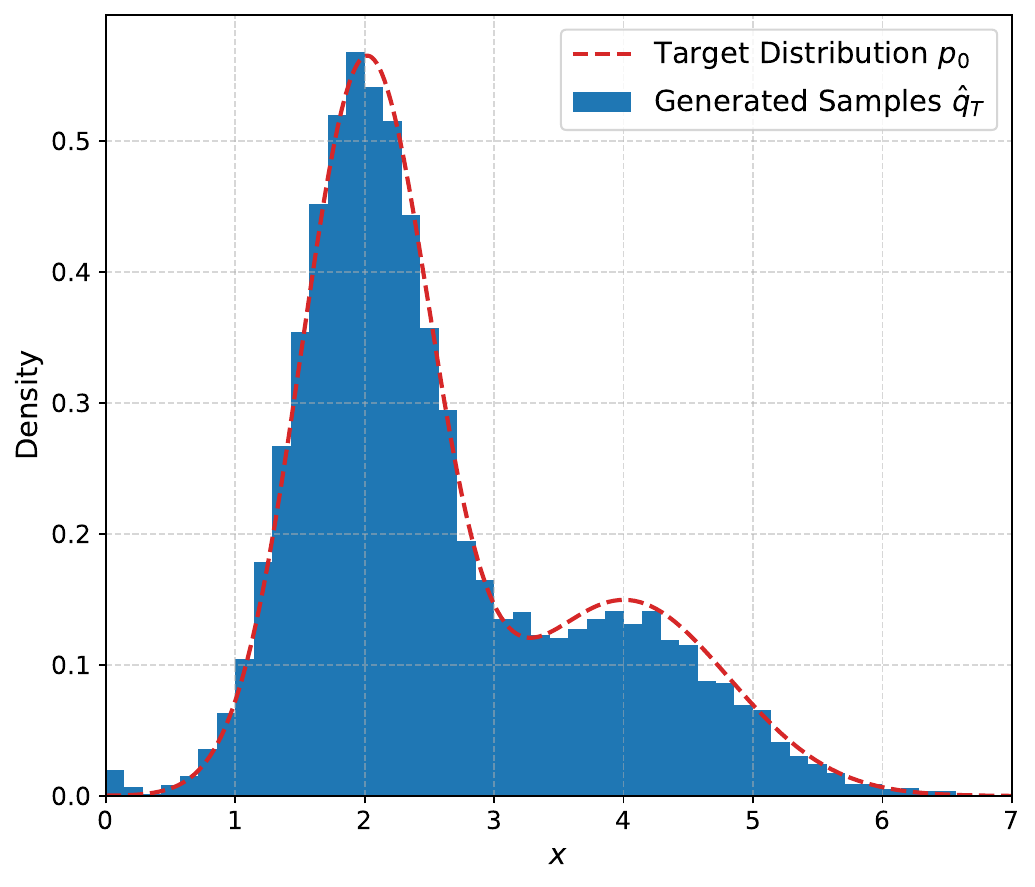}
        \caption{1-D Case.}
    \end{subfigure}
    \hspace{0.02\textwidth}
    \begin{subfigure}{0.45\textwidth}
        \centering
        \includegraphics[width=\textwidth]{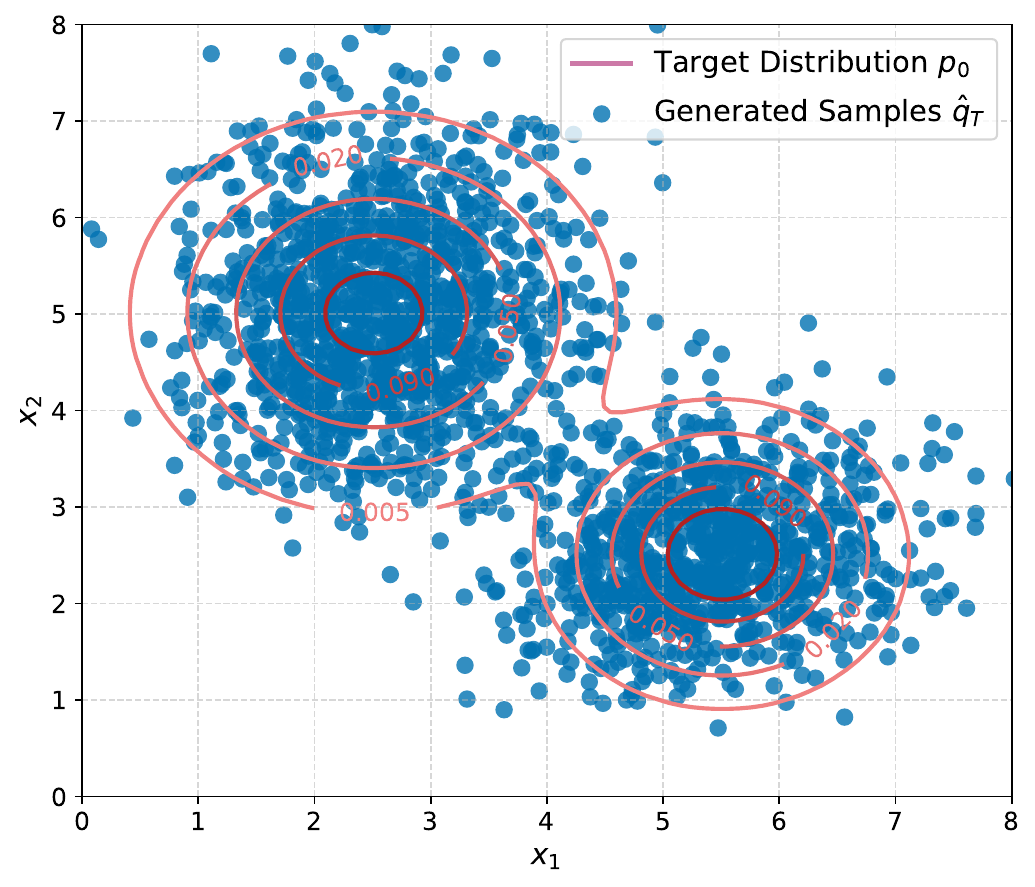}
        \caption{2-D Case.}
    \end{subfigure} 
    \caption{Empirical results of denoising Markov models using geometric Brownian motion as the forward process. The generated samples from $\hat q_T$ (in {\color{seabornblue} blue}) closely match the target distribution $p_0$ (dashed line in 1-D and contour lines in 2-D in {\color{seabornred} red}), highlighting the effectiveness of the learned backward process.}
    \label{fig:geometric}
\end{figure}

\begin{figure}[!t]
    \centering
    \begin{subfigure}{0.9\textwidth}
        \centering
        \includegraphics[width=\textwidth]{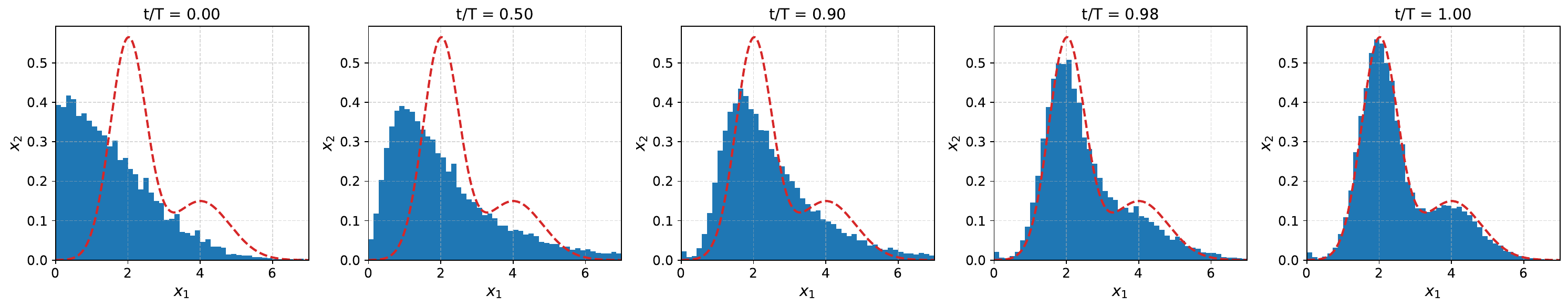}
        \caption{1-D Case.}
    \end{subfigure}
    
    \begin{subfigure}{0.9\textwidth}
        \centering
        \includegraphics[width=\textwidth]{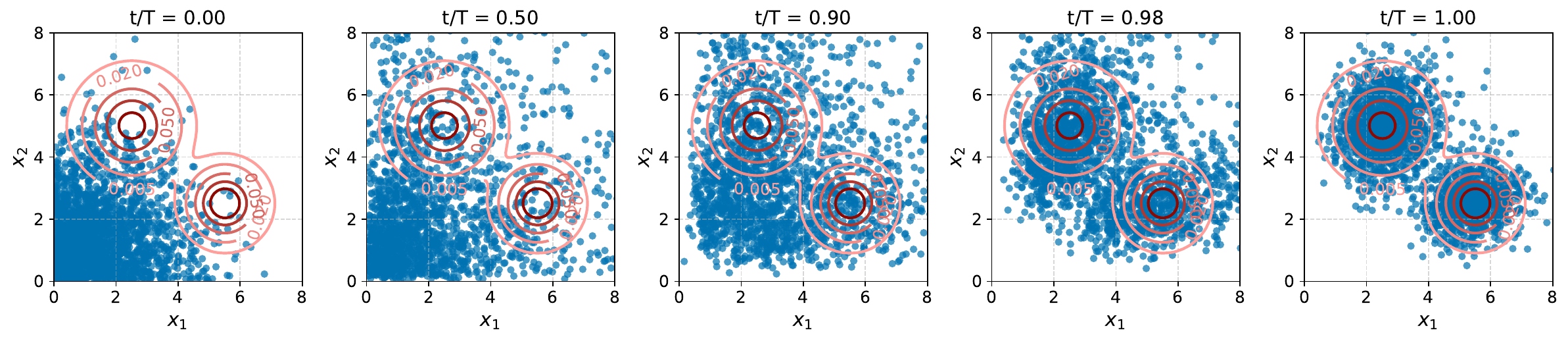}
        \caption{2-D Case.}
    \end{subfigure} 
    \caption{Visualization of generated backward trajectories using geometric Brownian motion as the forward process.  Samples generated via the learned backward dynamics (in {\color{seabornblue} blue}) evolve from the initial distribution $q_0$ with the estimated score function $\hat \vs_t(\vx)$ into the target distribution $p_0$ (in {\color{seabornred} red}), further validating the theoretical robustness and practical effectiveness of our denoising Markov models.}
    \label{fig:geometric_trajectories}
\end{figure}

To demonstrate the effectiveness of our method, we conduct experiments in both one and two-dimensional cases (\cref{fig:geometric}). For both cases, we use an MLP with 5 layers and 128 hidden units for the score function $\hat \vs_t(\vx)$ with the input simply being the concatenation of the state $\vx_t$ and the time $t$.
We optimize the score-matching loss~\eqref{eq:diffusion_loss} using the Adam optimizer with a learning rate of $10^{-3}$.
In the 1-D scenario, we set $\Sigma = 1$ and use the following target distribution:
\begin{equation*}
    p_0(y) = 0.7 \gN(y; 2, 0.25) + 0.3 \gN(y; 4, 0.64),\ \forall \quad y \in \R_+.
\end{equation*}
In the 2D scenario, we use:
\begin{equation*}
    \mSigma = \begin{bmatrix} 1.0 & 0.4 \\ 0.2 & 1.0 \end{bmatrix}, \ \text{and} \ p_0(\vy) = 0.6 \gN\left(\vy; \begin{bmatrix} 2.5 \\ 5 \end{bmatrix}, \mI\right) + 0.4 \gN\left(\vy; \begin{bmatrix} 5.5 \\ 2.5 \end{bmatrix}, \mI\right), \ \forall \vy \in \R_+^2, 
\end{equation*}
where $\gN(\cdot; \vmu, \mSigma)$ denotes the Gaussian density with mean $\vmu$ and covariance $\mSigma$. We approximate the initial distribution $q_0$ by sampling from $\gN(\mathbf{0},2\mI)$ and then applying the absolute-value transformation.

As evidenced in~\cref{fig:geometric}, our model achieves satisfactory performance: the generated samples closely approximate the target distributions in both dimensions. Despite the potential singularities in the score function~\eqref{eq:geometric_score} as $t \to 0$ or as $\|\vx_t\|\to 0$, training is notably stable, requiring only moderate neural-network sizes. Furthermore, \cref{fig:geometric_trajectories} illustrates the evolution of the generated trajectories during inference, demonstrating how samples move smoothly from the initial noise toward the complex target distribution along a different path from the pure Brownian motion (Variance-Exploding SDE) or OU process (Variance-Preserving SDE).
This provides empirical support for the soundness of the backward process learned via score-matching and emphasizes our framework's flexibility in accommodating more general Markov dynamics beyond standard diffusion processes.

\subsection{Jump Model}
\label{sec:jump_experiment}

In this example, we propose a pure jump process-based model as a new type of denoising Markov model. Specifically, we consider a forward process consisting exclusively of jumps, \emph{i.e.}, we set the drift $\vb_t(\vx) \equiv \vzero$ and the diffusion coefficient $\mD_t(\vx) \equiv \vzero$.
For simplicity, we assume that the L\'evy measure $\lambda_t(\dif \vy, \vx)$ admits a density $\lambda_t(\vy, \vx)$ with respect to the Lebesgue measure and satisfies the following integrability condition: 
\begin{equation*}
    \int_{B(\vx, 1)} \|\vy-\vx\| \lambda_t(\dif \vy, \vx) < +\infty.
\end{equation*}
This ensures that small jumps in~\cref{eq:generator_levy_type} are integrable and thus can be naturally included with the drift term, which vanishes in this case.

Under these assumptions, the forward generator simplifies to:
\begin{equation*}
    \gL_t f(\vx) = \int_{\R^d \backslash \{\vx\}} \left(f(\vy) - f(\vx)\right) \lambda_t(\vy, \vx) \dif \vy.
\end{equation*}
This corresponds to a pure-jump L\'evy-type process represented by the stochastic integral:
\begin{equation}
    \vx_t = \vx_0 + \int_0^t \int_{\R^d} (\vy - \vx_{s^-}) N[\lambda](\dif s, \dif \vy),
    \label{eq:jump_forward}
\end{equation}
where $N[\lambda](\dif s, \dif \vy)$ denotes the Poisson random measure with compensator $\lambda_s(\vy, \vx_{s^-}) \dif \vy \dif s$. We refer readers to~\cref{sec:jump} for more details on this formulation.

Correspondingly, \cref{ass:K} yields a simplified estimated backward generator:
\begin{equation*}
    \gK_t f(\vx) = \int_{\R^d \backslash \{\vx\}} \left(f(\vy) - f(\vx)\right) \cev \lambda^\varphi_t(\vy, \vx) \dif \vy,
\end{equation*}
where the estimated L\'evy measure $\cev \lambda^\varphi_t(\vy, \vx) \dif \vy$ is defined explicitly via the score function for the jump part:
\begin{equation*}
    \cev \lambda^\varphi_t(\vy, \vx) =  \hat s_{T-t}^{\mathrm{jump}}(\vx, \vy) \lambda_{T-t}(\vx, \vy) = \dfrac{\varphi_{T-t}(\vy)}{\varphi_{T-t}(\vx)} \lambda_{T-t}(\vx, \vy).
\end{equation*}
This estimated L\'evy measure is both temporally and spatially inhomogeneous.
In contrast to general Lévy-type processes, this jump-only model eliminates the need to estimate any diffusion-based score function. The score-matching loss from~\eqref{eq:general_levy_score_matching} then reduces to:
\begin{equation}
    \begin{aligned}
        \mathfrak{L}_\mathrm{SM}[\hat s_t^{\mathrm{jump}}] = &\E_{\vx_0 \sim p_0}\Bigg[\int_0^T \E_{\vx_t \sim p_{t|0}(\cdot| \vx_0)} \Bigg[\\
        &\int_{\R^d} \left(\hat s_t^{\mathrm{jump}}(\vx_t, \vy) - \dfrac{p_{t|0}(\vy|\vx_0)}{p_{t|0}(\vx_t|\vx_0)} \log \hat s_t^{\mathrm{jump}}(\vx_t, \vy)\right) \lambda_t(\vx_t, \vy) \dif \vy \Bigg] \dif t\Bigg].
    \end{aligned}
    \label{eq:jump_score_matching}
\end{equation}
Once the jump-based score function $\hat s_t^{\mathrm{jump}}$ is learned, we simulate the backward process by:
\begin{equation}
    \vy_t = \vy_0 + \int_0^t \int_{\R^d} (\vy - \vy_{s^-}) N[\cev \lambda^\varphi](\dif s, \dif \vy),
    \label{eq:jump_backward}
\end{equation}
where $N[\cev \lambda^\varphi](\dif s, \dif \vy)$ denotes a Poisson random measure with compensator $\cev \lambda^\varphi_t(\vy,\vy_{s^-}) \dif \vy \dif s$.

We highlight that this jump model generalizes discrete diffusion models discussed in~\cref{sec:jump}, extending them from finite state spaces to continuous domains.  Instead of continuous-time Markov chains in the finite-state setting, the forward and backward processes are now stochastic integrals with respect to Poisson random measures, and the score-matching loss is an integral instead of a sum. These differences pose computational challenges to the implementation of the model:
\begin{itemize}
    \item {\bfseries Forward Simulation:} By choosing a space and time-homogeneous Lévy measure $\lambda(\vy - \vx)$, the forward process~\eqref{eq:jump_forward} reduces to a compound Poisson process. Specifically, in order to sample $\vx_t$ given $\vx_0$, one first samples the jump count from a Poisson distribution with mean $t \int_{\R^d}\lambda(\vy-\vx)\dif\vy$, then samples corresponding jump locations from $\lambda(\cdot)$.
    \item {\bfseries Numerical Integration:} If the L\'evy measure $\lambda(\vy - \vx) \dif \vy$ is further isotropic as in~\cref{ex:alpha_stable}, the integral over $\lambda(\|\vy - \vx\|) \dif \vy$ in the score-matching loss~\eqref{eq:jump_score_matching} is efficiently approximated by Monte Carlo integration using samples from the isotropic kernel
    $\lambda(\|\cdot\|)$.
    \item {\bfseries Backward Simulation:} 
    The backward simulation~\eqref{eq:jump_backward} is nontrivial due to the space-time inhomogeneity of the estimated Lévy measure $\cev\lambda^\varphi_t(\vy,\vx)$. Specifically, computing the integral:
    \begin{equation}
        \gJ[\cev \lambda^\varphi_t(\cdot, \vy_t^{(k)})] = \int_{\R^d} \cev \lambda^\varphi_t(\vy, \vy_t^{(k)}) \dif \vy = \int_{\R^d} \hat s_{T-t}^{\mathrm{jump}}(\vy_t^{(k)}, \vy) \lambda(\|\vy - \vy_t^{(k)}\|) \dif \vy
        \label{eq:jump_integral}
    \end{equation}
    for each sample $\vy_t^{(k)}$ at time $t$ is computationally expensive. 
    However, noting that the score function satisfies:
    \begin{equation*}
        \hat s_t^{\mathrm{jump}}(\vy_t^{(k)}, \vy) = \dfrac{\hat s_t^{\mathrm{jump}}(\vy^{\mathrm{ref}}, \vy)}{\hat s_t^{\mathrm{jump}}(\vy^{\mathrm{ref}}, \vy_t^{(k)})},
    \end{equation*}
    with a fixed reference point $\vy^{\mathrm{ref}}$, one can reuse numerical integrations as
    \begin{equation*}
        \gJ[\cev \lambda^\varphi_t(\cdot, \vy_t^{(k)})] = \dfrac{\gJ[\cev \lambda^\varphi_t(\cdot, \vy^{\mathrm{ref}})]}{\hat s_{T-t}^{\mathrm{jump}}(\vy^{\mathrm{ref}}, \vy_t^{(k)})},
    \end{equation*}
    significantly reducing computational costs across samples.
    Subsequently, given the integral~\eqref{eq:jump_integral}, one can sample the jump count from a Poisson distribution with mean $\kappa\gJ[\cev \lambda^\varphi_t(\cdot, \vy_t^{(k)})]$ and use rejection sampling to draw jump locations according to the estimated backward Lévy measure $\cev\lambda_t^\varphi(\vy,\vy_t^{(k)})$, by first estimating $\sup_{\vy \in \R^d} \hat s_t^{\mathrm{jump}}(\vy_t^{(k)}, \cdot)$ and then drawing samples from the kernel $\lambda(\|\vy - \vy_t^{(k)}\|)$.
\end{itemize}

\begin{figure}[!t]
    \centering
    \begin{subfigure}{0.98\textwidth}
        \centering
        \includegraphics[width=\textwidth]{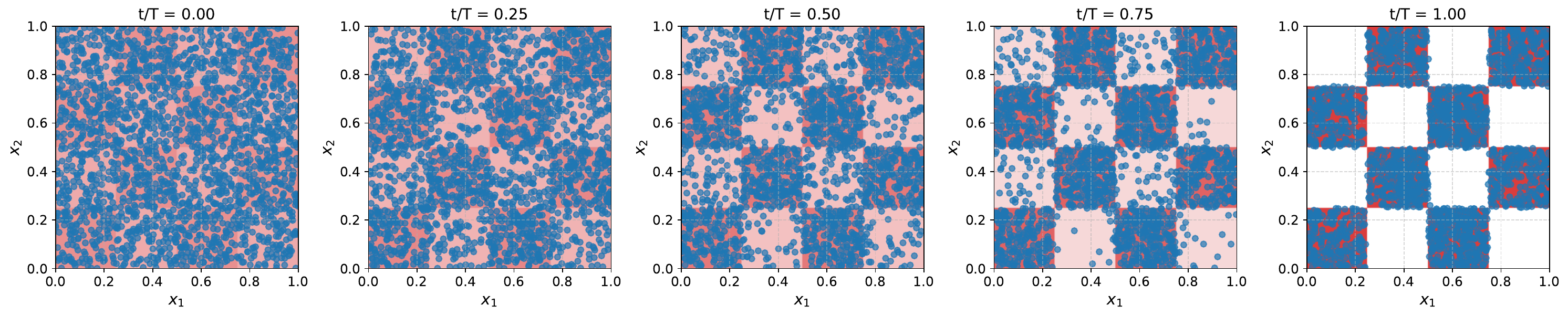}
        \caption{Chessboard.}
    \end{subfigure}
    
    \begin{subfigure}{0.98\textwidth}
        \centering
        \includegraphics[width=\textwidth]{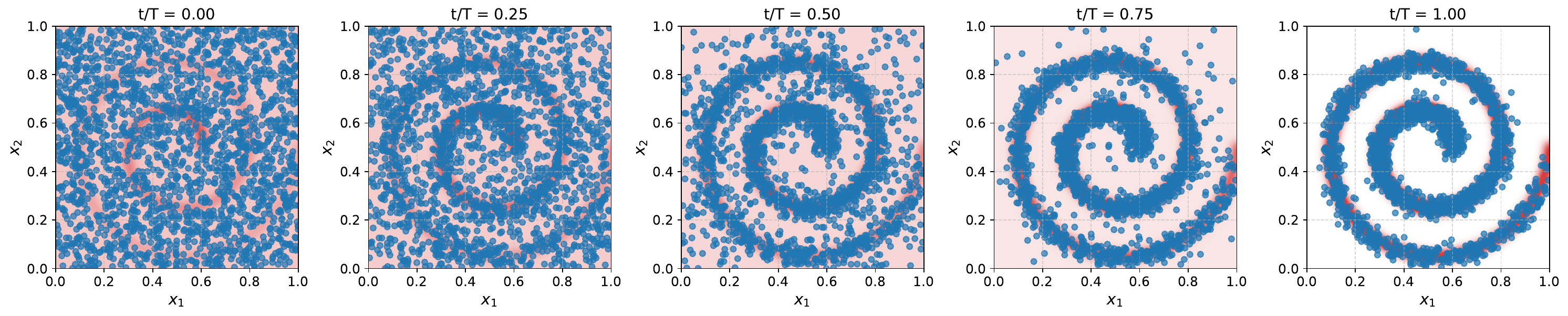}
        \caption{Swiss Roll.}
    \end{subfigure} 

    \begin{subfigure}{0.98\textwidth}
        \centering
        \includegraphics[width=\textwidth]{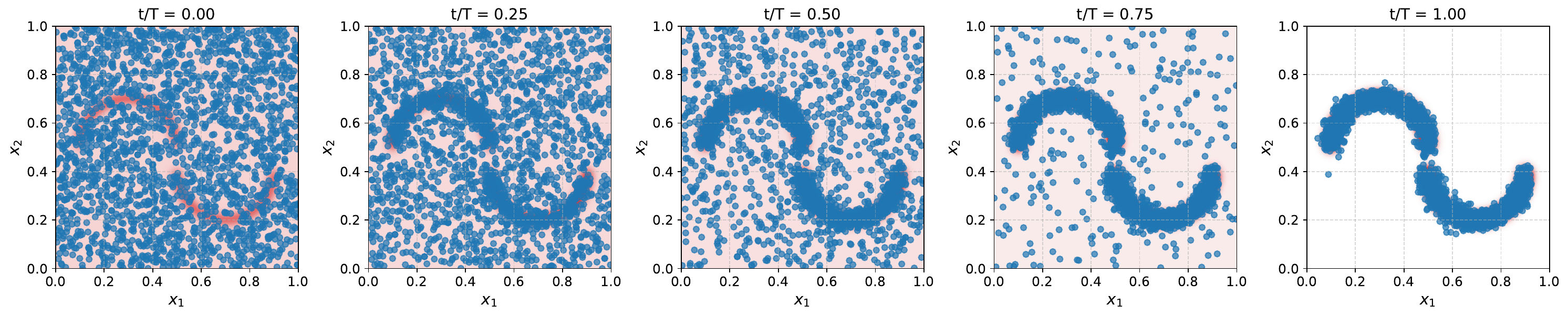}
        \caption{Moons.}
    \end{subfigure} 

    \caption{Visualization of generated backward trajectories using the jump process as the forward process.
    Samples from $q_t$ generated via the learned backward dynamics (in {\color{seabornblue} blue}) are compared with the corresponding distribution $p_{T-t}$ from the forward process (in {\color{seabornred} red}). Results for three distributions (Chessboard, Swiss roll, Moons) demonstrate close agreement between the generated and target distributions.}
    \label{fig:jump}
\end{figure}

In our empirical experiments, we consider a 2-D spatial domain defined on the torus $\mathbb{T}^2 = \R^2 / \sZ^2$. The L\'evy measure $\lambda_t(\vy, \vx) \dif \vy$ is chosen as a time and space-homogeneous, isotropic Gaussian kernel:
\begin{equation*}
    \lambda_t(\vy, \vx) = \gN(\vy - \vx; \vzero, \mI) = \dfrac{1}{2\pi} \exp\left(-\dfrac{\|\vy - \vx\|^2}{2}\right).
\end{equation*}
We select a sufficiently large time horizon $T$ such that the forward jump process converges towards the uniform distribution on $\mathbb{T}^2$.
To efficiently compute the integral term~\eqref{eq:jump_integral}, we leverage the periodic structure of the domain and the property of the Gaussian kernel by employing numerical quadrature based on the fast Fourier transform. For all cases, we use an MLP with 5 layers and 128 hidden units to approximate the score function $\hat s_t^{\mathrm{jump}}(\vx, \vy)$. We optimize the score-matching loss~\eqref{eq:jump_score_matching} using the Adam optimizer with a learning rate of $10^{-3}$.
Additionally, we use Fourier features for the input $\vx$ and $\vy$ to account for the periodic boundary conditions of the torus in the neural network architectures used to approximate the score functions. 

The empirical results for three distinct 2-D target distributions (Chessboard, Swiss Roll, and Moons) are presented in Figure 4, illustrating the evolution of samples from the estimated backward jump dynamics. Specifically, we visualize 2048 backward-generated samples at different intermediate time steps and compare them with the corresponding forward distributions $p_{T-t}$. As demonstrated in Figure 4, the backward jump trajectories initially start from a uniform distribution at $t=0$, gradually capturing intricate features of the target distributions as time progresses. By the end of the backward simulation ($t=T$), the generated samples accurately reconstruct the complex structures of each target distribution, verifying the effectiveness of our jump-based denoising Markov model.

Unlike diffusion models, which typically begin their backward evolution from Gaussian distributions, our jump-based model starts from the uniform distribution, highlighting a distinct and versatile approach within generative modeling frameworks. We also refer readers to~\citep[Figure~2 and Appendix~E]{holderrieth2024generator} for a comparison of our denoising approach with the generator matching framework in the context of pure jump processes. Our denoising Markov models do not directly work with the Kolmogorov's forward equation to obtain the conditional generator $\cev \gL_{t|T}^{\cev \vx_T}$ through its coefficients $\cev \lambda_{t|T}$, but instead leverage the score-matching technique to learn the backward process through the score function $\hat s_t^\text{jump}$.

\section{Discussion}
\label{sec:conclusion}

In this paper, we introduced a novel framework for denoising Markov models, significantly extending classical diffusion models to a broader class of Markovian dynamics. By establishing connections between the backward process and the generalized Doob's $h$-transform, we provided a rigorous theoretical foundation and a versatile methodology for constructing denoising Markov models from general Lévy-type forward processes. Under suitable regularity conditions, we justified the use of standard score-matching techniques for efficiently training these models. Additionally, we discussed practical implementation strategies and elaborated on connections with the generator matching approach recently proposed by~\citet{holderrieth2024generator}.
Through empirical experiments, we explored several novel instances of our general framework, including geometric Brownian motions and jump processes, highlighting their effectiveness in generative modeling scenarios that had not been previously studied. These experimental results demonstrated the flexibility and robustness of our approach, validating our theoretical insights.

For future work, several promising directions remain open. A key area of interest is extending the jump model to high-dimensional problems by introducing sparsity structures into the Lévy measure \(\lambda_t(\vy, \vx)\), following recent advances by~\citet{lou2023discrete}. This would enable efficient large-scale sampling and mass transportation, potentially unlocking powerful new generative modeling capabilities. Furthermore, a deeper theoretical investigation into how the choice of forward process affects training efficiency, generalization performance, and model robustness remains an exciting avenue. Specifically, it would be beneficial to systematically explore the optimal selection of forward processes tailored to specific structures present in target distributions, such as multimodality or heavy tails.

\acks{The authors thank the editor and the anonymous reviewers for their valuable comments and suggestions. Yinuo Ren and Lexing Ying were supported by the National Science Foundation under Grant No. DMS-2208163. This material is based upon work supported by the National Science Foundation under Grant No. CHE-2441297.
}

\newpage
\vskip 0.2in
\bibliography{25-0693}

\newpage
\appendix
\section{Mathematical Approach to Denoising Markov Models}
\label{app:math}

In this section, we assume that $E$ is a locally compact, separable Hausdorff space, equipped with a base Radon measure $(E, \gB(E), \mu)$, where $\gB(E)$ denotes the Borel $\sigma$-algebra on $E$. 
Let $(x_t)_{t\in \sT}$ be a time-inhomogeneous Markov process on $E$, defined on the probability space $(\Omega, \gF, \P)$, where $\sT$ is the time index set, either $\R$ or $[0, +\infty)$. 
For notational simplicity, we will write $\P(\cdot)$ instead of $\P(\{\omega \in \Omega \mid \cdot\})$ when there is no risk of confusion.

\subsection{Feller Evolution Systems}

We begin by introducing the concept of time-evolution operators and evolution systems, which form a family of linear operators closely associated with Markov processes.

\begin{definition}[Time-Evolution Operators]
    A family of linear operators $(U_{t,s})_{s\leq t, s,t\in \sT}$ is called a \emph{time-evolution operator family}, or simply an \emph{evolution system}, if the following conditions hold:
    \begin{enumerate}[label=(\arabic*)]
        \item $U_{t,t} = \mathrm{id}$ for any $t \in \sT$;
        \item $U_{t,s} = U_{t,r} U_{r,s}$ for any $s \leq r \leq t$, with $s, r, t \in \sT$.
    \end{enumerate}
    \label{def:evolution_system}
\end{definition}

This definition is motivated by interpreting the operator $U_{t,s}$ as the conditional expectation operator associated with the Markov process $(x_t)_{t\in \sT}$. Specifically, for any $f \in B_b(E)$, the space of bounded Borel measurable functions on $E$, and any $s \leq t$, we define the operator $U_{t,s}$ by
\begin{equation}
    U_{t,s} f(x) = \E\left[f(x_t) \mid x_s = x\right].
    \label{eq:evolution_markov}
\end{equation}
It is straightforward to verify that $U_{t,s}$ satisfies both conditions in \cref{def:evolution_system}, and hence qualifies as an evolution system. Moreover, the operator family $(U_{t,s})$ defined in \eqref{eq:evolution_markov} possesses the following properties:
\begin{enumerate}[label=(\arabic*)]
    \item (Positivity-Preserving Property) $U_{t,s} f \geq 0$, if $f \geq 0$ and $f \in B_b(E)$, for any $s \leq t$, $s, t \in \sT$;
    \item (Contractivity) $\|U_{t,s} f\|_\infty \leq \|f\|_\infty$, for any $f \in B_b(E)$, $s \leq t$, $s, t \in \sT$;
    \item (Conservation) $U_{t,s} 1 = 1$, for any $f \in B_b(E)$, $s \leq t$, $s, t \in \sT$.
\end{enumerate}

We also define a special class of evolution systems, namely the \emph{Feller evolution system}~\citep{bottcher2014feller}, as follows:
\begin{definition}[Feller Evolution System]
    An evolution system $(U_{t,s})_{s \leq t, s,t\in \sT}$ is called a \emph{Feller evolution system} if, for any $f \in C_0(E)$, the space of continuous functions vanishing at infinity\footnote{A function $f$ is said to \emph{vanish at infinity} if for any $\epsilon > 0$, there exists a compact set $K \subset E$ such that $|f(x)| < \epsilon$ for all $x \in E \setminus K$.}, the following conditions are satisfied:
    \begin{enumerate}[label=(\arabic*)]
        \item (Feller Property) $U_{t,s} f \in C_0(E)$, for any $s \leq t$, $s, t \in \sT$;
        \item (Positivity-Preserving Property) $U_{t,s} f \geq 0$ if $f \geq 0$, for any $s \leq t$, $s, t \in \sT$;
        \item (Contractivity) $\|U_{t,s} f\|_\infty \leq \|f\|_\infty$, for any $s \leq t$, $s, t \in \sT$;
        \item (Strong Continuity) The following limit holds:
        \begin{equation*}
            \lim_{(\sigma, \tau) \to (s, t)} \|U_{\sigma, \tau} f - U_{t,s} f\|_\infty = 0, \text{ for any } s \leq t, s, t \in \sT.
        \end{equation*}
    \end{enumerate}
    \label{def:feller_evolution_system}
\end{definition}

For any evolution system, we can define its right and left generators as follows. 
\begin{definition}[Generator of Evolution System]
    For any fixed $s \in \R$, the \emph{right generator} of the evolution system is defined by the limit
    \begin{equation*}
        \gL_s f = \lim_{h \to 0^+} \dfrac{U_{s+h,s} f - f}{h}.
    \end{equation*}
    The domain $\dom(\gL_s)$ of the right generator $\gL_s$ at time $s$ consists of all functions $f \in C_b(E)$ such that the above limit exists in the $L^\infty$ norm, in which $C_b(E)$ denotes the space of bounded continuous functions on $E$.

    Similarly, the \emph{left generator} is defined as
    \begin{equation*}
        \gL^-_s f = \lim_{h \to 0^+} \dfrac{U_{s, s-h} f - f}{h},
    \end{equation*}
    with the corresponding domain denoted by $\dom(\gL^-_s)$.
    \label{def:right_and_left_generator}
\end{definition}

\subsection{Feller Semigroups}

For time-homogeneous evolution systems, observe that for any $s \in \sT$ and $t \geq 0$, we have
\begin{equation*}
    T_t := U_{s+t,s} = U_{s, s-t},
\end{equation*}
which implies that the left and right generators coincide, \emph{i.e.}, $\gL = \gL_s = \gL^-_s$. In this case, the family of linear operators $(T_t)_{t \geq 0}$ forms a semigroup, or more precisely, a \emph{one-parameter semigroup}, as defined below:
\begin{definition}[One-Parameter Semigroup]
    A family of linear operators $(T_t)_{t \geq 0}$ is called a \emph{one-parameter semigroup} if the following conditions are satisfied:
    \begin{enumerate}[label=(\arabic*)]
        \item $T_0 = \mathrm{id}$;
        \item $T_s T_t = T_{s+t}$, for any $s, t \geq 0$.
    \end{enumerate}
    \label{def:semigroup}
\end{definition}

For such semigroups, we define their generator as follows:
\begin{definition}[Generator of Semigroup]
    The generator of a semigroup is given by the limit
    \begin{equation*}
        \gL f = \lim_{h \to 0} \dfrac{T_h f - f}{h},
    \end{equation*}
    where the domain $\dom(\gL)$ consists of all functions $f \in C_b(E)$ for which the above limit exists in the $L^\infty$ norm.
    \label{def:generator}
\end{definition}

In general, the generator $\gL$ is a closed operator on its domain, which is dense in $C_b(E)$, and it maps $\dom(\gL)$ into $C_b(E)$~\citep{bottcher2013levy}.  To focus on a class of semigroups that are particularly suitable for Markovian dynamics, we now introduce the notion of a Feller semigroup.

\begin{definition}[Feller Semigroup]
    A one-parameter semigroup $(T_t)_{t \geq 0}$ is called a \emph{Feller semigroup} if the following conditions hold for every $f \in C_0(E)$:
    \begin{enumerate}[label=(\arabic*)]
        \item (Feller Property) $T_t f \in C_0(E)$ for any $t \geq 0$;
        \item (Positivity-Preserving Property) $T_t f \geq 0$ if $f \geq 0$, for any $t \geq 0$;
        \item (Contractivity) $\|T_t f\|_\infty \leq \|f\|_\infty$ for any $t \geq 0$;
        \item (Strong Continuity) The following limit holds:
        \begin{equation*}
            \lim_{h \to 0^+} \|T_h f - f\|_\infty = 0.
        \end{equation*}
    \end{enumerate}
    \label{def:feller_semigroup}
\end{definition}

We now introduce the concept of a Feller process, which is a time-homogeneous Markov process whose transition semigroup forms a Feller semigroup.

\begin{definition}[Feller Process]
    A time-homogeneous Markov process $(x_t)_{t\in \sT}$ on $E$ is called a \emph{Feller process} if its transition semigroup, defined as 
    \begin{equation*}
        T_t f(x) := \E[f(x_t) | x_0 = x]
    \end{equation*}
    is a Feller semigroup.
    \label{def:feller_process}
\end{definition}

An important identity associated with Feller processes and their generators is Dynkin's formula. This result provides a key connection between the infinitesimal generator and the expected value of a function evaluated along the process.
 
\begin{theorem}[Dynkin's Formula~\citep{bottcher2013levy}]
    For any $f \in \dom(\gL) \cap C_0(E)$ and stopping time $t$ with $\E[t|x_0 = x] < +\infty$, the following relation holds:
    \begin{equation*}
        \E[f(x_t) | x_0 = x] - f(x_0) = \E\left[\int_0^t \gL f(x_s)\dif s\bigg| x_0 = x\right].
    \end{equation*}
    \label{thm:dynkin}
\end{theorem}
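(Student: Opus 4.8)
The plan is to establish the formula first for deterministic times via semigroup calculus, and then bootstrap to stopping times through the associated Dynkin martingale and an optional-stopping argument. Write $(T_t)_{t\geq 0}$ for the Feller semigroup of $(x_t)$. The starting point is the fact that, for $f \in \dom(\gL)$, the orbit $s \mapsto T_s f$ is differentiable in $C_0(E)$ with $\tfrac{\dif}{\dif s} T_s f = T_s \gL f = \gL T_s f$ (the Kolmogorov equations, available from the semigroup theory of Feller processes). Integrating this identity over $[0,t]$ and evaluating at $x$ gives
\begin{equation*}
    \E[f(x_t)\mid x_0 = x] - f(x) = \int_0^t (T_s \gL f)(x)\,\dif s = \int_0^t \E[\gL f(x_s)\mid x_0=x]\,\dif s,
\end{equation*}
and Fubini's theorem — legitimate because $\gL f$ is bounded, since $\gL$ maps $\dom(\gL)$ into $C_b(E)$ — moves the expectation outside the time integral, proving the claim for every fixed $t \geq 0$.

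Next I would use the Markov property to promote this to a martingale statement. Define the Dynkin martingale $M_t^f := f(x_t) - f(x_0) - \int_0^t \gL f(x_s)\,\dif s$. Combining the fixed-time identity (applied on $[r,t]$ started from $x_r$) with the tower property for the natural filtration $(\gF_t)$ of the process shows $\E[M_t^f \mid \gF_r] = M_r^f$ for $r\leq t$, so $(M_t^f)$ is, under $\P(\cdot \mid x_0 = x)$, a martingale started at $0$ whose increments are controlled by $2\|f\|_\infty$ plus a bounded-rate integral term. Since $f\in C_0(E)$ and $\gL f$ are both bounded, the optional stopping theorem applies to the bounded stopping times $t\wedge n$ and yields $\E[M_{t\wedge n}^f \mid x_0 = x] = 0$, i.e.
\begin{equation*}
    \E[f(x_{t\wedge n})\mid x_0 = x] - f(x) = \E\left[\int_0^{t\wedge n} \gL f(x_s)\,\dif s \;\Big|\; x_0 = x\right].
\end{equation*}

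Finally I would let $n \to \infty$. On the left, $x_{t\wedge n}\to x_t$ almost surely by right-continuity of the Feller process (which admits a càdlàg modification), and $|f(x_{t\wedge n})|\leq\|f\|_\infty$, so dominated convergence applies. On the right, $\int_0^{t\wedge n}\gL f(x_s)\,\dif s \to \int_0^t \gL f(x_s)\,\dif s$ with $\bigl|\int_0^{t\wedge n}\gL f(x_s)\,\dif s\bigr| \leq \|\gL f\|_\infty\, t$, which is $\P(\cdot\mid x_0=x)$-integrable exactly because $\E[t\mid x_0=x]<+\infty$; dominated convergence again applies. Passing to the limit on both sides gives the assertion.

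The main obstacle is this last limiting step: the hypothesis $\E[t\mid x_0=x]<+\infty$ is precisely what is needed to dominate $\int_0^{t\wedge n}\gL f(x_s)\,\dif s$, and without it the integral term need not converge. A secondary technical point is making sense of $f(x_t)$ and of the pathwise integral $\int_0^t \gL f(x_s)\,\dif s$ at a general (possibly continuously distributed) stopping time, which is handled by invoking the standard fact that a Feller process has a càdlàg version. Everything else — differentiating the semigroup orbit, Fubini, and the martingale/optional-stopping bookkeeping — is routine once these ingredients are in place.
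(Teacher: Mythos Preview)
The paper does not prove this statement at all: it is stated as a classical result with a citation to \citet{bottcher2013levy} and no proof is given. Your proof sketch is the standard textbook argument (semigroup differentiation for deterministic times, then the Dynkin martingale plus optional stopping and dominated convergence), and it is correct.
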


To establish a more general framework that connects Feller evolution systems with Feller semigroups, we introduce an augmentation construction. This construction allows us to reinterpret a time-inhomogeneous process as a time-homogeneous one by lifting it into an augmented state space.

\begin{definition}[Augmented Process~\citep{wentzell1979theorie}]
    Let $(x_t)_{t\in \sT}$ be a Markov process on $E$ with the evolution system $(U_{t,s})_{s \leq t, s,t\in\sT}$. Then we define the following \emph{augmented process} $(\tilde x_s)_{s \geq 0}$ in the augmented probability space $(\tilde \Omega, \tilde \gF, \tilde \P)$ defined as follows:
    \begin{enumerate}[label=(\arabic*)]
        \item (State Space) The augmented state $\tilde x$ is defined as $\tilde x := (s, x) \in \sT \times E$, with the augmented state space $\sT \times E$ equipped with the following $\sigma$-algebra:
        \begin{equation*}
            \tilde \gB := \left\{ \tilde B \subset \sT \times E \bigg|  
            \tilde B_s := \left\{ x \in E \big| \tilde x = (s, x) \in \tilde B \right\} \in \gB(E) , 
            \forall s \in \sT \right\}
        \end{equation*}
        where $\gB(E)$ is the Borel $\sigma$-algebra on $E$;
        \item (Sample Space) The augmented event $\tilde \omega$ is defined as $\tilde \omega := (s, \omega) \in \sT \times \Omega := \tilde \Omega$, with the augmented sample space $\tilde \Omega$ equipped with the following $\sigma$-algebra:
        \begin{equation*}
            \tilde \gF := \left\{ \tilde F \subset \tilde \Omega \bigg| 
            \tilde F_s := \left\{ \omega \in \Omega \big| \tilde \omega = (s, \omega) \in \tilde F \right\} \in \gF, 
            \forall s \in \sT \right\};
        \end{equation*}
        \item (Probability Measure) The augmented process $(\tilde x_t)_{t\geq 0}$ is defined as 
        \begin{equation*}
            \tilde x_t(\tilde \omega) = \tilde x_t(s, \omega) := (s+t, x_{s+t}(\omega)),
        \end{equation*}
        and the augmented probability measure $\tilde \P$ is defined such that the following relation holds:
        \begin{equation*}
            \begin{aligned}
                &\tilde \P_{\tilde x_0}\left(\tilde x_t(\tilde \omega) \in \tilde B \right):= \tilde \P\left(\tilde x_t(\tilde \omega) \in \tilde B \big| \tilde x_0(\tilde \omega) = \tilde x\right) \\
                =& \tilde \P\left((s+t, x_{s+t}(\omega)) \in \tilde B \big| \tilde x_0(\omega) = (s, x_s(\omega)) = (s, x)\right) \\
                :=& \P\left(x_{s+t}(\omega) \in \tilde B_{s+t} \big| x_s(\omega) = x\right),
            \end{aligned}
        \end{equation*} 
        for any $\tilde x = (s, x) \in \sT \times E$, $t \geq 0$, and $\tilde B \in \tilde \gB$.

        For simplicity, we will omit the explicit dependence on $\omega$ and $\tilde \omega$ in the remainder of our discussion.
    \end{enumerate}
    \label{def:augmented}
\end{definition}

The following proposition establishes that the augmented process is indeed a Markov process on $\sT \times E$, provided that the original process is Markovian on $E$.

\begin{proposition}
    If the original process $(x_t)_{t\in \sT}$ is a Markov process on $E$, then the augmented process $(\tilde x_t)_{t\in \sT}$ is a Markov process on $\sT \times E$ with the following transition probability:
    \begin{equation*}
        \tilde \P\left(\tilde x_{t'} \in \tilde B \big| \tilde x_t = \tilde x\right) = \P\left(x_{s'+t'-t} \in \tilde B_{s'+t'-t} \big| x_{s'} = x\right),
    \end{equation*}
    for any $\tilde x = (s', x) \in \sT \times E$, $\tilde B \in \tilde \gB$, and $s', t' \geq t$.
    \label{prop:augmented_markov}
\end{proposition}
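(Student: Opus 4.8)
The plan is to reduce the Markov property of the augmented process to that of the original process, exploiting the fact that the time coordinate of $\tilde x_t$ evolves deterministically, so that no new randomness is introduced by the augmentation. Fix a starting augmented state $\tilde x_0 = (s,x)$, so that $\tilde x_t = (s+t, x_{s+t})$; under the identification of $\tilde\omega = (s,\omega)$ with $\omega$ on the slice of fixed first coordinate $s$, the natural filtration $\tilde\gF_t := \sigma(\tilde x_r : r \le t)$ of the augmented process coincides with $\sigma(x_u : s \le u \le s+t)$, because the first coordinate of each $\tilde x_r$ is the constant $s+r$. The first thing I would record is the elementary but crucial observation that, for any $t' \ge t$, on the event $\{\tilde x_0 = (s,x)\}$ one has the identity of events $\{\tilde x_{t'} \in \tilde B\} = \{x_{s+t'} \in \tilde B_{s+t'}\}$, since the first coordinate $s+t'$ of $\tilde x_{t'}$ is deterministic and the section $\tilde B_{s+t'} = \{y \in E : (s+t', y) \in \tilde B\}$ lies in $\gB(E)$ by the very definition of $\tilde\gB$ in \cref{def:augmented}.

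Next I would compute the conditional law of the future given the present. Using the identity above together with the definition of $\tilde\P$ from \cref{def:augmented}, for any $\tilde B \in \tilde\gB$ and $t \le t'$,
\begin{equation*}
    \tilde\P\bigl(\tilde x_{t'} \in \tilde B \,\big|\, \tilde\gF_t\bigr)
    = \P\bigl(x_{s+t'} \in \tilde B_{s+t'} \,\big|\, \sigma(x_u : s \le u \le s+t)\bigr)
    = \P\bigl(x_{s+t'} \in \tilde B_{s+t'} \,\big|\, x_{s+t}\bigr),
\end{equation*}
where the second equality is exactly the Markov property of $(x_t)_{t\in\sT}$. The right-hand side depends on the past only through $x_{s+t}$, hence only through $\tilde x_t = (s+t, x_{s+t})$, which is precisely the Markov property of $(\tilde x_t)$ relative to $(\tilde\gF_t)$. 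Writing $\tilde x_t = (s', x)$, so that $s' = s+t$ and therefore $s+t' = s' + (t'-t)$, and reading off the conditional probability yields
\begin{equation*}
    \tilde\P\bigl(\tilde x_{t'} \in \tilde B \,\big|\, \tilde x_t = \tilde x\bigr)
    = \P\bigl(x_{s'+t'-t} \in \tilde B_{s'+t'-t} \,\big|\, x_{s'} = x\bigr),
\end{equation*}
which is the claimed transition probability; in particular it depends on $(t,t')$ only through $t'-t$, so the augmented process is automatically time-homogeneous, although that is not part of the assertion.

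The only genuinely delicate part is the $\sigma$-algebra bookkeeping, and this is where I expect the main obstacle to lie: one must verify carefully that $\tilde\gF_t$ is generated by the whole trajectory $(s+r, x_{s+r})_{r \le t}$ and not merely by $x_{s+t}$, that every section $\tilde B_u$ is a bona fide element of $\gB(E)$, and that the restriction of $\tilde\P$ to the event $\{\tilde x_0 = (s,x)\}$ agrees with $\P(\cdot \mid x_s = x)$ — all of which follow directly from the construction in \cref{def:augmented} but should be spelled out explicitly. Beyond this transcription there is no real difficulty, and a routine monotone-class argument passes from a generating $\pi$-system for $\tilde\gB$ to all of $\tilde\gB$ if one prefers to phrase the conclusion in terms of regular conditional probabilities; combined with \cref{prop:augmented_markov}'s bookkeeping this establishes that $(\tilde x_t)_{t\in\sT}$ is a Markov process on $\sT \times E$ with the stated kernel.
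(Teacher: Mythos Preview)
Your proposal is correct and follows essentially the same approach as the paper: both reduce the Markov property of $(\tilde x_t)$ to that of $(x_t)$ by exploiting the deterministic time coordinate to identify $\{\tilde x_{t'}\in\tilde B\}$ with $\{x_{s+t'}\in\tilde B_{s+t'}\}$, and then invoke the Markov property of the original process. The paper phrases the verification via integrals $\int_{\tilde F}\vone_{\tilde B}(\tilde x_{t'})\,\tilde\P_{\tilde x_0}(\dif\tilde\omega)$ over arbitrary $\tilde F\in\tilde\gF_t$, whereas you work directly with the conditional probability given $\tilde\gF_t$; these are equivalent formulations, and your explicit identification of $\tilde\gF_t$ with $\sigma(x_u:s\le u\le s+t)$ is a welcome clarification that the paper leaves implicit.
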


\begin{proof}
    Let $\tilde F \in \tilde \gF_t$ and $\tilde B \in \tilde \gB$ with $t' \geq t$, where $\tilde \gF_t$ denotes the filtration generated by the augmented process up to time $t$. We verify the Markov property of the augmented process by computing:
    \begin{equation*}
        \begin{aligned}
            &\int_{\tilde F} \vone_{\tilde B} (\tilde x_{t'}) \tilde \P_{\tilde x_0}\left(\dif \tilde \omega\right)
            = \int_{\tilde F} \vone_{\tilde B} (\tilde x_{t'}) \tilde \P\left(\dif \tilde \omega \big| \tilde x_0 = (s, x)\right)\\
            =& \int_{\tilde F_s} \vone_{\tilde B_{s+t'}} (x_{s+t'}) \P(\dif \omega | x_s = x) \\
            =& \int_{\tilde F_s} \P\left(x_{s+t'} \in \tilde B_{s+t'} \big| x_{s+t}\right) \P(\dif \omega|x_s = x) \\
            =& \int_{\tilde F} \P\left(\tilde x_{t'}\in \tilde B \big| \tilde x_t\right) \tilde \P\left(\dif \tilde \omega \big| \tilde x_0 = \tilde x\right),
        \end{aligned}
    \end{equation*}
    where the second equality is due to that conditioning $\tilde x_0(\tilde \omega) = (s, x)$ implies that $\tilde \omega = (s, \omega)$ such that $x_s(\omega) = x$, and
    the third equality is by the Markov property of the original process $(x_t)_{t\in \sT}$. 
    
    We now compute the transition probability of the augmented process explicitly. For any $x \in E$, $\tilde B \in \tilde \gB$, and $s', t' \geq t$, we have
    \begin{equation*}
        \begin{aligned}
            &\tilde \P\left(\tilde x_{t'} \in \tilde B \big| \tilde x_t = \tilde x\right) \\
            =& \tilde \P\left((s'+t'-t, x_{s'+t'-t}) \in \tilde B \big| \tilde x_t = (s', x)\right) \\
            =& \P\left(x_{s'+t'-t} \in \tilde B_{s'+t'-t} \big| x_{s'} = x\right).
        \end{aligned}
    \end{equation*}
    This shows that the transition kernel depends only on the difference $t' - t$, and hence the augmented process is time-homogeneous.
\end{proof}

We may now define the semigroup associated with the augmented time-homogeneous Markov process:
\begin{equation*}
    \tilde T_t f(\tilde x) = \tilde \E\left[f(\tilde x_t) | \tilde x_0 = \tilde x\right],
\end{equation*}
where $\tilde \E$ denotes the expectation with respect to the augmented probability measure $\tilde \P$. Correspondingly, the generator of the augmented process is introduced below.
\begin{definition}[Augmented Generator]
    The \emph{augmented generator} $\tilde \gL$ of the augmented process is defined as the following limit:
    \begin{equation*}
        \tilde \gL f(\tilde x) = \lim_{h \to 0} \dfrac{\tilde T_h f(\tilde x) - f(\tilde x)}{h},
    \end{equation*} 
    for any $f \in C_b(\sT \times E)$. We denote the domain $\dom(\tilde \gL)$ of the augmented generator $\tilde \gL$ as the space of all function $f \in C_b(\sT \times E)$ satisfying the conditions aforementioned such that the limit above exists with respect to $L^\infty$ norm.
    \label{def:augmented_generator}
\end{definition}

The next proposition establishes the relation between the augmented generator and the right generator of the original time-inhomogeneous Markov process.
\begin{proposition}
    Let $(x_t)_{t \in \sT}$ be a Markov process governed by the evolution system $(U_{t,s})_{s \leq t}$ with right generator $\gL_s$. Let $(\tilde x_t)_{t \geq 0}$ be the corresponding augmented process with semigroup $(\tilde T_t)_{t \geq 0}$ and generator $\tilde \gL$ as in \cref{def:augmented_generator}.

    Suppose $f \in C_b([0, T] \times E)$ satisfies the following conditions:
    \begin{enumerate}[label=(\arabic*)]
        \item $f(\cdot, x) \in C^1(\sT)$ for any $x \in E$;
        \item $f(s, \cdot), \partial_s f(s, \cdot) \in \dom(\gL_s)$ for any $s \in \sT$,
    \end{enumerate}
    then we have $f \in \dom(\tilde \gL)$ and the generator satisfies
    \begin{equation}
        \tilde \gL f(\tilde x) = \tilde \gL f(s, x) = \partial_s f(s, x) + \gL_s f(s, x).
        \label{eq:augmented_generator}
    \end{equation}
    \label{prop:augmented_generator}
\end{proposition}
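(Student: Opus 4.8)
The plan is to compute the augmented semigroup $(\tilde T_t)_{t\ge 0}$ in closed form in terms of the original evolution system, and then differentiate it at $t=0$, splitting the limit into a "time-derivative" contribution and a "spatial generator" contribution. First I would unfold \cref{def:augmented} together with \cref{prop:augmented_markov} (which guarantees that $(\tilde x_t)$ is time-homogeneous, so that $\tilde T_t$ is a genuine one-parameter semigroup) to obtain, for $\tilde x=(s,x)$,
\begin{equation*}
    \tilde T_t f(s,x) = \tilde\E\big[f(\tilde x_t)\mid \tilde x_0=(s,x)\big] = \E\big[f(s+t,x_{s+t})\mid x_s=x\big] = \big(U_{s+t,s}\,f(s+t,\cdot)\big)(x),
\end{equation*}
where $U_{s+t,s}$ acts on the function $y\mapsto f(s+t,y)\in C_b(E)\subseteq B_b(E)$, legitimate by \eqref{eq:evolution_markov}.

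Next I would form the difference quotient at $t=h\to 0^+$, insert the intermediate term $U_{s+h,s}[f(s,\cdot)](x)$, and write
\begin{equation*}
    \frac{\tilde T_h f(s,x)-f(s,x)}{h} = U_{s+h,s}\!\left[\frac{f(s+h,\cdot)-f(s,\cdot)}{h}\right]\!(x) + \frac{U_{s+h,s}[f(s,\cdot)](x)-f(s,x)}{h}.
\end{equation*}
The second term tends to $\gL_s f(s,x)$ straight from \cref{def:right_and_left_generator}, using hypothesis (2) that $f(s,\cdot)\in\dom(\gL_s)$. For the first term I would split once more, subtracting and adding $U_{s+h,s}[\partial_s f(s,\cdot)](x)$: one piece is $U_{s+h,s}$ applied to $h^{-1}(f(s+h,\cdot)-f(s,\cdot))-\partial_s f(s,\cdot)$, which by contractivity of the evolution operators is bounded in $\|\cdot\|_\infty$ by $\big\|h^{-1}(f(s+h,\cdot)-f(s,\cdot))-\partial_s f(s,\cdot)\big\|_\infty$; the other piece is $U_{s+h,s}[\partial_s f(s,\cdot)](x)-\partial_s f(s,x)$, which vanishes in $L^\infty$ because hypothesis (2), $\partial_s f(s,\cdot)\in\dom(\gL_s)$, forces $U_{s+h,s}[\partial_s f(s,\cdot)]-\partial_s f(s,\cdot)=O(h)$ in $\|\cdot\|_\infty$. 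Adding the contributions gives the pointwise identity $\tilde\gL f(s,x)=\partial_s f(s,x)+\gL_s f(s,x)$, and once convergence is in $L^\infty$ we conclude $f\in\dom(\tilde\gL)$ and hence \eqref{eq:augmented_generator}.

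The hard part will be the estimate $\big\|h^{-1}(f(s+h,\cdot)-f(s,\cdot))-\partial_s f(s,\cdot)\big\|_\infty\to 0$: hypothesis (1), pointwise $C^1$-regularity in time, only yields pointwise convergence of the time-difference quotient, whereas I need uniformity over $x\in E$. I would derive this from the fundamental-theorem-of-calculus identity
\begin{equation*}
    \frac{f(s+h,x)-f(s,x)}{h}-\partial_s f(s,x) = \frac{1}{h}\int_0^h\big(\partial_s f(s+u,x)-\partial_s f(s,x)\big)\,\dif u,
\end{equation*}
so that the left side is bounded in $x$ by $\sup_{0\le u\le h}\|\partial_s f(s+u,\cdot)-\partial_s f(s,\cdot)\|_\infty$; the required vanishing as $h\to 0^+$ then reduces to the $L^\infty$-continuity of $u\mapsto\partial_s f(s+u,\cdot)$, which is exactly what the smoothness hypotheses on $f$ (in their precise appendix form) are designed to supply. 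A secondary point, affecting only the membership $f\in\dom(\tilde\gL)$ and not the pointwise generator formula, is that one also wants the above estimates uniform in $s$; this again follows from the stated regularity, after which \eqref{eq:augmented_generator} holds as an identity of functions in $C_b(\sT\times E)$.
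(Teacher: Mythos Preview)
Your proposal is correct and follows essentially the same approach as the paper's proof: the paper also writes $\tilde T_h f(s,x)=\E[f(s+h,x_{s+h})\mid x_s=x]$, inserts the same intermediate term $\E[f(s,x_{s+h})\mid x_s=x]=U_{s+h,s}[f(s,\cdot)](x)$, identifies the second piece as $\gL_s f(s,x)$, and handles the first piece by the further split via $\E[\partial_s f(s,x_{s+h})\mid x_s=x]$ together with the fundamental-theorem-of-calculus bound $\sup_{\sigma\in[s,s+h],\,x\in E}|\partial_\sigma f(\sigma,x)-\partial_s f(s,x)|\to 0$. Your explicit use of contractivity to control $U_{s+h,s}$ applied to the time-difference remainder is a slightly cleaner way to phrase the same estimate.
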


\begin{proof}
    Starting from the definition, we write
    \begin{equation*}
        \begin{aligned}
            &\tilde \gL f(s, x) = \lim_{h \to 0} \dfrac{\tilde T_h f(s, x) - f(s, x)}{h}\\
            =& \lim_{h \to 0} \dfrac{\E\left[f(s+h, x_{s+h}) | x_s = x\right] - f(s, x)}{h}\\
            =&\lim_{h \to 0} \dfrac{\E\left[f(s+h, x_{s+h}) - f(s, x_{s+h}) + f(s, x_{s+h}) - f(s, x) | x_s = x\right]}{h}\\
            =&\lim_{h \to 0} \dfrac{\E\left[f(s+h, x_{s+h}) - f(s, x_{s+h})| x_s = x\right]}{h} + \lim_{h \to 0} \dfrac{\E\left[f(s, x_{s+h}) - f(s, x) | x_s = x\right]}{h},
        \end{aligned}
    \end{equation*}
    where the second term on the right-hand side matches the definition of the right generator $\gL_s$ of the original process $(x_t)_{t\in \sT}$ and thus converges to $\gL_sf(s, x)$ as $h \to 0$, given $f(s, \cdot) \in \dom(\gL_s)$.
    
    The first term on the right-hand side can be computed as follows:
    \begin{equation*}
        \begin{aligned}
            &\left|\dfrac{1}{h}\E\left[f(s+h, x_{s+h}) - f(s, x_{s+h})| x_s = x\right] - \partial_sf(s, x_s) \right|\\
            \leq & \left|\dfrac{1}{h}\E\left[f(s+h, x_{s+h}) - f(s, x_{s+h})| x_s 
            = x\right] - \E\left[\partial_sf(s, x_{s+h}) \big| x_s = x\right] \right| \\
            +& \left|\E\left[\partial_sf(s, x_{s+h}) \big| x_s = x\right] - \partial_sf(s, x) \right|\\
            =&\left|\dfrac{1}{h}\E\left[\int_s^{s+h} \left( \partial_\sigma f(\sigma, x_{s+h}) - \partial_sf(s, x_{s+h})\right) \dif \sigma \bigg| x_s = x\right]  \right| \\
            +& \left|\E\left[\partial_sf(s, x_{s+h}) \big| x_s = x\right] - \partial_sf(s, x) \right|,
        \end{aligned}
    \end{equation*}
    where the first term is bounded by the continuity of $\partial_s f(\cdot, x)$ as
    \begin{equation*}
        \begin{aligned}
            &\left|\dfrac{1}{h}\E\left[\int_s^{s+h} \left( \partial_\sigma f(\sigma, x_{s+h}) - \partial_sf(s, x_{s+h})\right) \dif \sigma \bigg| x_s = x\right]  \right|\\
            \leq & \sup_{\sigma \in [s, s+h], x \in E} \left|\partial_\sigma f(\sigma, x) - \partial_sf(s, x)\right| \to 0, \text{ as } h \to 0,
        \end{aligned}
    \end{equation*}
    and the second term is bounded as 
    \begin{equation*}
        \begin{aligned}
            &\left|\E\left[\partial_sf(s, x_{s+h}) \big| x_s = x\right] - \partial_sf(s, x) \right|\\
            = & U_{s+h,s} \partial_sf(s, \cdot)(x) - \partial_sf(s, x) \to 0, \text{ as } h \to 0,
        \end{aligned}
    \end{equation*}
    where we treat the derivative $\partial_s f(s, x)$ as a function of only $x$ and thus the limit converges to $0$ as $h \to 0$, given $\partial_s f(s, \cdot) \in \dom(\gL_s)$. The proof is thus completed.
\end{proof}

We now state an important corollary, which extends Dynkin's formula to time-inhomogeneous processes via the augmented framework. We will adopt the shorthand notation $f_t(\cdot) = f(t, \cdot)$ in the following discussions.

\begin{corollary}[Dynkin's Formula]
    Under the conditions of~\cref{prop:augmented_generator}, the following relation holds:
    \begin{equation*}
        \E\left[f_t(x_t) | x_0 = x\right] - f_0(x) = \E\left[\int_0^t \left(\partial_s f_s(x_s) + \gL_s f_s(x_s)\right) \dif s\right],
    \end{equation*}
    for any stopping time $t$ with $\E[t|x_0 = x] < +\infty$.
    \label{cor:dynkin}
\end{corollary}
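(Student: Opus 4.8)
The plan is to deduce the statement from the time-homogeneous Dynkin's formula (\cref{thm:dynkin}) applied to the augmented process. Set $\tilde f(s, x) := f_s(x)$, viewed as an element of $C_b(\sT \times E)$. By hypothesis $\tilde f$ satisfies the two regularity conditions required in \cref{prop:augmented_generator} --- namely $\tilde f(\cdot, x) \in C^1(\sT)$ for every $x$, and $\tilde f(s, \cdot), \partial_s \tilde f(s, \cdot) \in \dom(\gL_s)$ for every $s$ --- so that proposition gives $\tilde f \in \dom(\tilde \gL)$ together with
\begin{equation*}
    \tilde \gL \tilde f(s, x) = \partial_s f_s(x) + \gL_s f_s(x).
\end{equation*}
Moreover, since $(x_t)_{t \in \sT}$ is governed by a Feller evolution system, the augmented process $(\tilde x_t)_{t \geq 0}$ is a Feller process (\cref{thm:equivalence}), and started from $\tilde x_0 = (0, x)$ it satisfies $\tilde x_t = (t, x_t)$ by construction (\cref{def:augmented}).

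Next I would feed $\tilde f$ and the stopping time $t$ into Dynkin's formula for $(\tilde x_t)_{t \geq 0}$. A stopping time $t$ of the filtration generated by $(x_t)$ is automatically a stopping time of the (larger) filtration generated by $(\tilde x_t)$, since augmentation only appends a deterministic clock; and by the defining relation of $\tilde \P$ with initial time $s = 0$ (\cref{def:augmented}) the conditional law $\tilde \P(\cdot \mid \tilde x_0 = (0,x))$ agrees, on events measurable with respect to the original path, with $\P(\cdot \mid x_0 = x)$, so in particular $\tilde \E[t \mid \tilde x_0 = (0,x)] = \E[t \mid x_0 = x] < +\infty$. Applying \cref{thm:dynkin} therefore gives
\begin{equation*}
    \tilde \E\left[\tilde f(\tilde x_t) \bigg| \tilde x_0 = (0, x)\right] - \tilde f(0, x) = \tilde \E\left[\int_0^t \tilde \gL \tilde f(\tilde x_s) \dif s \bigg| \tilde x_0 = (0, x)\right],
\end{equation*}
and substituting $\tilde f(\tilde x_t) = f_t(x_t)$, $\tilde f(0,x) = f_0(x)$, and $\tilde \gL \tilde f(\tilde x_s) = \partial_s f_s(x_s) + \gL_s f_s(x_s)$, and replacing $\tilde \E[\cdot \mid \tilde x_0 = (0,x)]$ by $\E[\cdot \mid x_0 = x]$, yields exactly the asserted identity.

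The main obstacle is purely technical: \cref{thm:dynkin} is stated for $f \in \dom(\tilde \gL) \cap C_0(\sT \times E)$, but in general $\tilde f$ is only bounded, and the extra time coordinate renders $\sT \times E$ non-compact (so $\tilde f$ need not vanish at infinity even when $E$ is). I would handle this by the usual localization: pick compact sets $K_n \uparrow \sT \times E$, let $\tau_n$ be the first exit time of $(\tilde x_t)$ from $K_n$, replace $\tilde f$ by a $C_0$ function coinciding with it on $K_n$, apply Dynkin's formula with the stopping time $t \wedge \tau_n$ --- on which the integrand is still $\partial_s f_s + \gL_s f_s$ --- and pass to the limit $n \to \infty$, using boundedness of $\tilde f$ and of $\tilde \gL \tilde f$ on each $K_n$ together with dominated convergence, where the hypothesis $\E[t \mid x_0 = x] < +\infty$ controls $\int_0^{t \wedge \tau_n} \dif s$. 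A second, routine point is verifying that conditional expectations and stopping-time measurability transfer cleanly between $(\Omega, \gF, \P)$ and $(\tilde \Omega, \tilde \gF, \tilde \P)$, which is immediate from \cref{def:augmented,prop:augmented_markov}.
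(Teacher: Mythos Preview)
Your proposal is correct and follows essentially the same route as the paper: apply the time-homogeneous Dynkin's formula (\cref{thm:dynkin}) to the augmented process with the function $\tilde f(s,x)=f_s(x)$, invoke \cref{prop:augmented_generator} to identify $\tilde\gL\tilde f$, and then unwind the definitions from \cref{def:augmented} to recover the stated identity. In fact you are more careful than the paper, which simply applies \cref{thm:dynkin} directly without addressing the $C_0$ requirement; your localization remark and the check that stopping times and conditional expectations transfer between $(\Omega,\gF,\P)$ and $(\tilde\Omega,\tilde\gF,\tilde\P)$ are valid refinements that the paper omits.
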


\begin{proof}
    First, we apply~\cref{thm:dynkin} to the augmented process $(\tilde x_t)_{t\in \sT}$, and obtain the following relation:
    \begin{equation*}
        \tilde \E\left[f(\tilde x_t)| \tilde x_0 = (0, x)\right] - f(0, x) = \tilde \E\left[\int_0^t \tilde \gL f(\tilde x_s) \dif s \bigg| \tilde x_0 = (0, x) \right],
    \end{equation*}
    where the left-hand side, by~\cref{def:augmented}, can be rewritten as 
    \begin{equation*}
        \tilde \E\left[f(\tilde x_t)| \tilde x_0 = (0, x)\right] - f(0, x) = \E\left[f_t(x_t)| x_0 = x\right] - f_0(x),
    \end{equation*}
    while the right-hand side, similarly, can be rewritten as
    \begin{equation*}
        \tilde \E\left[\int_0^t \tilde \gL f(\tilde x_s) \dif s \bigg| \tilde x_0 = (0, x) \right] = \E\left[\int_0^t \left(\partial_s f_s(x_s) + \gL_s f_s(x_s)\right) \dif s\right],
    \end{equation*}
    where we used the relation~\eqref{eq:augmented_generator} in the last equality. The proof is thus completed.
\end{proof}

Finally, the next theorem establishes the equivalence between Feller evolution systems and Feller semigroups via the augmented process.

\begin{theorem}[{\citet[Theorem~3.2]{bottcher2014feller}}]
    Let $(x_t)_{t\in \sT}$ be a Markov process on $E$ and $(\tilde x_t)_{t\in \sT}$ be the augmented process on $\sT \times E$ as defined above. Then the following statements are equivalent:
    \begin{enumerate}[label=(\arabic*)]
        \item The evolution system $(U_{t,s})_{s \leq t}$ is a Feller evolution system;
        \item The semigroup $(\tilde T_t)_{t\in \sT}$ is a Feller semigroup, and thus the augmented process $(\tilde x_t)_{t\in \sT}$ is a Feller process.
    \end{enumerate}
    \label{thm:equivalence}
\end{theorem}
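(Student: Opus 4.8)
The plan is to reduce both implications to the single identity
\begin{equation*}
    \tilde T_t f(s, x) = U_{s+t,\, s} f_{s+t}(x), \qquad f \in C_b(\sT \times E),\quad f_u := f(u, \cdot),
\end{equation*}
which is immediate from the definition of the augmented process (\cref{def:augmented}) and its transition kernel (\cref{prop:augmented_markov}); that proposition also already establishes that $(\tilde T_t)_{t\geq 0}$ is a one-parameter semigroup, so only the four Feller conditions of \cref{def:feller_semigroup} versus those of \cref{def:feller_evolution_system} remain. Of these, positivity-preservation and contractivity transfer in both directions pointwise in $(s,x)$ straight from the identity, since $f\geq 0$ forces $f_u\geq 0$ and $\|f_u\|_\infty \leq \|f\|_\infty$. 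The real content is the transfer of the Feller (mapping into $C_0$) property and of strong continuity.

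For $(1)\Rightarrow(2)$, fix $f\in C_0(\sT\times E)$; since $\sT\times E$ is locally compact and the $\sT$-factor is metric, $f$ is uniformly continuous and $\|f_u\|_\infty$ is small off a compact set of $u$'s, so $u\mapsto f_u$ is a norm-uniformly-continuous, uniformly vanishing $C_0(E)$-valued curve. To see $\tilde T_t f\in C_0(\sT\times E)$, take $(s_n,x_n)\to(s,x)$ and split
\begin{align*}
    \tilde T_t f(s_n, x_n) - \tilde T_t f(s,x) ={}& U_{s_n+t,\,s_n}\big(f_{s_n+t}-f_{s+t}\big)(x_n) + \big(U_{s_n+t,\,s_n}f_{s+t}-U_{s+t,\,s}f_{s+t}\big)(x_n)\\
    &{}+ U_{s+t,\,s}f_{s+t}(x_n) - U_{s+t,\,s}f_{s+t}(x),
\end{align*}
where the first term is $\leq\|f_{s_n+t}-f_{s+t}\|_\infty\to 0$ by uniform continuity of $u\mapsto f_u$, the second $\to 0$ by strong continuity of the evolution system applied to the \emph{fixed} function $f_{s+t}\in C_0(E)$, and the third $\to 0$ since $U_{s+t,s}f_{s+t}\in C_0(E)$ by the Feller property of $U_{s+t,s}$; the ``vanishing at infinity'' part is analogous, using $\|\tilde T_t f(s,\cdot)\|_\infty\leq\|f_{s+t}\|_\infty$ when $s$ escapes $\sT$ and a strong-continuity approximation of $U_{s+t,s}f_{s+t}$ by $C_0(E)$ functions when $s$ stays in a compact. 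For strong continuity of $(\tilde T_t)$ I bound
\begin{equation*}
    \|\tilde T_h f - f\|_\infty \leq \sup_s\|f_{s+h}-f_s\|_\infty + \sup_s\|U_{s+h,\,s}f_s - f_s\|_\infty,
\end{equation*}
the first term tending to $0$ by uniform continuity of $u\mapsto f_u$ and the second by the evolution system's strong continuity combined with a compactness argument over the compact range of base times outside which $\|f_s\|_\infty$ is already arbitrarily small.

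For $(2)\Rightarrow(1)$, given $f\in C_0(E)$ and $s\leq t$, choose $\chi\in C_0(\sT)$ with $\chi\equiv 1$ on a neighborhood of $t$ and put $\tilde f(u,y):=\chi(u)f(y)\in C_0(\sT\times E)$; the identity then yields $\tilde T_{t-s}\tilde f(s,x)=U_{t,s}f(x)$, so $U_{t,s}f$ is the restriction of the $C_0(\sT\times E)$ function $\tilde T_{t-s}\tilde f$ to the slice $\{s\}\times E$ and hence lies in $C_0(E)$; positivity and contractivity descend the same way. For strong continuity of $(U_{t,s})$, I again write $U_{\sigma,\tau}f=\tilde T_{\sigma-\tau}\tilde f(\tau,\cdot)$ for a $\chi$ flat near $t$, and for $(\sigma,\tau)\to(t,s)$ decompose $U_{\sigma,\tau}f-U_{t,s}f$ into $\big(\tilde T_{\sigma-\tau}\tilde f-\tilde T_{t-s}\tilde f\big)(\tau,\cdot)$, controlled uniformly by the strong continuity of the semigroup in the lag $\sigma-\tau\to t-s$, plus $\tilde T_{t-s}\tilde f(\tau,\cdot)-\tilde T_{t-s}\tilde f(s,\cdot)$, controlled uniformly in $x$ by the uniform continuity in the time argument of the $C_0(\sT\times E)$ function $\tilde T_{t-s}\tilde f$. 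The main obstacle I anticipate is exactly this strong-continuity bookkeeping: upgrading the pointwise-in-$(s,t)$ continuity built into \cref{def:feller_evolution_system} to the genuinely uniform strong continuity required of a semigroup (and conversely exploiting semigroup strong continuity to recover the two-parameter version), for which the remedy is the compactness split of the base-time range into a compact piece (uniform continuity) and its complement (uniform smallness of $C_0$ functions); verifying ``vanishing at infinity'' on the product $\sT\times E$, distinguishing escape in the time versus the space coordinate, is a secondary but more routine technicality.
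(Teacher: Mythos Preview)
The paper does not prove this theorem at all: it is stated as a citation of \citet[Theorem~3.2]{bottcher2014feller} and used as a black box, with no proof environment following the statement. So there is no ``paper's own proof'' to compare against.

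That said, your sketch is the standard route and is essentially sound. The key identity $\tilde T_t f(s,x)=U_{s+t,s}f_{s+t}(x)$ reduces everything to bookkeeping, and you have correctly identified the only genuinely delicate point: upgrading the joint-in-$(\sigma,\tau)$ strong continuity of \cref{def:feller_evolution_system} (which is stated for a \emph{fixed} $f\in C_0(E)$) to control $\sup_s\|U_{s+h,s}f_s-f_s\|_\infty$ where $f_s$ varies with $s$. Your compactness split handles this: outside a compact $K\subset\sT$ one has $\|f_s\|_\infty$ small, while on $K$ the curve $s\mapsto f_s$ has compact range in $C_0(E)$, so a finite $\epsilon$-net $\{g_1,\dots,g_N\}$ together with the joint continuity of $(\sigma,\tau)\mapsto U_{\sigma,\tau}g_i$ on the compact rectangle $\{(s,s+h'):s\in K,\ 0\le h'\le h_0\}$ gives the required uniformity. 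The cutoff-function trick $\tilde f(u,y)=\chi(u)f(y)$ for the reverse implication is exactly right, and your decomposition for the two-parameter strong continuity is clean. The only place to be slightly careful is the ``vanishing at infinity'' verification on $\sT\times E$ when $\sT=\R$ or $[0,\infty)$: one-point compactification of a product requires controlling escape in either coordinate, but as you note this is routine once you have $\|\tilde T_t f(s,\cdot)\|_\infty\le\|f_{s+t}\|_\infty$ and the Feller property of each $U_{s+t,s}$.
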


\begin{remark}
    The time index set $\sT$ may also be taken as a closed interval $[0, T]$ for some $T > 0$. In this case, we may restrict all time indices in the preceding discussion to lie within $[0, T]$. The identity in \cref{eq:augmented_generator} remains valid for all $s \in [0, T)$ since it is a local property. To ensure well defined behavior at the endpoint $T$, one may extend the Markov process $(x_t)_{t \in [0, T]}$ to $(x_t)_{t \in [0, T^+]}$ for some small $\epsilon > 0$, where $T^+ = T + \epsilon$, or more conveniently, to $(x_t)_{t \in \R}$.
\end{remark}

\subsection{Time-Reversal of Markov Processes}

Throughout this subsection, we fix the time index set as $\sT = [0, T]$ and adopt the following assumption.
\begin{assumption}[Feller]
    The Markov process $(x_t)_{t \in [0, T]}$ is governed by a Feller evolution system $(U_{t,s})_{s \leq t, s,t \in [0, T]}$. Therefore, by \cref{thm:equivalence}, the associated augmented process $(\tilde x_t)_{t \in [0, T]}$ is a Feller process. 
    \label{ass:feller}
\end{assumption} 
Additionally, we assume that both the original and augmented processes are c\'adl\'ag.

We now define the time reversal of a Markov process, following the formulation in~\citet{leonard2022feynman,cattiaux2023time}.
\begin{definition}[Time-Reversal of Markov Process]
    Let $(x_t)_{t \in [0, T]}$ be a Markov process. The associated \emph{time-reversal process} $(\cev x_t)_{t \in [0, T]}$ is defined by
    \begin{equation*}
        \cev x_t := \lim_{h\to 0^+} x_{T-t-h} = x_{(T-t)^-},
    \end{equation*}
    and by setting $\cev x_T := x_0$.
    \label{def:time_reversal}
\end{definition}

Without loss of generality, we assume that $x_{T^-} = x_T$ with probability 1. Consequently, the map $\cev{*}: (x_t)_{t \in [0, T]} \mapsto (\cev x_t)_{t \in [0, T]}$ is almost surely bijective. The filtration $(\cev \gF_t)_{t \in [0, T]}$ generated by the time-reversal process is defined by
\begin{equation*}
    \cev \gF_t = \bigcap_{s > t} \sigma(x_{T-s:T}),
\end{equation*}
where $\sigma(x_{s:T})$ denotes the $\sigma$-algebra generated by the path $(x_t)_{t\in[s, T]}$. We define the right generator $\cev \gL_s$ of the time-reversal process $(\cev x_t)_{t\in[0, T]}$ and the generator $\tilde{\cev{\gL}}_s$ of the augmented time-reversal process $(\tilde{\cev x}_t)_{t\in[0, T]}$, similar to \cref{def:right_and_left_generator} and \cref{def:generator}.

For the ease of presentation, we introduce the following function class assumption.
\begin{assumption}[Function Class $\gU$]
    There exists a class of functions $\gU$ on $E$ satisfying the following:
    \begin{enumerate}[label=(\arabic*)]
        \item $\gU$ is an algebra, \emph{i.e.}, closed under addition, (pointwise) multiplication, and scalar multiplication;
        \item $\gU \subset C_b(E)$ and is dense in $C_b(E)$;
        \item $\gU \subset \bigcap_{t\in[0,T]} \dom(\gL_t) \cap \dom(\cev \gL_t)$.
    \end{enumerate}
    \label{ass:U}
\end{assumption}

Next, we define the formal adjoint of a generator. This concept is motivated by the observation that we only require adjointness to hold for functions in $\gU$, rather than for all elements of $L^2(E, \mu)$.
\begin{definition}[Adjoint Operator]
    We define the domain $\dom(\gL^*)$ of the \emph{adjoint operator} $\gL^*$ as the space of all functions $g \in L^2(E, \mu)$ such that there exists a unique function $\gL^* g \in L^2(E, \mu)$ satisfying that for any $f \in \gU$, the following relation holds:
    \begin{equation}
        \angle{ \gL f, g } = \angle{ f, \gL^* g },
        \label{eq:adjoint}
    \end{equation}
    where $\angle{ \cdot, \cdot }$ denotes the inner product on $L^2(E, \mu)$.
    \label{def:adjoint}
\end{definition}

The uniqueness of $\gL^* g$ follows from the density of $\gU$ in $C_b(E)$ and hence in $L^2(E, \mu)$. We now introduce another function class $\gV \subset \gU$ as test functions for the function class $\gU$:
\begin{assumption}[Function Class $\gV$]
    There exists a class of functions $\gV$ on $E$ satisfying:
    \begin{enumerate}[label=(\arabic*)]
        \item $\gV \subset \bigcap_{t\in[0,T]} \dom(\gL_t^*)$;
        \item $g\gU = \{fg| f \in \gU\} \in \gV$, for any $g \in \gV$.
    \end{enumerate}
    \label{ass:V}
\end{assumption}

To ensure well-definedness and integrability of expressions involving the generator, we also make the following regularity assumption on marginal and conditional densities.
\begin{assumption}[Regularity of Densities]
    We assume that 
    \begin{enumerate}[label=(\arabic*)]
        \item The marginal distribution of the Markov process $(x_t)_{t\in[0, T]}$ at time $t$ admits a density $p_t(x)$ with respect to the base measure $\mu$ and $p_t \in \gV \subset \gU$, for any $t \in [0, T]$
        \item The conditional distribution $p_{t|s}(x_t|x_s)$ of the Markov process $(x_t)_{t\in[0, T]}$ , defined as 
        \begin{equation*}
            p_{t|s}(\cdot|x_s) = \P(x_t \in \cdot | x_s = x_s),
        \end{equation*}
        admits a density $p_{t|s}(x_t|x_s)$ with respect to the base measure $\mu$ and $p_{t|s}(\cdot|x_s) \in \gV \subset \gU$ for any $x_s \in E$, for any $s, t \in [0, T]$ with $s < t$;
        \item For any $f \in \gU$, the following conditions hold:
        \begin{equation*}
            \E\left[\int_0^T |\gL_t f(x_t)| \dif t \right] < +\infty, \quad \text{and} \quad \E\left[\int_0^T |\cev \gL_t f(\cev x_t)| \dif t \right] < +\infty.
        \end{equation*}
    \end{enumerate}
    \label{ass:density}
\end{assumption}

We are now ready to state a key result that relates the generator of the original process to that of the time-reversal process.
\begin{theorem}[Time-Reversal Generator]
    Under~\cref{ass:U,ass:density,ass:V}, for each $t \in [0, T]$ and any $f \in \gU$, the generator $\cev \gL_t$ of the time reversal process satisfies the identity
    \begin{equation}
        p_t \cev \gL_{T-t} f = \gL_t^*(p_t f) - f \gL_t^* p_t.
        \label{eq:time_reversal}
    \end{equation}
    If in addition $f \in \gV$ and $p_t(x) > 0$ for all $x \in E$, the identity~\cref{eq:time_reversal} can be rewritten as
    \begin{equation*}
        \cev \gL_{T-t} f = p_t^{-1} \gL_t^*(p_t f) - p_t^{-1} f \gL_t^* p_t = \gL_t^*f + p_t^{-1} \Gamma_t^*(p_t, f)
    \end{equation*}
    where $\Gamma_t^*(p_t, f) = \gL_t^*(p_t f) - p_t \gL_t^* f - f \gL_t^* p_t$ is the \emph{carr\'e du champ operator} associated with the adjoint operator $\gL_t^*$. 
    \label{thm:time_reversal_app}
\end{theorem}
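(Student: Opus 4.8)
The plan is to establish \eqref{eq:time_reversal} first in \emph{weak form} --- pairing both sides with an arbitrary test function $\phi\in\gU$ in $L^2(E,\mu)$ --- and then to upgrade to the pointwise ($\mu$-a.e.) identity using that $\gU$ is dense in $L^2(E,\mu)$. The starting point is the definition of the time-reversal generator (\cref{def:right_and_left_generator,def:time_reversal}): since $\cev x_t = x_{(T-t)^-}$, for $f\in\gU$ one has
\[
  \cev\gL_{T-t}f(x)=\lim_{h\to0^+}\frac1h\Big(\E\big[f(x_{t-h})\mid x_t=x\big]-f(x)\Big),
\]
a conditional expectation running backward in time. Multiplying by $p_t(x)$ and applying Bayes' formula to the backward conditional law (well posed under \cref{ass:density}), $p_t(x)\,\E[f(x_{t-h})\mid x_t=x]=\int f(y)\,p_{t\mid t-h}(x\mid y)\,p_{t-h}(y)\,\mu(\dif y)$, so that for every $\phi\in\gU$ and every $h>0$,
\[
  \int p_t(x)\,\E\big[f(x_{t-h})\mid x_t=x\big]\,\phi(x)\,\mu(\dif x)
  =\E\big[f(x_{t-h})\phi(x_t)\big]
  =\int f(y)\,(U_{t,t-h}\phi)(y)\,p_{t-h}(y)\,\mu(\dif y),
\]
the middle equality using the tower and Markov properties of $(x_t)_{t\in[0,T]}$. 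Subtracting $\int f\phi p_t\,\mu(\dif x)$ from both sides and dividing by $h$ yields an exact identity valid for each $h>0$.

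Now I would send $h\to0^+$. The left side converges to $\angle{p_t\cev\gL_{T-t}f,\phi}$ by the definition of the generator (the defining limit is in $L^\infty$, and $p_t\phi\in L^1(\mu)$ supplies domination since $p_t$ is a density and $\phi$ is bounded). On the right side I would write the increments through the Kolmogorov equations, $p_{t-h}-p_t=-\int_{t-h}^t\gL_s^* p_s\,\dif s$ (forward equation) and $U_{t,t-h}\phi-\phi=\int_{t-h}^t U_{t,s}\gL_s\phi\,\dif s$ (backward equation), expand the product $f\,(U_{t,t-h}\phi)\,p_{t-h}-f\phi p_t$, discard the $O(h)$ cross term, and pass to the limit under the integral using continuity of $s\mapsto U_{t,s}\gL_s\phi$ and $s\mapsto\gL_s^* p_s$ at $s=t$ together with dominated convergence (contractivity of $(U_{t,s})$, boundedness of $f,\phi,p_t$, and the integrability hypotheses in \cref{ass:density}). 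This produces the limit
\[
  \int\big(f\,(\gL_t\phi)\,p_t - f\,\phi\,\gL_t^* p_t\big)\,\mu(\dif x)
  =\angle{\gL_t\phi,\,f p_t}-\angle{\phi,\,f\,\gL_t^* p_t}.
\]
Since $p_t\in\gV$ and $f\in\gU$, \cref{ass:V} gives $f p_t\in p_t\gU\subset\gV\subset\dom(\gL_t^*)$, so the defining relation \eqref{eq:adjoint} of the adjoint yields $\angle{\gL_t\phi,\,f p_t}=\angle{\phi,\,\gL_t^*(p_t f)}$. Hence $\angle{p_t\cev\gL_{T-t}f,\,\phi}=\angle{\gL_t^*(p_t f)-f\,\gL_t^* p_t,\,\phi}$ for all $\phi\in\gU$, and density of $\gU$ in $L^2(E,\mu)$ forces $p_t\cev\gL_{T-t}f=\gL_t^*(p_t f)-f\,\gL_t^* p_t$, which is \eqref{eq:time_reversal}. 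For the second assertion, assuming in addition $f\in\gV\subset\bigcap_t\dom(\gL_t^*)$ and $p_t(x)>0$ for all $x$, the operator $\Gamma_t^*(p_t,f)=\gL_t^*(p_t f)-p_t\gL_t^* f-f\gL_t^* p_t$ is well defined, so dividing \eqref{eq:time_reversal} by $p_t$ and substituting gives $\cev\gL_{T-t}f=p_t^{-1}\gL_t^*(p_t f)-p_t^{-1}f\,\gL_t^* p_t=\gL_t^* f+p_t^{-1}\Gamma_t^*(p_t,f)$, as claimed.

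The step I expect to be the main obstacle is the $h\to0^+$ passage to the limit inside the $\mu$-integral on the right-hand side. It requires (i) the two increment expansions to hold with a controlled remainder, which I would obtain by keeping everything in fundamental-theorem-of-calculus form and invoking strong continuity of the Feller evolution system together with the Kolmogorov forward and backward equations; and (ii) a uniform-in-$h$ $L^1(\mu)$-domination of the integrand, for which $\int p_t\,\dif\mu=1$, boundedness of $f$ and $\phi$, contractivity of $(U_{t,s})$, and the integrability in \cref{ass:density} are the relevant inputs. A secondary technical point is the well-posedness of the backward conditional law $p_{t-h\mid t}$ and the legitimacy of the Bayes manipulation near the time endpoint $t=T$, which is handled by extending the process slightly beyond $[0,T]$ as in the remark following \cref{thm:equivalence}.
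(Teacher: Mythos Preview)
Your argument is correct and reaches the same weak identity $\angle{p_t\cev\gL_{T-t}f,\phi}=\angle{\gL_t^*(p_tf)-f\gL_t^*p_t,\phi}$ as the paper, but by a genuinely different route. The paper's proof invokes as a black box an identity from \citet{cattiaux2023time} (their Theorem~3.17), namely $\E_{x_t\sim p_t}\big[(\gL_t+\cev\gL_{T-t})f\cdot g+\Gamma_t(f,g)\big]=0$ for $f,g\in\gU$, and then performs a short algebraic reduction via the carr\'e du champ to arrive at $\angle{p_t\cev\gL_{T-t}f,g}=\angle{p_tf,\gL_tg}-\angle{p_t,\gL_t(fg)}$, after which the adjoint definition and density finish the job. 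You instead derive that weak form from first principles: Bayes' rule turns $p_t(x)\E[f(x_{t-h})\mid x_t=x]$ into a forward-in-time object, the tower property identifies $\int f\,(U_{t,t-h}\phi)\,p_{t-h}\,\dif\mu$, and then the Kolmogorov backward and forward equations supply the $h\to0^+$ derivative. What your approach buys is self-containment---no external citation is needed, and the mechanism by which the density $p_t$ mediates the time reversal is fully exposed. What the paper's approach buys is brevity: the delicate limit exchange you flag as the main obstacle (domination, strong continuity, equality of left and right generators at $t$) is absorbed into the cited result. One minor point worth tightening in your write-up: the difference quotient $h^{-1}(U_{t,t-h}\phi-\phi)$ converges to the \emph{left} generator $\gL_t^-\phi$, whereas \cref{ass:U} places $\phi$ in the domain of the right generator; you should note that these agree under the Feller regularity assumed throughout.
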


\begin{proof}
    By Theorem~3.17 in~\citet{cattiaux2023time}, for any $t \in [0, T]$ and $f, g \in \gU$, the following relation holds:
    \begin{equation*}
        \E_{x_t \sim p_t}\left[ (\gL_t + \cev \gL_{T-t}) f(x_t) g(x_t) + \Gamma_t(f, g)(x_t) \right] = 0,
    \end{equation*}
    where $\Gamma_t(f, g) = \gL_t(fg) - f \gL_t g - g \gL_t f$ is the carr\'e du champ operator of the original generator $\gL_t$. 
    
    Substituting the definition of $\Gamma_t(f, g)$, we obtain
    \begin{equation*}
        \begin{aligned}
            &\angle{ p_t \cev \gL_{T-t} f, g } 
            = - \angle{ p_t \gL_t f, g } - \angle{ p_t, \Gamma_t(f, g) } \\
            =& - \angle{ p_t \gL_t f, g }  - \angle{ p_t, \gL_t(fg) - f \gL_t g - g \gL_t f } \\
            =& \angle{ p_t f, \gL_t g } - \angle{ p_t, \gL_t (fg) }.
        \end{aligned}
    \end{equation*}
    
    Since $p_t, f \in \gV$ and $fg \in \gU$, the definition of the adjoint operator yields
    \begin{equation*}
        \begin{aligned}
            &\angle{ p_t \cev \gL_{T-t} f, g } = \angle{ p_t f, \gL_t g } - \angle{ p_t, \gL_t (fg) }\\
            =& \angle{ \gL_t^* (p_t f), g } - \angle{ f  \gL_t^* p_t, g }\\
            =& \angle{ \gL_t^* (p_t f) - f  \gL_t^* p_t, g },
        \end{aligned}
    \end{equation*}
    which establishes the identity \eqref{eq:time_reversal}.
\end{proof}

For completeness, we state the Kolmogorov forward equation under the current framework.
\begin{theorem}[Kolmogorov Forward Equation]
    Under~\cref{ass:U,ass:density,ass:V}, the transition density $p_{t | 0}(x | x_0)$ satisfies the evolution equation
    \begin{equation*}
        \partial_t p_{t|0}(x|x_0) = \gL_t^* p_{t|0}(x|x_0).
    \end{equation*}
    In particular, the marginal density satisfies
    \begin{equation*}
        \partial_t p_t = \gL_t^* p_t.
    \end{equation*}
    \label{thm:kolmogorov_forward}
\end{theorem}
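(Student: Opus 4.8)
The plan is to derive the forward equation from Dynkin's formula by testing against functions in $\gU$ and then invoking the adjoint relation. Fix $x_0 \in E$ and a test function $f \in \gU$. Since $f$ does not depend on time, $\partial_s f \equiv 0$, and $f \in \dom(\gL_s)$ for every $s$ by \cref{ass:U}, so \cref{cor:dynkin} (with the deterministic ``stopping time'' $t$) yields
\begin{equation*}
    \E[f(x_t) \mid x_0] - f(x_0) = \E\left[\int_0^t \gL_s f(x_s)\dif s \,\Big|\, x_0\right].
\end{equation*}
By \cref{ass:density}, the conditional law $p_{t|0}(\cdot \mid x_0)$ and all the intermediate laws $p_{s|0}(\cdot \mid x_0)$ admit densities lying in $\gV \subset \gU$, so both sides can be rewritten as integrals against $\mu$; applying Fubini on the right-hand side (licensed by the integrability condition in \cref{ass:density}) gives
\begin{equation*}
    \angle{f, p_{t|0}(\cdot\mid x_0)} - f(x_0) = \int_0^t \angle{\gL_s f, p_{s|0}(\cdot \mid x_0)}\dif s.
\end{equation*}

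Next I would differentiate in $t$. The map $s \mapsto \angle{\gL_s f, p_{s|0}(\cdot\mid x_0)}$ is integrable on $[0,T]$ by \cref{ass:density}, so by the Lebesgue differentiation theorem the right-hand side is differentiable for a.e.\ $t$ with derivative $\angle{\gL_t f, p_{t|0}(\cdot\mid x_0)}$. Since $p_{t|0}(\cdot\mid x_0) \in \gV \subset \bigcap_t \dom(\gL_t^*)$ (by \cref{ass:V} together with \cref{ass:density}), the defining property of the adjoint, \cref{def:adjoint}, converts this into $\angle{f, \gL_t^* p_{t|0}(\cdot\mid x_0)}$. Hence
\begin{equation*}
    \partial_t \angle{f, p_{t|0}(\cdot\mid x_0)} = \angle{f, \gL_t^* p_{t|0}(\cdot\mid x_0)} \qquad \text{for all } f \in \gU.
\end{equation*}
Because $\gU$ is dense in $C_b(E)$ and hence in $L^2(E,\mu)$ (\cref{ass:U}), this identity characterises $\partial_t p_{t|0}(\cdot\mid x_0)$ as an element of $L^2(E,\mu)$, and we conclude $\partial_t p_{t|0}(\cdot\mid x_0) = \gL_t^* p_{t|0}(\cdot\mid x_0)$.

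The marginal statement then follows by writing $p_t(x) = \int_E p_{t|0}(x\mid x_0)\,p_0(x_0)\,\dif\mu(x_0)$, integrating the conditional equation against $p_0$, and using linearity of $\gL_t^*$ together with Fubini; alternatively one repeats the argument above verbatim with $p_t$ in place of $p_{t|0}(\cdot\mid x_0)$, using $p_t \in \gV$ from \cref{ass:density}.

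The main obstacle is the differentiation-under-the-integral step and the precise sense in which the resulting PDE holds: strictly, the argument above only produces the equation in the weak ($L^2$, or a.e.-$t$) sense, and passing to a genuine pointwise statement — or even making sense of $\partial_t p_{t|0}$ pointwise — relies on the regularity of the densities implicitly built into \cref{ass:density} (plus, e.g., continuity in $t$ of $\gL_t^* p_{t|0}$). One should also carefully check the two Fubini interchanges (swapping $\E[\int_0^t\,\cdot\,\dif s]$ with $\int_0^t \E[\,\cdot\,]\,\dif s$, and interchanging the spatial integral with the time integral), which are exactly what the integrability hypothesis $\E[\int_0^T |\gL_t f(x_t)|\,\dif t] < +\infty$ in \cref{ass:density} is there to guarantee; the remaining manipulations are routine bookkeeping.
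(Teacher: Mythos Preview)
Your proposal is correct and follows essentially the same approach as the paper: apply Dynkin's formula (\cref{cor:dynkin}) to a time-independent test function $f \in \gU$, differentiate in $t$, pass to the adjoint via \cref{def:adjoint}, and conclude by density of $\gU$. You are somewhat more explicit than the paper about the Fubini and differentiation steps and the weak-versus-pointwise issue, but the underlying argument is the same.
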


\begin{proof}
    Applying Dynkin's formula (\cref{cor:dynkin}), for any $f \in \gU$, we compute
    \begin{equation*}
        \begin{aligned}
            &\angle{f, \partial_t p_{t|0}(\cdot|x_0)}=\partial_t \angle{f, p_{t|0}(\cdot|x_0)} = \partial_t \E[f(x_t)|x_0 = x] \\
            =& \E\left[(\partial_t + \gL_t) f(x_t)|x_0 = x\right] = \E\left[\gL_t f(x_t)|x_0 = x\right],
        \end{aligned}
    \end{equation*}
    which can also be written as 
    \begin{equation*}
        \E\left[\gL_t f(x_t)|x_0 = x\right] = \angle{\gL_t f, p_{t|0}(\cdot|x_0)} = \angle{f, \gL_t^* p_{t|0}(\cdot|x_0)},
    \end{equation*}
    and the proof is thus complete.
\end{proof}

\subsection{Change of Measure}

Motivated by the form of the time-reversal generator $\cev \gL_{T-t}$ in \cref{thm:time_reversal_app}, we now consider the following structural assumption for a family of generators $(\gK_t)_{t \in [0, T]}$ associated with another Markov process $(y_t)_{t \in [0, T]}$. This formulation provides a rigorous version of \cref{ass:K} in the main text.
\begin{assumption}[Parametrization of Backward Generator]
    There exists a family of generators $(\gK_t)_{t \in [0, T]}$ corresponding to a Markov process $(y_t)_{t \in [0, T]}$ such that, for each $t \in [0, T]$, the generator $\gK_t$ and the adjoint operator $\gL_t^*$ are related by a function $\varphi_t \in \gU$ in the following way:
    \begin{equation}
        \varphi_t \gK_{T-t} f = \gL_t^*(\varphi_t f) - f \gL_t^* \varphi_t,
        \label{eq:K_param_app}
    \end{equation}
    for all $f \in \gU$.
    \label{ass:K_app}
\end{assumption}

To ensure that the identity in \eqref{eq:K_param_app} is well defined, we require the following regularity properties of the density ratio $\eta_t := \varphi_t / p_t$, which informally ensures that this ratio is smooth and bounded.

\begin{assumption}[Regularity of $\eta_t$]
    For each $t \in [0, T]$, the function $\eta_t := \varphi_t p_t^{-1}$ satisfies:
    \begin{enumerate}[label=(\arabic*)]
        \item $\eta_\cdot(\cdot), \eta_\cdot^{-1}(\cdot) \in C_b([0, T] \times E)$, $\eta_\cdot(x), \eta_\cdot(x)^{-1} \in C^1([0, T])$ for any $x \in E$, and 
        $$
            \eta_t, \partial_t \eta_t, \eta_t^{-1}, \partial_t \eta_t^{-1}, \log \eta_t, \partial_t \log \eta_t \in \gU
        $$ for any $t \in [0, T]$;
        \item The function $\cev \gL_{T-t} \eta_t$, satisfying that
        $$
            \cev \gL_{T-t} \eta_t = p_t^{-1} \gL_t^*(p_t \eta_t) - p_t^{-1} \eta_t \gL_t^* p_t,
        $$ 
        is upper bounded, \emph{i.e.}, 
        $$
            \sup_{x\in E} \cev \gL_{T-t} \eta_t(x) = \sup_{x\in E}  \dfrac{p_t(x) \gL_t^* \varphi_t(x) - \varphi_t(x) \gL_t^* p_t(x)}{p_t^2(x)} < +\infty.
        $$
    \end{enumerate}
    \label{ass:regularity_alpha}
\end{assumption}

We note that condition (2) in \cref{ass:regularity_alpha} implicitly requires both $p_t$ and $\varphi_t$ to be strictly positive on $E$.
Under \cref{ass:regularity_alpha} (1), it is easy to see that $\varphi_t = p_t \eta_t \in \gV$ and also $\varphi_t f = p_t \eta_t f \in \gV$ for any $f \in \gU$, and thus~\cref{eq:K_param_app} is well-defined under \cref{def:adjoint}, \emph{i.e.}, $\gU \subset \dom(\gK_t)$ for any $t \in [0, T]$.
Analogous to~\cref{thm:time_reversal_app}, if we further assume that $f \in \gV$ and $\varphi_t(x) >0 $ for any $x \in E$,~\cref{eq:K_param_app} can be rewritten as 
\begin{equation*}
     \gK_{T-t} f = \varphi_t^{-1} \gL_t^*(\varphi_t f) - \varphi_t^{-1} f \gL_t^* \varphi_t = \gL_t^* f + \varphi_t^{-1} \Gamma_t^*(\varphi_t, f).
\end{equation*}

We now express the generator $\gK_t$ in terms of the time-reversal generator $\cev \gL_{T-t}$ and the ratio $\eta_t = \varphi_t / p_t$.
\begin{lemma}
    For any $t \in [0, T]$, under~\cref{ass:K_app,ass:regularity_alpha}, the following relation holds for any $f\in \gU$:
    \begin{equation*}
        \eta_t \gK_{T-t} f = \cev \gL_{T-t}(\eta_t f) - f \cev \gL_{T-t} \eta_t = \eta_t \cev \gL_{T-t} f + \cev \Gamma_{T-t}(\eta_t, f),
    \end{equation*}
    where $\cev \Gamma_t(f,g) =  \cev \gL_t(fg) - f\cev \gL_t g - g \cev \gL_t f$ is the carr\'e du champ operator associated with the time-reversal generator $\cev \gL_t$.
    \label{prop:K_cev_L}
\end{lemma}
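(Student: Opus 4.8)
The second equality in the statement is purely algebraic: expanding $\cev\Gamma_{T-t}(\eta_t,f)=\cev\gL_{T-t}(\eta_t f)-\eta_t\cev\gL_{T-t}f-f\cev\gL_{T-t}\eta_t$ and rearranging gives $\eta_t\cev\gL_{T-t}f+\cev\Gamma_{T-t}(\eta_t,f)=\cev\gL_{T-t}(\eta_t f)-f\cev\gL_{T-t}\eta_t$, so the whole content of the lemma is the first identity $\eta_t\gK_{T-t}f=\cev\gL_{T-t}(\eta_t f)-f\cev\gL_{T-t}\eta_t$. The plan is to derive it by applying the time-reversal generator formula \eqref{eq:time_reversal} of \cref{thm:time_reversal_app} to the two functions $\eta_t$ and $\eta_t f$, then substituting the parametrization \eqref{eq:K_param_app} of \cref{ass:K_app} and dividing by $p_t$.

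Before the computation I would collect the membership facts that make every step legitimate. By \cref{ass:regularity_alpha}(1), $\eta_t\in\gU$, and since $f\in\gU$ and $\gU$ is an algebra (\cref{ass:U}), also $\eta_t f\in\gU$; hence \cref{thm:time_reversal_app} applies to both $h=\eta_t$ and $h=\eta_t f$. Moreover $p_t\in\gV$ by \cref{ass:density}(1), so $\varphi_t=p_t\eta_t\in\gV$ and $\varphi_t f=p_t(\eta_t f)\in\gV$ by \cref{ass:V}(2); in particular $\varphi_t,\varphi_t f\in\dom(\gL_t^*)$, so $\gL_t^*\varphi_t$ and $\gL_t^*(\varphi_t f)$ are well-defined elements of $L^2(E,\mu)$, and since $\eta_t$ and $f$ are bounded, every product appearing below lies in $L^2(E,\mu)$. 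Finally $p_t>0$ and $\varphi_t>0$ on $E$ by \cref{ass:regularity_alpha}.

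The computation itself is short. Applying \eqref{eq:time_reversal} with $h=\eta_t f$ and using $p_t\eta_t=\varphi_t$ gives $p_t\cev\gL_{T-t}(\eta_t f)=\gL_t^*(\varphi_t f)-\eta_t f\,\gL_t^*p_t$; applying it with $h=\eta_t$ gives $p_t\cev\gL_{T-t}\eta_t=\gL_t^*\varphi_t-\eta_t\,\gL_t^*p_t$. Forming $p_t\bigl[\cev\gL_{T-t}(\eta_t f)-f\,\cev\gL_{T-t}\eta_t\bigr]$, the two terms containing $\gL_t^*p_t$ cancel exactly, leaving $\gL_t^*(\varphi_t f)-f\,\gL_t^*\varphi_t$, which by \eqref{eq:K_param_app} equals $\varphi_t\gK_{T-t}f$. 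Dividing by $p_t$ and using $\varphi_t p_t^{-1}=\eta_t$ yields $\cev\gL_{T-t}(\eta_t f)-f\,\cev\gL_{T-t}\eta_t=\eta_t\gK_{T-t}f$, which is the first identity. Equivalently, to avoid dividing by $p_t$ one may pair both sides against an arbitrary $g\in\gU$, multiply through by $g\,p_t^{-1}$, and conclude by density of $\gU$ in $C_b(E)$ together with \cref{def:adjoint}; this is the route taken for the informal \cref{lem:alpha} in the main text.

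I expect the only genuine obstacle to be the bookkeeping in the second paragraph — confirming that every intermediate expression names a bona fide function in the right space ($\gU$, $\gV$, $\dom(\gL_t^*)$, or $L^2(E,\mu)$) so that \eqref{eq:time_reversal}, \eqref{eq:K_param_app}, and the term-by-term cancellation are all justified. Once those membership facts are in hand, the identity follows by elementary algebra, with no further analytic input.
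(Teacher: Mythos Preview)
Your proposal is correct and takes essentially the same approach as the paper: apply the time-reversal formula \eqref{eq:time_reversal} to both $\eta_t$ and $\eta_t f$, cancel the $\gL_t^*p_t$ terms, and invoke \eqref{eq:K_param_app}. The paper carries this out in weak form (pairing against a test function $g\in\gU$ throughout), which is precisely the alternative route you describe in your final paragraph; your pointwise version with division by $p_t$ is a minor cosmetic variant of the same argument.
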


\begin{proof}
    For any functions $f, g$ that allow the following computation (\emph{cf.} \cref{ass:U,ass:V}), using the definition of $\eta_t$, we have that
    \begin{equation*}
        \angle{ g, \cev \gL_{T-t}(\eta_t f) - f \cev \gL_{T-t} \eta_t } = \angle{ g p_t^{-1}, p_t \cev \gL_{T-t}(\varphi_t p_t^{-1} f) } - \angle{ g p_t^{-1}, f p_t \cev \gL_{T-t}(\varphi_t p_t^{-1}) }.
    \end{equation*}
    Notice that by the time-reversal formula (\cref{thm:time_reversal_app}), we have 
    \begin{equation*}
        p_t \cev \gL_{T-t}(\varphi_t p_t^{-1} f) = \gL_t^*(\varphi_t f) - \varphi_t p_t^{-1} f \gL_t^* p_t \quad \text{and} \quad p_t \cev \gL_{T-t}(\varphi_t p_t^{-1}) = \gL_t^*(\varphi_t) - \varphi_t p_t^{-1} \gL_t^* p_t,
    \end{equation*}
    and a careful algebraic manipulation shows that
    \begin{equation*}
        \begin{aligned}
            &\angle{ g, \cev \gL_{T-t}(\eta_t f) - f \cev \gL_{T-t} \eta_t } \\
            =& \angle{ g p_t^{-1}, \gL_t^*(\varphi_t f)} - \angle{ g p_t^{-1}, \varphi_t p_t^{-1} f \gL_t^* p_t } - \angle{ g p_t^{-1}, f \gL_t^* \varphi_t} + \angle{ g p_t^{-1}, f \varphi_t p_t^{-1} \gL_t^* p_t } \\
            =& \angle{ g p_t^{-1}, \gL_t^*(\varphi_t f)} - \angle{ g p_t^{-1}, f \gL_t^* \varphi_t } = \angle{ g p_t^{-1}, \gL_t^*(\varphi_t f) - f \gL_t^* \varphi_t },
        \end{aligned}
    \end{equation*}
    which by~\cref{ass:K_app} is exactly
    \begin{equation*}
        \angle{ g, \cev \gL_{T-t}(\eta_t f) - f \cev \gL_{T-t} \eta_t } = \angle{ g p_t^{-1}, \varphi_t \gK_{T-t} f } = \angle{ g, \eta_t \gK_{T-t} f },
    \end{equation*}
    and the proof is complete.
\end{proof}

\begin{corollary}
    Under~\cref{ass:K_app,ass:regularity_alpha}, suppose any $f \in C_b([0, T] \times E)$ satisfying that 
    \begin{enumerate}[label=(\arabic*)]
        \item $f(\cdot, x) \in C^1([0, T])$ for any $x \in E$;
        \item $f(s, \cdot), \partial_s f(s, \cdot) \in \gU$ for any $s \in \sT$,
    \end{enumerate}
    then the following relation holds:
    \begin{equation}
        \eta_{T-\cdot}(\cdot) \tilde \gK f 
        = \tilde{\cev \gL}(\eta_{T-\cdot}(\cdot) f) - f \tilde{\cev \gL} \eta_{T-\cdot}(\cdot) 
        = \eta_{T-\cdot}(\cdot) \tilde{\cev \gL} f + \tilde{\cev \Gamma}(\eta_{T-\cdot}(\cdot), f),
        \label{eq:K_cev_L}
    \end{equation}
    where $\tilde \gK$ is the augmented generator of the augmented process $(\tilde y_t)_{t\in[0, T]}$ defined according to \cref{def:augmented}, and $\tilde{\cev \Gamma}(f, g) = \tilde{\cev \gL}(fg) - f \tilde{\cev \gL}g - g \tilde{\cev \gL}f$ is the carr\'e du champ operator associated with the augmented time-reversal generator $\tilde{\cev \gL}$.
    \label{cor:augmented_K_cev_L}
\end{corollary}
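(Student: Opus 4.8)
The plan is to peel off the time variable with \cref{prop:augmented_generator} and thereby reduce the claim to the non-augmented identity already established in \cref{prop:K_cev_L}. Write $\alpha(s,x) := \eta_{T-s}(x)$ for brevity. Since $(y_t)_{t\in[0,T]}$ is a Markov process with generators $(\gK_t)$ (\cref{ass:K_app}) and $(\cev x_t)_{t\in[0,T]}$ is a c\`adl\`ag Markov process with generators $(\cev\gL_t)$, \cref{prop:augmented_generator} applies to both, giving
\begin{equation*}
    \tilde\gK g(s,x) = \partial_s g(s,x) + \gK_s g(s,\cdot)(x), \qquad \tilde{\cev\gL} g(s,x) = \partial_s g(s,x) + \cev\gL_s g(s,\cdot)(x),
\end{equation*}
for every $g \in C_b([0,T]\times E)$ with $g(\cdot,x) \in C^1([0,T])$ and $g(s,\cdot), \partial_s g(s,\cdot)$ in the relevant generator domain. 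First I would verify that $f$, $\alpha$, and the product $\alpha f$ all meet these hypotheses: for $f$ this is assumed; for $\alpha$ it follows from \cref{ass:regularity_alpha}(1), which gives $\eta_\cdot(x)\in C^1([0,T])$ and $\eta_t,\partial_t\eta_t\in\gU$, together with $\gU\subset\dom(\cev\gL_t)$ (\cref{ass:U}) and $\gU\subset\dom(\gK_t)$ (noted just after \cref{ass:regularity_alpha}); and for $\alpha f$ it follows since $\gU$ is an algebra (\cref{ass:U}) and the product rule handles the time derivative.

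Next I would substitute. Using the display above and cancelling the two $(\partial_s\alpha)f$ terms,
\begin{equation*}
    \tilde{\cev\gL}(\alpha f)(s,x) - f(s,x)\,\tilde{\cev\gL}\alpha(s,x) = \alpha(s,x)\,\partial_s f(s,x) + \Big[\cev\gL_s\big(\alpha(s,\cdot)f(s,\cdot)\big)(x) - f(s,x)\,\cev\gL_s\alpha(s,\cdot)(x)\Big].
\end{equation*}
Now I would invoke \cref{prop:K_cev_L} for each fixed $s$, with time-independent test function $f(s,\cdot)\in\gU$ and parameter $t:=T-s$, so that $\eta_t=\alpha(s,\cdot)$, $\gK_{T-t}=\gK_s$, and $\cev\gL_{T-t}=\cev\gL_s$; this yields $\cev\gL_s(\alpha(s,\cdot)f(s,\cdot)) - f(s,\cdot)\,\cev\gL_s\alpha(s,\cdot) = \alpha(s,\cdot)\,\gK_s f(s,\cdot)$. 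Substituting back gives
\begin{equation*}
    \tilde{\cev\gL}(\alpha f) - f\,\tilde{\cev\gL}\alpha = \alpha\big(\partial_s f + \gK_s f\big) = \alpha\,\tilde\gK f,
\end{equation*}
which is the first asserted equality.

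Finally, the second equality is immediate from the definition $\tilde{\cev\Gamma}(\alpha,f) = \tilde{\cev\gL}(\alpha f) - \alpha\,\tilde{\cev\gL}f - f\,\tilde{\cev\gL}\alpha$: rearranging yields $\tilde{\cev\gL}(\alpha f) - f\,\tilde{\cev\gL}\alpha = \alpha\,\tilde{\cev\gL}f + \tilde{\cev\Gamma}(\alpha,f)$, and combining with the first equality completes the proof. I expect the only genuinely delicate point to be the bookkeeping that legitimizes applying \cref{prop:augmented_generator} — in particular the $C^1$-in-time and domain memberships for the product $\alpha f$, and keeping the reversed index $T-s$ of $\eta$ straight against the forward index $s$ of $\gK_s$ and $\cev\gL_s$ in the change of variable when calling \cref{prop:K_cev_L}; the algebraic core itself is a one-line consequence of that lemma.
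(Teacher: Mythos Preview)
Your proposal is correct and follows essentially the same approach as the paper's proof: both apply \cref{prop:augmented_generator} to split the augmented generators into a time derivative plus the spatial generator, then invoke \cref{prop:K_cev_L} for the spatial part and use the product rule $\partial_s(\alpha f)=(\partial_s\alpha)f+\alpha\,\partial_s f$ to reassemble. The only cosmetic difference is direction---you expand $\tilde{\cev\gL}(\alpha f)-f\,\tilde{\cev\gL}\alpha$ and reduce it to $\alpha\,\tilde\gK f$, whereas the paper starts from $\eta_{T-s}\,\tilde\gK f$ and builds up to the right-hand side.
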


\begin{proof}
    We first apply \cref{prop:augmented_generator} to the augmented time-reversal process $(\tilde{\cev x}_t)_{t\in[0, T]}$ to obtain that the augmented generator $\tilde{\cev \gL}_t$ satisfies that
    \begin{equation}
        \tilde{\cev \gL} f(s, x) = \partial_s f(s, x) + \cev \gL_s f(s, x),
        \label{eq:augmented_generator_2}
    \end{equation}
    and then apply similar arguments to the Markov process $(y_t)_{t\in[0, T]}$ assumed in \cref{ass:K_app}, we also have that 
    \begin{equation*}
        \tilde \gK f(s, x) = \partial_s f(s, x) + \gK_s f(s, x).
    \end{equation*}
    Using \cref{prop:K_cev_L}, we compute
    \begin{equation*}
        \begin{aligned}
            &\eta_{T-s}(x) \tilde \gK f(s, x) 
            = \eta_{T-s}(x) \partial_s f(s, x) +  \eta_{T-s}(x)  \gK_s f(s, x)\\
            =& \eta_{T-s}(x) \partial_s f(s, x) +  \cev \gL_s(\eta_{T-s} f(s, \cdot))(x) - f(s, x) \cev \gL_s \eta_{T-s}(x)\\
            =&  \partial_s (\eta_{T-s}(x) f(s, x)) +  \cev \gL_s(\eta_{T-s} f(s, \cdot))(x) - f(s, x)\left(\partial_s \eta_{T-s}(x) + \cev \gL_s \eta_{T-s}(x)\right)\\
            =& \tilde{\cev \gL}(\eta_{T-\cdot}(\cdot) f) - f \tilde{\cev \gL}(\eta_{T-\cdot}(\cdot)),
        \end{aligned}
    \end{equation*}
    which proves the result.
\end{proof}

\begin{corollary}
    Under the conditions of \cref{cor:augmented_K_cev_L}, we further have that 
    \begin{equation*}
        \eta_{T-\cdot}^{-1}(\cdot) \tilde{\cev \gL} f 
        = \tilde \gK(\eta_{T-\cdot}^{-1}(\cdot) f) - f \tilde \gK \eta_{T-\cdot}^{-1}(\cdot) 
        = \eta_{T-\cdot}^{-1}(\cdot) \tilde \gK f + \tilde{\Gamma}_{\gK}(\eta_{T-\cdot}^{-1}(\cdot), f),
    \end{equation*}
    where $\tilde{\Gamma}_{\gK}(f, g) = \tilde \gK(fg) - f \tilde \gK g - g \tilde \gK f$ is the carr\'e du champ operator associated with the augmented generator $\tilde \gK$.
    \label{cor:augmented_cev_L_K}
\end{corollary}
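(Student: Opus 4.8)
The plan is to recognize the claimed identity as the statement that the generalized Doob's $h$-transform is involutive: since, by \cref{cor:augmented_K_cev_L}, the function $h := \eta_{T-\cdot}(\cdot)$ transforms $\tilde{\cev\gL}$ into $\tilde\gK$, the reciprocal $h^{-1} = \eta_{T-\cdot}^{-1}(\cdot)$ should transform $\tilde\gK$ back into $\tilde{\cev\gL}$. Concretely, I would obtain the first equality $\eta_{T-\cdot}^{-1}(\cdot)\tilde{\cev\gL}f = \tilde\gK(\eta_{T-\cdot}^{-1}(\cdot)f) - f\,\tilde\gK\eta_{T-\cdot}^{-1}(\cdot)$ by applying \cref{cor:augmented_K_cev_L} twice --- once to the test function $\eta_{T-\cdot}^{-1}(\cdot)f$ and once to $\eta_{T-\cdot}^{-1}(\cdot)$ --- and the second equality is then nothing more than the definition of the carr\'e du champ operator $\tilde{\Gamma}_{\gK}(g_1,g_2) = \tilde\gK(g_1 g_2) - g_1\tilde\gK g_2 - g_2\tilde\gK g_1$, applied with $g_1 = \eta_{T-\cdot}^{-1}(\cdot)$ and $g_2 = f$.

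First I would verify that $\eta_{T-\cdot}^{-1}(\cdot)f$ and $\eta_{T-\cdot}^{-1}(\cdot)$ are admissible test functions for \cref{cor:augmented_K_cev_L}: both lie in $C_b([0,T]\times E)$, are $C^1$ in the time variable, and have time-slices and time-derivatives in $\gU$. This is immediate from \cref{ass:regularity_alpha}(1), which supplies $\eta_\cdot^{-1}(\cdot)\in C_b([0,T]\times E)$, $\eta_\cdot^{-1}(x)\in C^1([0,T])$, and $\eta_t^{-1},\partial_t\eta_t^{-1}\in\gU$, together with the hypotheses on $f$ and the fact that $\gU$ is an algebra (\cref{ass:U}); the same hypotheses on $f$ also place $f$ itself in $\dom(\tilde{\cev\gL})$ with $\tilde{\cev\gL}f = \partial_s f + \cev\gL_s f$ by the argument of \cref{prop:augmented_generator}. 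I would also record the one point that deserves a word of justification: $h\cdot h^{-1} = \eta_{T-\cdot}(\cdot)\eta_{T-\cdot}^{-1}(\cdot) = 1$ lies in $\gU\subset\dom(\tilde{\cev\gL})$ as a product of two elements of the algebra $\gU$, and $\tilde{\cev\gL}1 = 0$ because the time-reversal process is a conservative Markov process, so its augmented transition semigroup fixes constants.

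Writing $h = \eta_{T-\cdot}(\cdot)$ for brevity, I would then apply \cref{cor:augmented_K_cev_L} to the test function $h^{-1}f$ to get $h\,\tilde\gK(h^{-1}f) = \tilde{\cev\gL}(h\,h^{-1}f) - (h^{-1}f)\,\tilde{\cev\gL}h = \tilde{\cev\gL}f - h^{-1}f\,\tilde{\cev\gL}h$, and to the test function $h^{-1}$ to get $h\,\tilde\gK h^{-1} = \tilde{\cev\gL}(h\,h^{-1}) - h^{-1}\,\tilde{\cev\gL}h = -\,h^{-1}\,\tilde{\cev\gL}h$, where $\tilde{\cev\gL}(hh^{-1}) = \tilde{\cev\gL}1 = 0$ was used. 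Subtracting $f$ times the second identity from the first cancels the $h^{-1}\tilde{\cev\gL}h$ terms and leaves $h\big[\tilde\gK(h^{-1}f) - f\,\tilde\gK h^{-1}\big] = \tilde{\cev\gL}f$; multiplying through by $h^{-1}$ yields the first equality, and the second follows from the definition of $\tilde{\Gamma}_{\gK}$ as noted above. The only mildly delicate steps are the admissibility/domain bookkeeping and the identification $\tilde{\cev\gL}1 = 0$; everything else is a two-line cancellation, and I do not anticipate a genuine obstacle, since the underlying structure is exactly the inverse-Doob-transform mechanism already encoded in \cref{thm:doob}.
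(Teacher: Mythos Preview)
Your proposal is correct and follows essentially the same route as the paper's proof: both apply \cref{eq:K_cev_L} (the content of \cref{cor:augmented_K_cev_L}) once with test function $\eta_{T-\cdot}^{-1}(\cdot)$ and once with $\eta_{T-\cdot}^{-1}(\cdot)f$, use $\tilde{\cev\gL}1=0$, and combine the two identities to cancel the $\tilde{\cev\gL}\eta_{T-\cdot}(\cdot)$ terms. Your attention to the domain/admissibility bookkeeping via \cref{ass:regularity_alpha}(1) and \cref{ass:U} is a welcome addition that the paper leaves implicit.
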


\begin{proof}
    By taking $f = \eta_{T-\cdot}^{-1}(\cdot)$ in~\cref{eq:K_cev_L}, we have
    \begin{equation*}
        \eta_{T-\cdot}(\cdot) \tilde \gK \eta_{T-\cdot}^{-1}(\cdot) 
        = \tilde{\cev \gL}1 - \eta_{T-\cdot}^{-1}(\cdot) \tilde{\cev \gL} \eta_{T-\cdot}(\cdot) = - \eta_{T-\cdot}^{-1}(\cdot) \tilde{\cev \gL} \eta_{T-\cdot}(\cdot),
    \end{equation*}
    and also by taking $f = \eta_{T-\cdot}^{-1}(\cdot) g$ in~\cref{eq:K_cev_L}, we have
    \begin{equation*}
        \begin{aligned}
            &\eta_{T-\cdot}(\cdot) \tilde \gK (\eta_{T-\cdot}^{-1}(\cdot) g) 
            = \tilde{\cev \gL}(\eta_{T-\cdot}(\cdot) \eta_{T-\cdot}^{-1}(\cdot) g) - g \eta_{T-\cdot}^{-1}(\cdot) \tilde{\cev \gL} \eta_{T-\cdot}(\cdot)\\
            =& \tilde{\cev \gL} g + g \eta_{T-\cdot}(\cdot) \tilde \gK \eta_{T-\cdot}^{-1}(\cdot),
        \end{aligned}
    \end{equation*}
    and thus 
    \begin{equation*}
        \eta_{T-\cdot}^{-1}(\cdot) \tilde{\cev \gL} g = \tilde \gK (\eta_{T-\cdot}^{-1}(\cdot) g) - g \tilde \gK \eta_{T-\cdot}^{-1}(\cdot),
    \end{equation*}
    and the proof is complete.
\end{proof}

The relation of $\gK_t$ (resp., $\tilde \gK$) and the time-reversal generator $\cev \gL_t$ (resp., $\tilde{\cev \gL}$) in~\cref{prop:K_cev_L} can be regarded as applying to $\cev \gL_t$ (resp., $\tilde{\cev \gL}$) a specific form of ``perturbation'' involving the carr\'e du champ operator acting with the ratio $\eta_{T-t}$:
$$
    \eta_{T-t}^{-1} \cev \Gamma_t(\eta_{T-t}, \cdot)\quad \big( \text{resp.}\quad \eta_{T-\cdot}(\cdot)^{-1} \tilde{\cev \Gamma}(\eta_{T-\cdot}(\cdot), \cdot) \big),
$$    
which allows an exponential change of measure argument, as summarized in the following theorem:
\begin{theorem}[Change of Measure]
    Let $\sQ$ be another probability measure absolutely continuous with respect to $\P$. Suppose that, for any $t \in [0, T]$, the conditional expected log-Radon-Nikodym derivative satisfies
    \begin{equation}
        \E\left[\log\dfrac{\dif \sQ}{\dif \P}\bigg|_{\cev \gF_t}\right] = \E\left[- \int_0^t \left(\eta_{T-s}^{-1}(\cev x_s) \cev \gL_s \eta_{T-s}(\cev x_s) - \cev \gL_s \log \eta_{T-s}(\cev x_s) \right) \dif s\right],
        \label{eq:radon_nikodym}
    \end{equation}
    then the time-reversal process $(\cev x_t)_{t\in[0, T]}$ under the probability measure $\sQ$ is governed by the generator $\gK_t$.

    Furthermore, the original probability measure $\P$ is also absolutely continuous to $\sQ$, and the conditional expected log-Radon-Nikodym derivative above satisfies the following relation:
    \begin{equation*}
        \E\left[\log\dfrac{\dif \P}{\dif \sQ}\bigg|_{\cev \gF_t}\right] = \E\left[\int_0^t \left(\eta_{T-s}^{-1}(\cev x_s) \cev \gL_s \eta_{T-s}(\cev x_s) - \cev \gL_s \log \eta_{T-s}(\cev x_s) \right) \dif s\right].
    \end{equation*}
    \label{thm:change_of_measure_app}
\end{theorem}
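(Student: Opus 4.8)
The plan is to realize the measure $\sQ$ via the generalized Doob's $h$-transform (\cref{thm:doob}) applied to the \emph{augmented} time-reversal process $(\tilde{\cev x}_t)_{t\in[0,T]}$, using the function $h_t(s,x) = \eta_{T-s}(x)$ as the $h$-function and $\lambda = h^{-1}\tilde{\cev\gL}h$ as the conservative choice. Concretely, I would first invoke \cref{cor:augmented_K_cev_L}, which already tells us that
\begin{equation*}
    \eta_{T-\cdot}(\cdot)\,\tilde\gK f = \tilde{\cev\gL}(\eta_{T-\cdot}(\cdot) f) - f\,\tilde{\cev\gL}\eta_{T-\cdot}(\cdot),
\end{equation*}
i.e., the augmented generator $\tilde\gK$ of the parametrized process $(\tilde y_t)$ is exactly the Doob transform $\tilde{\cev\gL}^{h,\lambda}$ of $\tilde{\cev\gL}$ with $h = \eta_{T-\cdot}(\cdot)$ and $\lambda = h^{-1}\tilde{\cev\gL}h$. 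The regularity in \cref{ass:regularity_alpha} — in particular that $\eta_t,\eta_t^{-1},\partial_t\eta_t \in \gU$ and $\sup_x \cev\gL_{T-t}\eta_t(x) < +\infty$ — is precisely what is needed so that $h$ is strictly positive, lies in the domain, and the exponent below is integrable.

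Next I would write down the Radon–Nikodym derivative produced by \cref{thm:doob}, \cref{eq:doob}, for the transformed process, using the substitution $h_t^{-1}(\partial_t + \cev\gL)h_t = \eta_{T-s}^{-1}(\cev x_s)\big(\partial_s \eta_{T-s}(\cev x_s) + \cev\gL_s\eta_{T-s}(\cev x_s)\big)$ via \cref{eq:augmented_generator_2} in the proof of \cref{cor:augmented_K_cev_L}. The boundary factor $h_T(\tilde x_T)/h_0(\tilde x_0) = \eta_0(\cev x_T)/\eta_T(\cev x_0)$ should be combined with the time-derivative part $\partial_s\log\eta_{T-s}(\cev x_s)$ inside the integral: by the fundamental theorem of calculus along the path (and using $\log\eta_t,\partial_t\log\eta_t\in\gU$), $\int_0^t \partial_s\log\eta_{T-s}(\cev x_s)\,\dif s$ together with the jump contributions telescopes against the boundary term, and the net effect is to replace $\cev\gL_s\eta_{T-s}$ by $\cev\gL_s\eta_{T-s} - \eta_{T-s}\cev\gL_s\log\eta_{T-s}$, which after multiplying by $\eta_{T-s}^{-1}$ gives exactly the integrand $\eta_{T-s}^{-1}(\cev x_s)\cev\gL_s\eta_{T-s}(\cev x_s) - \cev\gL_s\log\eta_{T-s}(\cev x_s)$ appearing in \cref{eq:radon_nikodym}. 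This identification is where one must be careful: the bookkeeping that converts the "raw" Doob exponent plus boundary term into the symmetric-looking expression in the statement is the main obstacle, and it relies on applying Dynkin's formula (\cref{cor:dynkin}) to $\log\eta_{T-s}$ along the time-reversed path to control $\int_0^t(\partial_s + \cev\gL_s)\log\eta_{T-s}(\cev x_s)\dif s$.

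Finally, for the second assertion — that $\P \ll \sQ$ with the reciprocal density — I would argue by symmetry using \cref{cor:augmented_cev_L_K}, which gives the reverse relation $\eta_{T-\cdot}^{-1}(\cdot)\tilde{\cev\gL}f = \tilde\gK(\eta_{T-\cdot}^{-1}(\cdot)f) - f\tilde\gK\eta_{T-\cdot}^{-1}(\cdot)$: this says $\tilde{\cev\gL}$ is itself the Doob transform of $\tilde\gK$ with $h$-function $\eta_{T-\cdot}^{-1}(\cdot)$. Applying \cref{thm:doob} in this direction and running the same bookkeeping produces $\dfrac{\dif\P}{\dif\sQ}\big|_{\cev\gF_t}$ with the opposite sign in the exponent, and since $\tfrac{\dif\sQ}{\dif\P}\cdot\tfrac{\dif\P}{\dif\sQ} = 1$ holds $\P$-a.s. (hence $\sQ$-a.s.), the two formulas are mutually consistent and both measures are equivalent on each $\cev\gF_t$. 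The only subtlety here is checking that the roles of "base measure" and "transformed measure" can indeed be interchanged, which is guaranteed because $\eta_{T-\cdot}^{-1}(\cdot)$ satisfies the same regularity hypotheses (\cref{ass:regularity_alpha}(1)) as $\eta_{T-\cdot}(\cdot)$ and because the exponential local martingale defining the change of measure is strictly positive.
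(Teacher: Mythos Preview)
Your proposal is correct and follows essentially the same route as the paper. The paper's proof also works at the level of the augmented time-reversal process, invokes \cref{cor:augmented_K_cev_L} to recognize $\tilde\gK$ as the Doob transform of $\tilde{\cev\gL}$, obtains the Radon--Nikodym derivative with the boundary factor $\eta_{T-t}(\cev x_t)/\eta_T(\cev x_0)$, absorbs that factor using Dynkin's formula applied to $\log\eta_{T-\cdot}$, then cancels the $\partial_s$ terms via $\eta_{T-s}\partial_s\log\eta_{T-s}=\partial_s\eta_{T-s}$, and finally handles the reverse absolute continuity by the symmetric relation in \cref{cor:augmented_cev_L_K}. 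The only cosmetic difference is that the paper cites Theorem~4.2 of Palmowski--Rolski (2002) for the exponential change of measure, whereas you cite the paper's own \cref{thm:doob}; these are the same underlying result.
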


\begin{proof}
    Based on \cref{cor:augmented_K_cev_L}, we apply Theorem~4.2 in~\citet{palmowski2002technique} to the augmented generator $\tilde{\cev \gL}$ of the augmented time-reversal process $(\tilde{\cev x}_t)_{t\in[0, T]}$ by verifying condition (M2) in Proposition~3.2 therein, and obtain that under another probability measure $\tilde \sQ$ absolutely continuous to $\tilde \P$, 
    the augmented time-reversal process $(\tilde{\cev x}_t)_{t\in[0, T]}$ is governed by the augmented generator $\tilde \gK$ and the Radon-Nikodym derivative is given by
    \begin{equation*}
        \dfrac{\dif \tilde \sQ}{\dif \tilde \P}\bigg|_{\tilde{\cev \gF}_t} = \dfrac{\eta_{T-t}(\cev x_t)}{\eta_T(\cev x_0)} \exp\left(- \int_0^t \eta_{T-s}^{-1}(\cev x_s) \tilde{\cev \gL} \eta_{T-s}(\cev x_s) \dif s \right),
    \end{equation*}
    and thus 
    \begin{equation*}
        \begin{aligned}
            \E\left[\log\dfrac{\dif \tilde \sQ}{\dif \tilde \P}\bigg|_{\tilde{\cev \gF}_t}\right] 
            =& \E\left[- \int_0^t \eta_{T-s}^{-1}(\cev x_s) \tilde{\cev \gL} \eta_{T-s}(\cev x_s) \dif s + \log\eta_{T-t}(\cev x_t) - \log\eta_T(\cev x_0)\right]\\
            =& \E\left[- \int_0^t \left(\eta_{T-s}^{-1}(\cev x_s) \tilde{\cev \gL} \eta_{T-s}(\cev x_s) - \tilde{\cev \gL} \log \eta_{T-s}(\cev x_s) \right) \dif s\right],
        \end{aligned}
    \end{equation*} 
    where the boundary terms $\log\eta_{T-t}(\cev x_t)-\log\eta_T(\cev x_0)$ are rewritten via Dynkin's formula (\cref{thm:dynkin}) applied to the time-inhomogeneous function $(s,x)\mapsto \log\eta_{T-s}(x)$ on the interval $[0,t]$.

    Notice that in the construction of the augmented processes (\cref{def:augmented}), no extra randomness is introduced apart from that from the probability space $(\Omega, \gF, \P)$, and thus we may define the probability measure $\sQ$ on the original probability space $(\Omega, \gF, \P)$ under which the time-reversal process $(\cev x_t)_{t\in[0, T]}$ is governed by the generator $\gK_t$ and the Radon-Nikodym derivative is given by
    \begin{equation*}
        \begin{aligned}
            &\E\left[\log\dfrac{\dif \sQ}{\dif \P}\bigg|_{\cev \gF_t}\right] := \E\left[\log\dfrac{\dif \tilde \sQ}{\dif \tilde \P}\bigg|_{\tilde{\cev \gF}_t}\right] \\
            =& \E\left[- \int_0^t \left(\eta_{T-s}^{-1}(\cev x_s) \tilde{\cev \gL} \eta_{T-s}(\cev x_s) - \tilde{\cev \gL} \log \eta_{T-s}(\cev x_s) \right) \dif s\right] \\
            =& \E\left[- \int_0^t \bigg(\eta_{T-s}^{-1}(\cev x_s) \left(\partial_s \eta_{T-s}(\cev x_s) + \cev \gL_s \eta_{T-s}(\cev x_s)\right) - \left(\partial_s \log \eta_{T-s}(\cev x_s) + \cev \gL_s \log \eta_{T-s}(\cev x_s) \right) \bigg) \dif s\right] \\
            =& \E\left[- \int_0^t \left(\eta_{T-s}^{-1}(\cev x_s) \cev \gL_s \eta_{T-s}(\cev x_s) - \cev \gL_s \log \eta_{T-s}(\cev x_s) \right) \dif s\right] ,
        \end{aligned}
    \end{equation*}
    where the second equality is due to~\cref{eq:augmented_generator_2} and the last equality is by 
    \begin{equation*}
        \eta_{T-s } \partial_s \log \eta_{T-s} = \partial_s \eta_{T-s}.
    \end{equation*}

    To prove the absolute continuity of $\P$ to $\sQ$, one applies the same argument to the augmented generator $\tilde \gK$ and the original generator $\gK_t$ of the Markov process $(y_t)_{t\in[0, T]}$ under the probability measure $\sQ$ and $\tilde \sQ$, respectively, given the symmetry of the two augmented generators $\tilde \gK$ and $\tilde{\cev \gL}$ (\cref{cor:augmented_cev_L_K}), and the proof is thus completed.
\end{proof}

\begin{proof}[Proof of~\cref{cor:error_bound}]
    The error bound is obtained from~\cref{thm:change_of_measure} by 
    \begin{equation}
        \KL(p_0 \| q_T) \leq \KL(p_{0:T} \| q_{0:T}) = \KL(p_T \| q_0) + \KL(p_{0:T} \| q_{0:T}|p_T),
        \label{eq:data_processing}
    \end{equation}
    where the first inequality is by applying the data-processing inequality and the second equality is by the chain rule of KL divergence. The conditional KL divergence $\KL(p_{0:T} \| q_{0:T}|p_T)$, defined as 
    \begin{equation*}
        \KL(p_{0:T} \| q_{0:T}|p_T) = \E_{\cev x_0 \sim p_T}[\KL(p_{0:T}(\cev x_{0:T}) \| q_{0:T}(\cev x_{0:T}))| \cev x_0],
    \end{equation*} 
    can be further bounded by the change-of-measure theorem~\cref{thm:change_of_measure} by aligning the two processes $(\cev x_t)_{t \in [0, T]}$ and $(y_t)_{t \in [0, T]}$ at time $0$ with probability distribution $\cev p_0 = p_T$.
\end{proof}

\begin{corollary}
    Under the conditions in~\cref{thm:change_of_measure_app}, the Radon-Nikodym derivative~\eqref{eq:radon_nikodym} further satisfies the following relation:
    \begin{equation}
        \KL(\P \| \sQ) = \E\left[\log\dfrac{\dif \P}{\dif \sQ}\right] =  \E\left[ \int_0^T \eta_s(x_s) \gL_s\eta_s^{-1}(x_s) + \gL_s \log \eta_s(x_s)  \dif s\right] := \mathfrak{L}[\eta_t].
        \label{eq:radon_nikodym_2}
    \end{equation}
    \label{cor:change_of_measure}
\end{corollary}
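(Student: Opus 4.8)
The statement to prove is that the KL divergence $\KL(\P\|\sQ)=\E_{\P}[\log(\dif\P/\dif\sQ)]$, with $\sQ$ the measure of \cref{thm:change_of_measure_app}, equals $\mathfrak{L}[\eta_t]$. Since absolute continuity $\P\ll\sQ$ and the explicit form of $\dif\P/\dif\sQ|_{\cev\gF_t}$ are already given by \cref{thm:change_of_measure_app}, the plan is to take logarithms at $t=T$, take $\E_{\P}$, and then perform two reductions: (i) replace the path expectation over the time-reversal process by a time-integral of marginal expectations over the forward process, and (ii) trade the backward generator $\cev\gL_{T-t}$ for the forward generator $\gL_t$ using the time-reversal identity \cref{thm:time_reversal_app}.

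First I would change the time variable $t=T-s$ in the integral $\int_0^T\bigl(\eta_{T-s}^{-1}(\cev x_s)\cev\gL_s\eta_{T-s}(\cev x_s)-\cev\gL_s\log\eta_{T-s}(\cev x_s)\bigr)\dif s$. Because $\cev x_s=x_{(T-s)^-}=x_{t^-}$ and the jump times of the c\`adl\`ag process form a Lebesgue-null set, $x_{t^-}$ may be replaced by $x_t$ for a.e.\ $t$, and $\cev\gL_s$ becomes $\cev\gL_{T-t}$. Applying Fubini (justified by \cref{ass:density}(3) and the boundedness in \cref{ass:regularity_alpha}, or by the non-negativity of the integrand of $\mathfrak{L}[\eta_t]$ noted in the main text), one obtains $\KL(\P\|\sQ)=\int_0^T\angle{p_t,\ \eta_t^{-1}\cev\gL_{T-t}\eta_t-\cev\gL_{T-t}\log\eta_t}\dif t$, where $\angle{\cdot,\cdot}$ is the $L^2(E,\mu)$ inner product and $\angle{p_t,g}=\E_{x_t\sim p_t}[g(x_t)]$ via the density $p_t$ from \cref{ass:density}.

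Then I would handle the two terms inside $\angle{p_t,\cdot}$ separately. For the first, multiply by $p_t$ and use \cref{thm:time_reversal_app} with $f=\eta_t$: $p_t\cev\gL_{T-t}\eta_t=\gL_t^*(p_t\eta_t)-\eta_t\gL_t^*p_t=\gL_t^*\varphi_t-\eta_t\gL_t^*p_t$, using $\varphi_t=p_t\eta_t$. Hence $\angle{p_t,\eta_t^{-1}\cev\gL_{T-t}\eta_t}=\angle{\eta_t^{-1},\gL_t^*\varphi_t}-\angle{1,\gL_t^*p_t}$; the second inner product is $\int\gL_t^*p_t\,\dif\mu=\partial_t\int p_t\,\dif\mu=0$ by conservativeness of the Feller evolution system together with \cref{thm:kolmogorov_forward}, while the first equals $\angle{\gL_t\eta_t^{-1},\varphi_t}=\angle{p_t,\eta_t\gL_t\eta_t^{-1}}$ by the adjointness relation \cref{def:adjoint} (valid since $\eta_t^{-1}\in\gU$ by \cref{ass:regularity_alpha}(1) and $\varphi_t=p_t\eta_t\in\gV\subset\dom(\gL_t^*)$ by \cref{ass:V}). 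For the second term, \cref{thm:time_reversal_app} with $f=\log\eta_t$ gives $p_t\cev\gL_{T-t}\log\eta_t=\gL_t^*(p_t\log\eta_t)-\log\eta_t\,\gL_t^*p_t$; the first summand again integrates to $0$ (with $p_t\log\eta_t\in\gV\subset\dom(\gL_t^*)$ by \cref{ass:V}(2)), and $\angle{\log\eta_t,\gL_t^*p_t}=\angle{p_t,\gL_t\log\eta_t}$ by \cref{def:adjoint} (using $\log\eta_t\in\gU$ by \cref{ass:regularity_alpha}(1)). Collecting, the integrand becomes $\E_{x_t\sim p_t}[\eta_t\gL_t\eta_t^{-1}+\gL_t\log\eta_t]$, and Fubini in reverse gives $\KL(\P\|\sQ)=\E_{\P}\bigl[\int_0^T(\eta_t\gL_t\eta_t^{-1}+\gL_t\log\eta_t)(x_t)\dif t\bigr]=\mathfrak{L}[\eta_t]$.

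The algebra here is short, so the main obstacle is the measure-theoretic bookkeeping rather than any clever manipulation: one must check that every function to which a generator or an adjoint is applied lies in the correct domain ($\eta_t,\eta_t^{-1},\log\eta_t\in\gU$; $\varphi_t,p_t\log\eta_t,p_t\in\gV\subset\dom(\gL_t^*)$), and — the most delicate point — justify $\int\gL_t^*h\,\dif\mu=0$, which is the $L^1$ counterpart of $\gL_t 1=0$ but cannot be read directly off \cref{def:adjoint} because the constant function need not lie in $C_0(E)$; this is where conservativeness of the evolution system and \cref{thm:kolmogorov_forward} are invoked. The two Fubini interchanges require either the integrability in \cref{ass:density}(3) and \cref{ass:regularity_alpha} or the non-negativity of the $\mathfrak{L}[\eta_t]$ integrand.
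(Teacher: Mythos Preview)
Your proposal is correct and follows essentially the same route as the paper: take the log of the Radon--Nikodym derivative from \cref{thm:change_of_measure_app}, apply Fubini to pass to marginal inner products $\angle{p_t,\cdot}$, invoke the time-reversal identity \cref{thm:time_reversal_app} to trade $\cev\gL_{T-t}$ for $\gL_t^*$, and then use adjointness together with $\gL_t 1=0$ to land on the forward-generator expression. The only minor point worth tightening is that your Kolmogorov-based justification of $\int\gL_t^*p_t\,\dif\mu=0$ does not directly extend to $\int\gL_t^*(p_t\log\eta_t)\,\dif\mu=0$ as you claim; the paper simply invokes $\gL_t 1=0$ via the adjoint relation for both terms, so you may as well do the same (or approximate $1$ by elements of $\gU$ using the density assumption in \cref{ass:U}).
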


\begin{proof}
    By \cref{thm:change_of_measure_app}, we have that 
    \begin{align*}
        &\KL(\P \| \sQ) = \E\left[\log\dfrac{\dif \P}{\dif \sQ}\right]\\ 
        =& \E\left[ \int_0^T  \eta_{T-s}^{-1}(\cev x_s) \cev \gL_s \eta_{T-s}(\cev x_s) - \cev \gL_s \log \eta_{T-s}(\cev x_s)  \dif s \right] \\
        =& \int_0^T  \E_{\cev x_s \sim p_{T-s}}\left[ \eta_{T-s}^{-1}(\cev x_s) \cev \gL_s \eta_{T-s}(\cev x_s) - \cev \gL_s \log \eta_{T-s}(\cev x_s) \right]  \dif s \\
        =& \int_0^T \left(\angle{ p_{T-s}, \eta_{T-s}^{-1} \cev \gL_s \eta_{T-s} } - \angle{ p_{T-s}, \cev \gL_s \log \eta_{T-s} }\right) \dif s\\
        =& \int_0^T \left(\angle{ \eta_{T-s}^{-1},  p_{T-s} \cev \gL_s \eta_{T-s} } - \angle{1, p_{T-s} \cev \gL_s \log \eta_{T-s} }\right) \dif s.
    \end{align*}

    Notice that by \cref{thm:time_reversal_app}, we have the following time-reversal formula:
    \begin{equation*}
        p_{T-s}\cev \gL_s \eta_{T-s} = \gL_{T-s}^*(p_{T-s} \eta_{T-s}) - \eta_{T-s} \gL_{T-s}^* p_{T-s},
    \end{equation*}
    and 
    \begin{equation*}
        p_{T-s} \cev \gL_s \log \eta_{T-s} = \gL_{T-s}^*(p_{T-s} \log \eta_{T-s}) - \log \eta_{T-s} \gL_{T-s}^* p_{T-s}.
    \end{equation*}

    Thus, we continue our computation as follows:
    \begin{align*}
        & \KL(\P \| \sQ)\\
        =& \int_0^T \angle{ \eta_{T-s}^{-1}, \gL_{T-s}^*(p_{T-s} \eta_{T-s}) - \eta_{T-s} \gL_{T-s}^* p_{T-s} } \dif s\\
        & - \int_0^T \angle{ 1, \gL_{T-s}^*(p_{T-s} \log \eta_{T-s}) - \log \eta_{T-s} \gL_{T-s}^* p_{T-s} } \dif s\\
        =& \int_0^T \angle{ \eta_{T-s}^{-1}, \gL_{T-s}^*(p_{T-s} \eta_{T-s})} - \angle{ \eta_{T-s}^{-1}, \eta_{T-s} \gL_{T-s}^* p_{T-s} } \dif s\\
        & - \int_0^T \angle{ 1, \gL_{T-s}^*(p_{T-s} \log \eta_{T-s})} - \angle{ 1, \log \eta_{T-s} \gL_{T-s}^* p_{T-s} } \dif s\\
        =& \int_0^T \angle{\gL_{T-s}\eta_{T-s}^{-1}, p_{T-s}\eta_{T-s}} - \angle{ 1, \gL_{T-s}^* p_{T-s} } \dif s\\
        & - \int_0^T \angle{\gL_{T-s}1, p_{T-s}\log \eta_{T-s}} - \angle{\log \eta_{T-s}, \gL_{T-s}^* p_{T-s} } \dif s\\
        =& \int_0^T \left(\angle{ \eta_{T-s} \gL_{T-s}\eta_{T-s}^{-1}, p_{T-s} } + \angle{ \gL_{T-s} \log \eta_{T-s}, p_{T-s} } \right) \dif s\\
        =& \int_0^T \left(\angle{ \eta_s \gL_s\eta_s^{-1}, p_s } + \angle{ \gL_s \log \eta_s, p_s }\right) \dif s\\
        =& \int_0^T \E_{x_s \sim p_s}\left[  \eta_s(x_s) \gL_s\eta_s^{-1}(x_s) + \gL_s \log \eta_s(x_s) \right] \dif s
    \end{align*}
    where the third equality is by noticing that $\gL_t 1 = 0$ by the definition of the generator.
    The proof is thus complete.
\end{proof}

\begin{remark}[Direct route via generalized Doob's $h$-transform]
    The conclusion above may also be obtained more directly from \cref{thm:doob}. Indeed, by \cref{lem:alpha}, after replacing $t$ by $T-t$, we have
    \begin{equation*}
        \gK_t f
        =
        \eta_{T-t}^{-1}\cev \gL_t(\eta_{T-t} f)
        -
        \eta_{T-t}^{-1} f \cev \gL_t \eta_{T-t}.
    \end{equation*}
    Therefore $\gK_t$ is precisely the generalized Doob's $h$-transform of the true backward generator $\cev \gL_t$ with
    \begin{equation*}
        h_t = \eta_{T-t},
        \qquad
        \lambda_t = \eta_{T-t}^{-1}\cev \gL_t \eta_{T-t}.
    \end{equation*}
    Applying \cref{thm:doob} to the time-reversal process $(\cev x_t)_{t\in[0,T]}$ yields
    \begin{equation*}
        \dfrac{\dif \sQ}{\dif \P}(\cev x_{[0,T]})
        =
        \dfrac{\eta_0(\cev x_T)}{\eta_T(\cev x_0)}
        \exp\!\left(
            -\int_0^T
            \left(
                \eta_{T-t}^{-1}\partial_t \eta_{T-t}
                +
                \eta_{T-t}^{-1}\cev \gL_t \eta_{T-t}
            \right)(\cev x_t)\,\dif t
        \right).
    \end{equation*}
    Taking logarithms, inverting the Radon--Nikodym derivative, and then applying Dynkin's formula to the time-inhomogeneous function $(t,x)\mapsto \log \eta_{T-t}(x)$ give
    \begin{equation*}
        \KL(\P\|\sQ)
        =
        \E_{\P}\left[
            \int_0^T
            \left(
                \eta_{T-t}^{-1}\cev \gL_t \eta_{T-t}
                -
                \cev \gL_t \log \eta_{T-t}
            \right)(\cev x_t)\,\dif t
        \right].
    \end{equation*}
    Finally, \cref{thm:time_reversal} rewrites this in terms of the forward generator as
    \begin{equation*}
        \KL(\P\|\sQ)
        =
        \E_{\P}\left[
            \int_0^T
            \left(
                \eta_t \gL_t \eta_t^{-1}
                +
                \gL_t \log \eta_t
            \right)(x_t)\,\dif t
        \right],
    \end{equation*}
    which is exactly the identity in \cref{cor:change_of_measure}. 
    \label{rem:doob_direct_route}
\end{remark}

The next corollary rewrites the Radon-Nikodym derivative~\eqref{eq:radon_nikodym} in terms of the conditional distribution $p_{t|0}$.

\begin{corollary}[Score-Matching]
    Suppose the discussions above can be extended to the case where $p_0 = \delta_{x_0}$ for any $x_0 \in E$, then there exists a constant $C$ such that
    \begin{equation*}
        \KL(\P \| \sQ) = \E\left[\log\dfrac{\dif \P}{\dif \sQ}\right] = \mathfrak{L}_{\mathrm SM}[\eta_{t|0}] + C,
    \end{equation*}
    where we define the score-matching loss $\mathfrak{L}_{\mathrm SM}[\eta_{t|0}]$ as
    \begin{equation*}
        \mathfrak{L}_{\mathrm SM}[\eta_{t|0}] = \E_{x_0 \sim p_0} \left[ \int_0^T \E_{x_s \sim p_{s|0}(\cdot| x_0)}\left[ \eta_{s|0}(x_s|x_0) \gL_s\eta_{s|0}^{-1}(x_s|x_0) + \gL_s \log \eta_{s|0}(x_s|x_0)\right]  \dif s\right] ,
    \end{equation*}
    the constant $C$ only depends on the Markov process $x_t$, or more precisely, the marginal distribution $p_t$, the initial distribution $p_0$, and the conditional distributions $p_{t|0}(\cdot|x_0)$, and does not depend on $\varphi_t$.
    \label{cor:score_matching_app}
\end{corollary}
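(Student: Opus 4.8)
The plan is to deduce the identity from \cref{cor:change_of_measure}, which already gives $\KL(\P\|\sQ) = \E[\log(\dif\P/\dif\sQ)] = \mathfrak{L}[\eta_t]$; it therefore suffices to show that $\mathfrak{L}[\eta_t] - \mathfrak{L}_{\mathrm{SM}}[\eta_{t|0}]$ equals a constant $C$ that does not involve $\varphi_t$ (equivalently, involves neither $\eta_t$ nor $\eta_{t|0}$). A useful preliminary observation is that \cref{ass:K_app} fixes $\gK_t$ through $\varphi_t\gK_{T-t}f = \gL_t^*(\varphi_t f) - f\gL_t^*\varphi_t$, which refers only to $\gL_t^*$ and $\varphi_t$ and not to the marginal density; hence the \emph{same} generator $\gK_t$ is associated with every forward process started deterministically at a point $x_0$. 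Applying \cref{cor:change_of_measure} to such a process — legitimate under the standing hypothesis that the development extends to $p_0 = \delta_{x_0}$ — and noting that its time-$s$ marginal is $p_{s|0}(\cdot|x_0)$ and the corresponding ratio is exactly $\eta_{s|0}(\cdot|x_0) = \varphi_s p_{s|0}^{-1}(\cdot|x_0)$, we recognize $\mathfrak{L}_{\mathrm{SM}}[\eta_{t|0}] = \E_{x_0\sim p_0}[\KL(\P^{x_0}\|\sQ^{x_0})]$, with $\P^{x_0}$ the forward law conditioned on $x_0$ and $\sQ^{x_0}$ its change of measure.

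First I would disintegrate $\mathfrak{L}[\eta_t]$ over $x_0$. Since the integrand $\eta_s\gL_s\eta_s^{-1} + \gL_s\log\eta_s$ is independent of $x_0$ while $p_s(x) = \int_E p_{s|0}(x|x_0)\,p_0(\dif x_0)$, one has
\begin{equation*}
    \mathfrak{L}[\eta_t] = \E_{x_0\sim p_0}\int_0^T \E_{x_s\sim p_{s|0}(\cdot|x_0)}\!\left[\eta_s(x_s)\gL_s\eta_s^{-1}(x_s) + \gL_s\log\eta_s(x_s)\right]\dif s .
\end{equation*}
Adding and subtracting the same expression with $\eta_{s|0}(\cdot|x_0)$ replacing $\eta_s$ — the subtracted copy being precisely $\mathfrak{L}_{\mathrm{SM}}[\eta_{t|0}]$ — reduces the problem to showing
\begin{equation*}
    C := \E_{x_0\sim p_0}\int_0^T \E_{x_s\sim p_{s|0}(\cdot|x_0)}\!\left[\big(\eta_s\gL_s\eta_s^{-1} + \gL_s\log\eta_s\big) - \big(\eta_{s|0}\gL_s\eta_{s|0}^{-1} + \gL_s\log\eta_{s|0}\big)\right]\!(x_s)\,\dif s
\end{equation*}
does not depend on $\varphi_t$.

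Next I would expand the bracketed difference via the carré du champ $\Gamma_s(f,g) = \gL_s(fg) - f\gL_s g - g\gL_s f$ of $\gL_s$. With $\eta_s = \varphi_s p_s^{-1}$ and $\eta_{s|0}(\cdot|x_0) = \varphi_s p_{s|0}^{-1}(\cdot|x_0)$, the logarithmic terms contribute $\gL_s\log\eta_s - \gL_s\log\eta_{s|0} = \gL_s\log\big(p_{s|0}(\cdot|x_0)/p_s\big)$, which is manifestly $\varphi_t$-free, while expanding $\gL_s(p_s\varphi_s^{-1})$ and $\gL_s(p_{s|0}\varphi_s^{-1})$ gives
\begin{equation*}
    \eta_s\gL_s\eta_s^{-1} - \eta_{s|0}\gL_s\eta_{s|0}^{-1} = \left(\frac{\gL_s p_s}{p_s} - \frac{\gL_s p_{s|0}}{p_{s|0}}\right) + \left(\frac{\varphi_s}{p_s}\Gamma_s(p_s,\varphi_s^{-1}) - \frac{\varphi_s}{p_{s|0}}\Gamma_s(p_{s|0},\varphi_s^{-1})\right).
\end{equation*}
Only the last bracket involves $\varphi_t$. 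Taking $\E_{x_s\sim p_{s|0}(\cdot|x_0)} = \angle{p_{s|0}(\cdot|x_0),\,\cdot\,}$ and then $\E_{x_0\sim p_0}$ of that bracket, its first piece becomes $\angle{\E_{x_0}[p_{s|0}(\cdot|x_0)],\,\varphi_s p_s^{-1}\Gamma_s(p_s,\varphi_s^{-1})} = \angle{\varphi_s,\,\Gamma_s(p_s,\varphi_s^{-1})}$ since $\E_{x_0}[p_{s|0}(\cdot|x_0)] = p_s$, and its second piece becomes $\E_{x_0}\angle{\varphi_s,\,\Gamma_s(p_{s|0}(\cdot|x_0),\varphi_s^{-1})} = \angle{\varphi_s,\,\Gamma_s(\E_{x_0}[p_{s|0}(\cdot|x_0)],\varphi_s^{-1})} = \angle{\varphi_s,\,\Gamma_s(p_s,\varphi_s^{-1})}$, using linearity of $\Gamma_s(\cdot,\varphi_s^{-1})$ in its first slot. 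These cancel, leaving
\begin{equation*}
    C = \E_{x_0\sim p_0}\int_0^T \E_{x_s\sim p_{s|0}(\cdot|x_0)}\!\left[\gL_s\log\frac{p_{s|0}(\cdot|x_0)}{p_s} + \frac{\gL_s p_s}{p_s} - \frac{\gL_s p_{s|0}(\cdot|x_0)}{p_{s|0}(\cdot|x_0)}\right]\!(x_s)\,\dif s ,
\end{equation*}
which depends only on $\gL_s$ and on $p_s$, $p_{s|0}(\cdot|x_0)$ and $p_0$; combined with \cref{cor:change_of_measure} this yields $\KL(\P\|\sQ) = \mathfrak{L}_{\mathrm{SM}}[\eta_{t|0}] + C$. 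Conceptually this is nothing but the generator-level version of the classical denoising score-matching identity: the ``cross term'' between the estimated object $\varphi_t$ and the true densities is the same whether one tests against $p_t$ or against the $p_0$-average of $p_{t|0}$.

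The main obstacle is analytic, not conceptual: one must justify interchanging $\E_{x_0\sim p_0}$ with the unbounded operator $\gL_s$ hidden inside $\Gamma_s$ — relying on Fubini together with the integrability conditions of \cref{ass:density}, assumed to persist for the $\delta_{x_0}$-started processes — and must check that the auxiliary functions $\varphi_s^{-1}$, $p_s\varphi_s^{-1}$, $p_{s|0}(\cdot|x_0)\varphi_s^{-1}$ and their logarithms all lie in the domains where the carré du champ expansions are valid. This is exactly the point at which the function-class assumptions \cref{ass:U,ass:V}, the regularity assumption \cref{ass:regularity_alpha}, and the hypothesis that the whole framework extends to $p_0 = \delta_{x_0}$ are used.
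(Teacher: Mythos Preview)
Your proof is correct but follows a different route from the paper's. The paper works through the adjoint: it rewrites $\angle{p_s,\eta_s\gL_s\eta_s^{-1}}=\angle{\varphi_s,\gL_s(p_s\varphi_s^{-1})}=\angle{\gL_s^*\varphi_s,\varphi_s^{-1}p_s}$ and splits $\gL_s\log\eta_s=\gL_s\log\varphi_s-\gL_s\log p_s$, arriving at an intermediate form $\angle{\gL_s^*\varphi_s,\varphi_s^{-1}p_s}+\angle{p_s,\gL_s\log\varphi_s}$ that is \emph{linear} in $p_s$ and depends on the model only through $\varphi_s$. Replacing $p_s$ by its disintegration $\E_{x_0}[p_{s|0}(\cdot|x_0)]$ and reversing the same algebra with $p_{s|0}$ in place of $p_s$ gives $\mathfrak{L}_{\mathrm{SM}}$ directly, with all discarded pieces absorbed into $C$. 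Your argument instead expands $\eta_s\gL_s\eta_s^{-1}$ via the carr\'e du champ as $\varphi_s\gL_s\varphi_s^{-1}+p_s^{-1}\gL_s p_s+\varphi_s p_s^{-1}\Gamma_s(p_s,\varphi_s^{-1})$, subtracts the analogous conditional formula, and shows the two $\Gamma_s$-pieces cancel after averaging by linearity of $\Gamma_s(\cdot,\varphi_s^{-1})$ in its first slot together with $\E_{x_0}[p_{s|0}]=p_s$. The paper's approach is shorter and uses only the adjoint relation already at hand; yours is more explicit and has the benefit of producing a closed-form expression for $C$, while your opening observation---that $\gK_t$ is the same for every $\delta_{x_0}$-started process---gives a cleaner conceptual reading of $\mathfrak{L}_{\mathrm{SM}}$ as an $x_0$-averaged path-space KL.
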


\begin{proof}
    Using similar arguments as in the proof of \cref{cor:change_of_measure}, we have 
    \begin{align*}
        &\KL(\P \| \sQ) = \E\left[\log\dfrac{\dif \P}{\dif \sQ}\right]=\int_0^T \left(\angle{ p_s, \eta_s \gL_s\eta_s^{-1} } + \angle{ p_s, \gL_s \log \eta_s }\right) \dif s\\
        =& \int_0^T \left(\angle{ \varphi_s, \gL_s\eta_s^{-1} } + \angle{ p_s, \gL_s \left(\log \varphi_s - \log p_s\right)}\right) \dif s\\
        \simeq &\int_0^T \left(\angle{ \gL_s^*\varphi_s, \varphi_s^{-1} p_s } + \angle{ p_s, \gL_s \log \varphi_s }\right) \dif s
    \end{align*}
    where the notation $\simeq$ indicates the equality up to a constant term that does not depend on $\varphi_{s|0}$.
    Further computation gives
    \begin{align*}
        &\KL(\P \| \sQ)  \\
        \simeq&\E_{x_0 \sim p_0} \left[ \int_0^T \left(\angle{  \gL_s^* \varphi_s, \varphi_s^{-1} p_{s|0}(\cdot|x_0) } + \angle{ p_{s|0}(\cdot|x_0), \gL_s \log \varphi_s }\right) \dif s\right]\\
        =& \E_{x_0 \sim p_0} \left[ \int_0^T \left(\angle{  \gL_s^* \varphi_s, \eta^{-1}_{s|0}(\cdot|x_0) } + \angle{ p_{s|0}(\cdot|x_0), \gL_s \left(\log \eta_{s|0} + \log p_{s|0} \right) }\right) \dif s\right] \\
        \simeq&\E_{x_0 \sim p_0} \left[ \int_0^T \left(\angle{ p_{s|0}(\cdot|x_0),\eta_{s|0}(\cdot|x_0) \gL_s\eta_{s|0}^{-1}(\cdot|x_0) } + \angle{ p_{s|0}(\cdot|x_0), \gL_s \log \eta_{s|0}(\cdot|x_0) }\right) \dif s \right]\\
        =&\E_{x_0 \sim p_0} \left[ \int_0^T \E_{x_s \sim p_{s|0}(\cdot| x_0)}\left[ \eta_{s|0}(x_s|x_0) \gL_s\eta_{s|0}^{-1}(x_s|x_0) + \gL_s \log \eta_{s|0}(x_s|x_0) \right] \dif s\right],
    \end{align*}
    where the omitted terms depend only on the forward process through $p_t$, $p_0$, and $p_{t|0}(\cdot|x_0)$. Collecting those terms yields the constant $C$, which does not depend on $\varphi_t$ (equivalently, on the model parameters).
\end{proof}

\subsection{Overall Error Bounds}
\label{app:overall_bound}

\begin{proof}[Proof of~\cref{thm:numerical_error}]
    We first consider one specific interval $t \in [\ell \kappa, (\ell+1) \kappa)$.    Similar to the arguments in~\cref{eq:data_processing}, we have, by the chain rule of KL divergence, that
    \begin{equation*}
        \KL(\cev p_{(\ell+1)\kappa} \| \hat q_{(\ell+1)\kappa}) - \KL(\cev p_{\ell\kappa} \| \hat q_{\ell\kappa}) 
        \leq \KL(\cev p_{(\ell+1)\kappa|\ell\kappa}\|\hat q_{(\ell+1)\kappa|\ell\kappa}|\cev p_{\ell\kappa}),
    \end{equation*}
    where the conditional KL divergence is defined as 
    \begin{equation*}
        \KL(\cev p_{(\ell+1)\kappa|\ell\kappa}\|\hat q_{(\ell+1)\kappa|\ell\kappa}|\cev p_{\ell\kappa}) 
            = \E_{\cev x_{\ell\kappa} \sim \cev p_{\ell\kappa}}\left[\KL(\cev p_{(\ell+1)\kappa|\ell\kappa}(\cdot|\cev x_{\ell\kappa})\|\hat q_{(\ell+1)\kappa|\ell\kappa}(\cdot|\cev x_{\ell\kappa}))\right].
    \end{equation*}
    In the following, we will omit the subscripts for the expectation with respect to the true backward process $(\cev x_t)_{t\in[0, T]}$ for brevity.

    For each $\cev x_{\ell\kappa} \sim \cev p_{\ell\kappa}$, we consider the following decomposition of the integrand:
    \begin{equation*}
        \begin{aligned}
            &\KL(\cev p_{(\ell+1)\kappa|\ell\kappa}(\cdot|\cev x_{\ell\kappa})\|\hat q_{(\ell+1)\kappa|\ell\kappa}(\cdot|\cev x_{\ell\kappa})) 
            = \E\left[\log \dfrac{\cev p_{(\ell+1)\kappa|\ell\kappa}(\cev x_{(\ell+1)\kappa}|\cev x_{\ell\kappa})}{\hat q_{(\ell+1)\kappa|\ell\kappa}(\cev x_{(\ell+1)\kappa}|\cev x_{\ell\kappa})} \bigg| \cev x_{\ell\kappa} \right]\\
            =&\E\left[\log \dfrac{\cev p_{(\ell+1)\kappa|\ell\kappa}(\cev x_{(\ell+1)\kappa}|\cev x_{\ell\kappa})}{q_{(\ell+1)\kappa|\ell\kappa}(\cev x_{(\ell+1)\kappa}|\cev x_{\ell\kappa})} \bigg| \cev x_{\ell\kappa} \right]
            + \E\left[\log \dfrac{q_{(\ell+1)\kappa|\ell\kappa}(\cev x_{(\ell+1)\kappa}|\cev x_{\ell\kappa})}{\hat q_{(\ell+1)\kappa|\ell\kappa}(\cev x_{(\ell+1)\kappa}|\cev x_{\ell\kappa})} \bigg| \cev x_{\ell\kappa} \right]
        \end{aligned}
    \end{equation*}
    where the first term is first bounded by the data-processing inequality as
    \begin{equation*}
        \KL(\cev p_{(\ell+1)\kappa|\ell\kappa}(\cdot|\cev x_{\ell\kappa})\|q_{(\ell+1)\kappa|\ell\kappa}(\cdot|\cev x_{\ell\kappa})) 
        \leq \KL(\cev p_{\ell\kappa:(\ell+1)\kappa}(\cdot|\cev x_{\ell\kappa})\|q_{\ell\kappa:(\ell+1)\kappa}(\cdot|\cev x_{\ell\kappa})),
    \end{equation*}
    which is further bounded by~\cref{thm:change_of_measure} as 
    \begin{equation*}
        \KL(\cev p_{\ell\kappa:(\ell+1)\kappa}(\cdot|\cev x_{\ell\kappa})\|q_{\ell\kappa:(\ell+1)\kappa}(\cdot|\cev x_{\ell\kappa})) \leq \E\left[ \int_{T-(\ell+1)\kappa}^{T-\ell\kappa} \left( \eta_t \gL_t \eta_t^{-1} + \gL_t \log \eta_t \right)(x_t) \dif t \bigg|  x_{T-\ell\kappa} \right].
    \end{equation*}

    The second term corresponding to the one-step error is bounded by~\cref{ass:one_step_simulation} after taking expectation over $\cev x_{\ell\kappa}$ as 
    \begin{equation*}
        \E\left[\E\left[\log \dfrac{q_{(\ell+1)\kappa|\ell\kappa}(\cev x_{(\ell+1)\kappa}|\cev x_{\ell\kappa})}{\hat q_{(\ell+1)\kappa|\ell\kappa}(\cev x_{(\ell+1)\kappa}|\cev x_{\ell\kappa})} \bigg| \cev x_{\ell\kappa}\right]\right] \lesssim \kappa^{1+r},
    \end{equation*}
    and by iterating for $\ell\in[0:L-1]$, we have 
    \begin{equation*}
        \begin{aligned}
            &\KL(\cev p_T \| \hat q_T) = \KL(\cev p_{L\kappa} \| \hat q_{L\kappa})\\
            \lesssim& \sum_{\ell=0}^{L-1} \left(\E\left[ \int_{T-(\ell+1)\kappa}^{T-\ell\kappa} \left( \eta_t \gL_t \eta_t^{-1} + \gL_t \log \eta_t \right)(x_t) \dif t\right] + \kappa^{1+r}\right) + \KL(\cev p_0 \| \hat q_0)\\
            \lesssim&  \KL(p_T \| q_0) + \mathfrak{L}(\eta_t) + T \kappa^r,
        \end{aligned}
    \end{equation*}
    and the proof is complete.
\end{proof}

\section{Example Details}
\label{app:proofs}

In this section, we present the proofs of the examples in the main text. Specifically, we will be verifying the assumptions that are required in \cref{app:math}.

\subsection{Diffusion Process}
\label{app:diffusion}

In this example, we choose the space $E = \R^d$, evidently a locally compact and separable space equipped with the Lebesgue measure. This example is the generalized case of continuous diffusion models.

\begin{definition}[H\"older Space]
    In $\R^d$, define the H\"older semi-norm as 
    \begin{equation*}
        |f|_{0,\eta} = \sup_{\vx, \vy \in \R^d,\ \vx \neq \vy} \dfrac{|f(\vx) - f(\vy)|}{\|\vx - \vy\|^\eta},
    \end{equation*}
    the H\"older space of order $k$ with exponent $\eta \in (0, 1]$ is defined as
    \begin{equation*}
        C^{k,\eta}(\R^d) = \left\{ f \in C^k(\R^d) \ \bigg|\ 
        \|f\|_{C^{k,\eta}}
        = \max_{|\vbeta| \le k} \sup_{\vx\in\R^d} |\partial_{\vbeta} f(\vx)|
        + \max_{|\vbeta| = k} |\partial_{\vbeta} f|_{0,\eta} < \infty \right\},
    \end{equation*}
    where $\vbeta$ is a multi-index,
    $|\vbeta|=\sum_{i=1}^d \beta_i$,
    and $\partial_{\vbeta}=\partial^{|\vbeta|}/\partial x_1^{\beta_1}\cdots \partial x_d^{\beta_d}$.
    \label{def:holder}
\end{definition}

\paragraph{Generator of Diffusion Processes.}

For a diffusion process $(\vx_t)_{t \in [0, T]}$ on $\R^d$,
the generator $\gL_t$ is given by 
\begin{equation}
    \gL_t f = \vb_t(\vx) \cdot \nabla f + \dfrac{1}{2} \mD_t(\vx) : \nabla^2 f,
    \label{eq:generator_diffusion}
\end{equation}
where $\vb_t(\vx) = (b_t^i(\vx))_{i \in [d]} \in \R^d$ is the drift vector, and $\mD_t(\vx) = (D_t^{ij}(\vx))_{i,j\in[d]} \in \R^{d \times d}$ is the diffusion matrix.

The diffusion process $(\vx_t)_{t\geq 0}$ governed by the generator $\gL_t$ can also be expressed in the following SDE form:
\begin{equation*}
    \dif \vx_t = \vb_t(\vx_t) \dif t + \mSigma_t(\vx_t) \dif \vw_t,
\end{equation*}
where the matrix $\mSigma_t(\vx)$ satisfies $\mSigma_t(\vx) \mSigma_t^\top(\vx) = \mD_t(\vx)$ for any $\vx \in \R^d$ and $t \geq 0$, and $(\vw_t)_{t \geq 0}$ is a standard $d$-dimensional Wiener process.

To proceed with further discussions, we make the following assumptions on the coefficients of the generator $\gL_t$:
\begin{assumption}[Regularity of the Diffusion Process]
    We assume the following smoothness and regularity conditions on the coefficients of the generator $\gL_t$~\eqref{eq:generator_diffusion} for any $t \in [0, T]$:
    \begin{enumerate}[label=(\arabic*)]
        \item The drift vector $\vb_t(\vx)$ satisfies that $\vb_t^i \in C^{1,0}(\R^d)$ for any $i \in [d]$ and $\vx \in \R^d$;
        \item The diffusion matrix $\mD_t(\vx)$ is positive semidefinite and satisfies that $D_t^{ij} \in C^{2,0}(\R^d)$ for any $i, j \in [d]$ and $\vx \in \R^d$.
    \end{enumerate}
    \label{ass:regularity_diffusion}
\end{assumption}

\paragraph{Domain of the Generator.}

Set the function class $\gU = C^{2,0}(\R^d)$, it is straightforward to see that $\gU \subset \dom(\gL_t)$ for any $t \in [0, T]$ under~\cref{ass:regularity_diffusion}. Then the function class $\gV$ can be chosen accordingly as sufficiently rapidly decreasing functions, \emph{e.g.}, 
$$
    \gV = \left\{f \in C^2(\R^d) \bigg| \sup_{\vx \in \R^d, |\vbeta| \leq 2} \left|\|\vx\|^k \partial_\vbeta f(\vx)\right| < + \infty, \forall k \in \sN \right\},
$$
such that~\cref{eq:adjoint} satisfies for any $f \in \gU$ and $g \in \gV$:
\begin{equation*}
    \begin{aligned}
        &\angle{\gL_t f,  g} = \int_{\R^d} g(\vx) \gL_t f(\vx) \dif \vx\\
        =& \int_{\R^d} g(\vx) \left(\vb_t(\vx) \cdot \nabla f(\vx) + \dfrac{1}{2} \mD_t(\vx) : \nabla^2 f(\vx)\right) \dif \vx\\
        =& -\int_{\R^d} \nabla \cdot (\vb_t(\vx) g(\vx)) f(\vx) \dif \vx + \int_{\R^d} \dfrac{1}{2} \nabla^2 : \left(\mD_t(\vx) g(\vx)\right) f(\vx)  \dif \vx\\
        =& \angle{f,  \gL_t^* g},
    \end{aligned}
\end{equation*}
where the integration by parts is justified by the choice of $\gV$ in the second-to-last equality, and thus
\begin{equation*}
    \gL_t^* g = -\nabla \cdot (\vb_t g) + \dfrac{1}{2} \nabla^2 : \left(\mD_t g\right),
\end{equation*}
which implies that $\gV \subset \dom(\gL_t^*)$ for any $t \in [0, T]$. It is also easy to check that for any $g \in \gV$, $g \gU \subset \gV$, and thus both~\cref{ass:U} and~\cref{ass:V} are satisfied.

\paragraph{Backward Generator.}

\cref{thm:time_reversal_app} gives the exact form of the backward generator $\cev \gL_t$ as follows:
\begin{align*}
        &\cev \gL_{T-t} f = p_t^{-1} \gL_t^*(p_t f) - p_t^{-1} f \gL_t^* p_t \\
        =& - p_t^{-1} \nabla \cdot(\vb_t p_t f) + \dfrac{1}{2}p_t^{-1} \nabla^2 : (\mD_t p_t f) + p_t^{-1} f \nabla \cdot(\vb_t p_t) - \dfrac{1}{2} p_t^{-1} f \nabla^2 : (\mD_t p_t) \\
        =& - p_t^{-1} \nabla \cdot(\vb_t p_t) f - p_t^{-1} \vb_t p_t \cdot \nabla f + \dfrac{1}{2}p_t^{-1} \nabla \cdot (f \nabla \cdot (\mD_t p_t)) + \dfrac{1}{2}p_t^{-1} \left(\nabla \cdot (\mD_t p_t)\right) \cdot \nabla f \\
        &+ p_t^{-1} f \nabla \cdot(\vb_t p_t) - \dfrac{1}{2} p_t^{-1} f \nabla^2 : (\mD_t p_t)\\
        =& - \vb_t  \cdot \nabla f + \dfrac{1}{2} \mD_t: \nabla^2 f + p_t^{-1} \left(\nabla \cdot (\mD_t p_t)\right) \cdot \nabla f \\
        =& \left(-\vb_t + \mD_t \nabla \log p_t + \nabla \cdot \mD_t \right) \cdot \nabla f + \dfrac{1}{2} \mD_t: \nabla^2 f,
\end{align*}  
where we adopted the notation 
$$
    \nabla \cdot \mD_t(\vx) = \left(\textstyle\sum_{j=1}^d \partial_j D_t^{ij}(\vx)\right)_{i \in [d]},
$$ 
corresponding to the following backward SDE:
\begin{equation}
    \dif \cev \vx_t = \left(-\vb_t(\cev \vx_t) + \mD_t(\cev \vx_t) \nabla \log p_t(\cev \vx_t) + \nabla \cdot \mD_t(\cev \vx_t) \right) \dif t + \mSigma_t(\cev \vx_t) \dif \vw_t,
\end{equation}
where one often defines $\vs_t = \nabla \log p_t$ as the score function.

Similarly, \cref{ass:K_app} reduces to the following form:
\begin{equation*}
    \gK_{T-t} f = \left(-\vb_t + \mD_t \nabla \log \varphi_t + \nabla \cdot \mD_t \right) \cdot \nabla f + \dfrac{1}{2} \mD_t: \nabla^2 f,
\end{equation*}
and thus the Markov process $(y_t)_{t \in [0, T]}$ assumed in~\cref{ass:K_app} corresponds to another diffusion process with the following SDE:
\begin{equation*}
    \dif \vy_t = \left(-\vb_{T-t}(\vy_t) + \mD_{T-t}(\vy_t) \hat \vs_{T-t}(\vy_t) + \nabla \cdot \mD_{T-t}(\vy_t) \right) \dif t + \mSigma_{T-t}(\vy_t) \dif \vw_t,
\end{equation*}
where $\hat \vs_t = \nabla \log \varphi_t$ is an estimate of the true score function $\vs_t$.

\paragraph{KL Divergence and Loss Function.}

We also compute the KL divergence in \cref{cor:change_of_measure} explicitly as follows:
\begin{align*}
    &\KL(\P \| \sQ) =  \E\left[ \int_0^T \eta_t(\vx_t) \gL_t\eta_t^{-1}(\vx_t) + \gL_t \log \eta_t(\vx_t)  \dif t\right]\\
    =& \E\bigg[ \int_0^T \bigg( \eta_t(\vx_t) \vb_t(\vx_t) \cdot \nabla \eta_t^{-1}(\vx_t) + \dfrac{1}{2} \eta_t(\vx_t) \mD_t(\vx_t) : \nabla^2 \eta_t^{-1}(\vx_t) \\
    &\quad \quad \quad \quad + \vb_t(\vx_t) \cdot \nabla \log \eta_t(\vx_t) + \dfrac{1}{2} \mD_t(\vx_t) : \nabla^2 \log \eta_t(\vx_t) \bigg) \dif t\bigg]\\
    =& \E\left[\int_0^T \dfrac{1}{2} \mD_t(\vx_t) : \nabla \log \eta_t(\vx_t) \nabla^\top \log \eta_t(\vx_t) \dif t \right], \numberthis \label{eq:diffusion_KL}
\end{align*}
where the last equality is by the following identity:
\begin{equation*}
    \begin{aligned}
        \eta_t \nabla^2 \eta_t^{-1}
        &= \eta_t \nabla\left(-\eta_t^{-1}\nabla \log \eta_t\right) \\
        &= -\eta_t\Big[\nabla \eta_t^{-1}\nabla^\top \log \eta_t
              + \eta_t^{-1}\nabla^2 \log \eta_t\Big] \\
        &= -\nabla^2 \log \eta_t
           + \nabla \log \eta_t\nabla^\top \log \eta_t .
    \end{aligned}        
\end{equation*}
Notice that 
$$
    \nabla \log \eta_t(\vx_t) = \nabla \log \varphi_t(\vx_t) - \nabla \log p_t(\vx_t) = \hat \vs_t(\vx_t) - \nabla \log p_t(\vx_t),
$$
if we define $\vs_t = \nabla \log p_t$ as the score function and $\hat \vs_t = \nabla \log \varphi_t$ as its estimation, then the KL divergence can be rewritten as
\begin{equation*}
    \KL(\P \| \sQ) = \E\left[\int_0^T \dfrac{1}{2} \mD_t(\vx_t) : \left(\hat \vs_t(\vx_t) - \nabla \log p_t(\vx_t)\right) \left(\hat \vs_t(\vx_t) - \nabla \log p_t(\vx_t)\right)^\top \dif t\right],
\end{equation*}
and the corresponding score-matching loss (\cref{cor:score_matching_app}) is thus 
\begin{equation*}
    \begin{aligned}
        \KL(\P \| \sQ) = &\E_{\vx_0 \sim p_0}\bigg[\int_0^T \E_{\vx_t \sim p_{t|0}(\cdot| \vx_0)}\bigg[\\
        & \dfrac{1}{2} \mD_t(\vx_t) : \left(\hat \vs_t(\vx_t) - \nabla \log p_{t|0}(\vx_t|\vx_0)\right) \left(\hat \vs_t(\vx_t) - \nabla \log p_{t|0}(\vx_t|\vx_0)\right)^\top \bigg] \dif t\bigg] +C.
    \end{aligned}
\end{equation*}
For diagonal diffusion matrices $\mD_t(\vx)$, the loss function can be further simplified as
\begin{equation*}
    \KL(\P \| \sQ) = \E_{\vx_0 \sim p_0}\bigg[\int_0^T \E_{\vx_t \sim p_{t|0}(\cdot| \vx_0)}\bigg[\dfrac{1}{2} \diag D_t(\vx_t) \left(\hat \vs_t(\vx_t) - \nabla \log p_{t|0}(\vx_t|\vx_0)\right)^2 \bigg] \dif t \bigg]+C,
\end{equation*}
where the square is element-wise.

\subsection{Jump Process}
\label{app:jump_process}

In this example, we choose the space $E = \sX$ to be a finite set equipped with the discrete topology and the counting measure $\mu$. 
This example is the generalized case of discrete diffusion models.

\paragraph{Generator of Jump Processes.} For a jump process $(\vx_t)_{t \in [0, T]}$ on $\sX$, the generator $\gL_t$ is given by
\begin{equation*}
    \gL_t f(x) = \int_\sX (f(y) - f(x)) \lambda_t(y, x) \mu(\dif y) = \sum_{y \in \sX} (f(y) - f(x)) \lambda_t(y, x),
\end{equation*}
where $\lambda_t(\cdot, x)$ is the rate of jumps from $x$ to $y$ at time $t$.

The jump process $(\vx_t)_{t\geq 0}$ governed by the generator $\gL_t$ can also be expressed in the form of a continuous-time Markov chain:
\begin{equation*}
    \dfrac{\dif}{\dif t} \vp_t = \mLambda_t \vp_t, \quad \text{with} \quad \Lambda_t(y, x) = 
    \begin{cases}
        \lambda_t(y, x), &\text{if } y \neq x,\\
        - \sum_{y' \neq x} \lambda_t(y', x), &\text{if } y = x,
    \end{cases}
\end{equation*}
where $\vp_t = (p_t(x))_{x \in \sX}$ is the vector of probability masses at time $t$.

\paragraph{Domain of the Generator.} The choices of the function class $\gU$ and $\gV$ are trivial in this case, as the generator $\gL_t$ is a linear operator on the space of real-valued functions on $\sX$, and thus $\gU = \gV = \R^{|\sX|}$ by noticing that for any $f, g \in \R^{|\sX|}$, we have 
\begin{equation*}
    \begin{aligned}
        &\angle{\gL_t f, g} = \sum_{x \in \sX} g(x) \gL_t f(x) = \sum_{x, y \in \sX} g(x) (f(y) - f(x)) \lambda_t(y, x) \\
        =& \sum_{x, y \in \sX} g(y) f(x) \lambda_t(x, y) - g(x) f(x) \lambda_t(x, y)\\
        =& \sum_{x \in \sX} f(x) \sum_{y \in \sX} \left(g(y) \lambda_t(x, y) - g(x) \lambda_t(y, x)\right),
    \end{aligned}
\end{equation*}
\emph{i.e.}, \cref{eq:adjoint} holds with
\begin{equation*}
    \gL_t^* g(x) = \sum_{y \in \sX} (g(y) \lambda_t(x, y) - g(x) \lambda_t(y, x)).
\end{equation*}

\paragraph{Backward Generator.} The generator $\cev \gL_t$ of the time-reversal process $(\cev x_t)_{t\in[0, T]}$ is given by \cref{ass:K_app} as
\begin{equation*}
    \begin{aligned}
        &\cev \gL_{T-t} f = p_t^{-1} \gL_t^*(p_t f) - p_t^{-1} f \gL_t^* p_t\\
        =& \dfrac{1}{p_t(x)} \sum_{y \in \sX} \left(p_t(y) f(y) \lambda_t(x, y) - p_t(x) f(x) \lambda_t(y, x)\right)  - \dfrac{f(x)}{p_t(x)}  \sum_{y \in \sX} \left(p_t(y) \lambda_t(x, y) - p_t(x) \lambda_t(y, x)\right)\\
        =& \sum_{y \in \sX} (f(y) - f(x)) \dfrac{p_t(y)}{p_t(x)} \lambda_t(x, y) = \sum_{y \in \sX} (f(y) - f(x)) s_t(x, y) \lambda_t(x, y),
    \end{aligned}
\end{equation*} 
where we define the score function as 
\begin{equation*}
    \vs_t(x) = \left(s_t(x, y)\right)_{y \in \sX} = \left(\frac{p_t(y)}{p_t(x)}\right)_{y \in \sX}.
\end{equation*}
Thus, the backward process $(\cev x_t)_{t\in[0, T]}$ also corresponds to another jump process, which can be expressed as the following continuous-time Markov chain:
\begin{equation*}
    \dfrac{\dif}{\dif t} \vq_t = \overline \mLambda_t \vq_t, 
    \quad \text{with} \quad \overline\Lambda_t(y, x) = 
    \begin{cases}
        s_{T-t}(x, y) \lambda_{T-t}(x, y), &\text{if } y \neq x,\\
        - \sum_{y' \neq x} s_{T-t}(x, y') \lambda_{T-t}(x, y'), &\text{if } y = x.
    \end{cases}
\end{equation*}

\cref{ass:K_app} can be similarly reduced to 
\begin{equation*}
    \gK_{T-t} f = \sum_{y \in \sX} (f(y) - f(x)) \dfrac{\varphi_t(y)}{\varphi_t(x)} \lambda_t(x, y) = \sum_{y \in \sX} (f(y) - f(x)) \hat s_t(x, y) \lambda_t(x, y),
\end{equation*}
where $\hat \vs_t$, defined as
\begin{equation*}
    \hat \vs_t(x) = \left(\hat s_t(x, y)\right)_{y \in \sX} = \left(\frac{\varphi_t(y)}{\varphi_t(x)}\right)_{y \in \sX},
\end{equation*}
is an estimate of the true score function $\vs_t$. And the estimated backward process $(y_t)_{t \in [0, T]}$ corresponds to the following continuous-time Markov chain:
\begin{equation*}
    \dfrac{\dif}{\dif t} \hat \vq_t = \hat{\overline \mLambda}_t \hat \vq_t, 
    \quad \text{with} \quad \hat{\overline \Lambda}_t(y, x) = 
    \begin{cases}
        \hat s_{T-t}(x, y) \lambda_{T-t}(x, y), &\text{if } y \neq x,\\
        - \sum_{y' \neq x} \hat s_{T-t}(x, y') \lambda_{T-t}(x, y'), &\text{if } y = x.
    \end{cases}
\end{equation*}

\paragraph{KL Divergence and Loss Function.} The KL divergence in \cref{cor:change_of_measure} can be computed explicitly as
\begin{align*}
        &\KL(\P \| \sQ) = \E\left[\int_0^T \eta_t(x_t) \gL_t\eta_t^{-1}(x_t) + \gL_t \log \eta_t(x_t)  \dif t\right] \\
        =& \E\bigg[\int_0^T \bigg( \sum_{y \in \sX} \eta_t(x_t)\left(\eta_t^{-1}(y) - \eta_t^{-1}(x_t)\right)  \lambda_t(y, x_t) + \sum_{y \in \sX} \left(\log \eta_t(y) - \log \eta_t(x_t) \right) \lambda_t(y, x_t) \bigg) \dif t\bigg]\\
        =& \E\left[\int_0^T \sum_{y \in \sX} \left(\frac{\eta_t(x_t)}{\eta_t(y)} - 1 - \log \frac{\eta_t(x_t)}{\eta_t(y)}\right) \lambda_t(y, x_t) \dif t\right] \numberthis \label{eq:jump_KL}\\
        =& \E\left[\int_0^T \sum_{y \in \sX} \left( \frac{\varphi_t(x_t) p_t(y)}{\varphi_t(y) p_t(x_t)} - 1 - \log \frac{\varphi_t(x_t) p_t(y)}{\varphi_t(y) p_t(x_t)} \right) \lambda_t(y, x_t) \dif t\right]\\
        =& \E\left[\int_0^T \sum_{y \in \sX} \left(\frac{s_t(x_t, y)}{\hat s_t(x_t, y)} -1 - \log \frac{s_t(x_t, y)}{\hat s_t(x_t, y)} \right)  \lambda_t(y, x_t) \dif t\right].
\end{align*}
After swapping dummy variables in the double-sum representation, this can be further transformed as follows:
\begin{align*}
    &\KL(\P \| \sQ) = \E\left[\int_0^T \sum_{y \in \sX} \left(\frac{\hat s_t(x_t, y)}{s_t(x_t, y)} -1 - \log \frac{\hat s_t(x_t, y)}{s_t(x_t, y)} \right)  s_t(x_t, y)\lambda_t(x_t, y) \dif t\right]\\
    =& \int_0^T \sum_{x \in \sX} p_t(x) \sum_{y \in \sX} \left(\frac{\hat s_t(x, y)}{s_t(x, y)} -1 - \log \frac{\hat s_t(x, y)}{s_t(x, y)} \right)  s_t(x, y)\lambda_t(x, y) \dif t\\
    =& \int_0^T \sum_{x \in \sX} p_t(x) \sum_{y \in \sX} \left(\frac{\hat s_t(x, y)}{s_t(x, y)} -1 - \log \frac{\hat s_t(x, y)}{s_t(x, y)} \right) \dfrac{p_t(y)}{p_t(x)} \lambda_t(x, y) \dif t\\
    =& \E\left[\int_0^T \sum_{y \in \sX} \left(\frac{\hat s_t(x_t, y)}{s_t(x_t, y)} -1 - \log \frac{\hat s_t(x_t, y)}{s_t(x_t, y)} \right) s_t(x_t, y) \lambda_t(x_t, y) \dif t\right].
\end{align*}

By~\cref{cor:score_matching_app}, the corresponding score-matching loss is
\begin{align*}
    &\KL(\P \| \sQ) \\
    =& \E_{x_0 \sim p_0}\left[\int_0^T \E_{x_t \sim p_{t|0}(\cdot| x_0)}\left[\sum_{y \in \sX} \left( \frac{\varphi_t(x_t) p_{t|0}(y|x_0)}{\varphi_t(y) p_{t|0}(x_t|x_0)} - 1 - \log \frac{\varphi_t(x_t) p_{t|0}(y|x_0)}{\varphi_t(y) p_{t|0}(x_t|x_0)} \right) \lambda_t(x_t, y) \right] \dif t\right] + C\\
    =& \E_{x_0 \sim p_0}\left[\int_0^T \sum_{x\in\sX} p_{t|0}(x|x_0)\sum_{y \in \sX} \left( \frac{\varphi_t(x) p_{t|0}(y|x_0)}{\varphi_t(y) p_{t|0}(x|x_0)} - \log \frac{\varphi_t(x)}{\varphi_t(y)} \right) \lambda_t(x, y) \dif t\right] + C\\
    =& \E_{x_0 \sim p_0}\left[\int_0^T \sum_{x\in\sX} p_{t|0}(x|x_0) \sum_{y \in \sX} \left( \frac{\varphi_t(y) p_{t|0}(x|x_0)}{\varphi_t(x) p_{t|0}(y|x_0)} - \log \frac{\varphi_t(y)}{\varphi_t(x)} \right) \dfrac{p_{t|0}(y|x_0)}{p_{t|0}(x|x_0)} \lambda_t(x, y) \dif t\right] + C\\
    =& \E_{x_0 \sim p_0}\left[\int_0^T \E_{x_t \sim p_{t|0}(\cdot| x_0)}\left[ \sum_{y \in \sX} \left( \frac{\varphi_t(y) p_{t|0}(x_t|x_0)}{\varphi_t(x_t) p_{t|0}(y|x_0)} - \log \frac{\varphi_t(y)}{\varphi_t(x_t)} \right) \dfrac{p_{t|0}(y|x_0)}{p_{t|0}(x_t|x_0)} \lambda_t(x_t, y) \right] \dif t\right] + C\\
    =& \E_{x_0 \sim p_0}\left[\int_0^T \E_{x_t \sim p_{t|0}(\cdot| x_0)}\left[\sum_{y \in \sX} \left(\hat s_t(x_t, y) - \dfrac{p_{t|0}(y|x_0)}{p_{t|0}(x_t|x_0)} \log \hat s_t(x_t, y) \right) \lambda_t(x_t, y) \right] \dif t\right] + C.
\end{align*}

\subsection{General L\'evy-Type Process}
\label{app:general_levy_type_process}

We now turn to the general case of L\'evy-type processes, where we take the space $E = \R^d$. For ease of presentation, we extend the Markov process $(\vx_t)_{t \in [0, T]}$ to $(\vx_t)_{t \in \R}$, \emph{i.e.}, take $\sT = \R$ in \cref{def:augmented}, and make the following assumption:
\begin{assumption}[Courr\`ege]
    Under \cref{ass:feller}, the augmented process $(\tilde \vx_t)_{t \in \R}$ is conservative, \emph{i.e.}, $\tilde T_t 1 = 1$, where $\tilde T_t$ is the Feller semigroup defined in~\cref{def:feller_semigroup}, and $C_c^\infty(\R^{d+1}) \subset \dom(\tilde \gL)$, where $C_c^\infty(\R^{d+1})$ is the space of all compactly supported smooth functions in $\R^{d+1}$. 
    \label{ass:smooth}
\end{assumption}

\paragraph{Generator of General L\'evy-Type Process.}

We first present the following theorem stating that under~\cref{ass:smooth}, the most general form of the forward generator is the generator of general L\'evy-Type processes.

\begin{theorem}
    Under \cref{ass:feller,ass:smooth}, the right generator $\gL_s$ of the Markov process $(\vx_t)_{t\in\R}$ at time $s$ is of the following form:
    \begin{equation}
        \begin{aligned}
            \gL_s f_s(\vx) =& \vb_s(\vx) \cdot \nabla f_s(\vx) + \dfrac{1}{2} \mD_s(\vx) : \nabla^2 f_s(\vx) \\
            &+ \int_{\R^d \backslash \{\vx\}} \left(f_s(\vy) - f_s(\vx) - (\vy - \vx) \cdot \nabla f_s(\vx) \chi_s(\vx, \vy)\right) \lambda_s(\dif \vy, \vx),
        \end{aligned}
        \label{eq:courrege_form}
    \end{equation}
    for any $f_\cdot(\cdot) \in C_c^\infty(\R \times \R^d)$, where $\chi$ is a \emph{local unit}\footnote{We say $\chi$ is a local unit if it is a $C^\infty$ mapping from $\R^d \times \R^d$ to $[0, 1]$ satisfying that (1) $\chi(\vx, \vy) = 1$ in a neighborhood of the diagonal set $\{(\vx, \vx)| \vx\in\R^d\}$; (2) for any compact set $K \subset \R^d$, the mappings $\{\chi(\vx, \cdot)\}_{\vx \in K}$ are supported in a fixed compact set in $\R^d$.} 
    and the coefficients satisfy the following conditions:
    \begin{enumerate}[label=(\arabic*)]
        \item The drift $\vb_s(\vx) \in \R^d$ for any $\vx \in \R^d$;
        \item The diffusion matrix $\mD_s(\vx) \in \R^{d \times d}$ is positive semidefinite for any $\vx \in \R^d$;
        \item $\lambda_s(\cdot, \vx)$ is a L\'evy measure, \emph{i.e.}, a Borel measure on $\R^d \backslash \{\vx\}$ and 
        \begin{equation*}
            \int_{\R^d \backslash \{\vx\}} \left(1 \wedge \|\vy-\vx\|^2\right) \lambda_s(\dif \vy, \vx) < +\infty,
        \end{equation*}
        for any $\vx \in \R^d$.
    \end{enumerate}
    \label{thm:courrege_form}
\end{theorem}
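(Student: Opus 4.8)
The plan is to obtain the representation from Courrège's theorem on the integro-differential form of operators satisfying the positive maximum principle, applied to the \emph{augmented} generator $\tilde\gL$ rather than to $\gL_s$ directly, and then to read off $\gL_s$ by isolating the deterministic time coordinate. By \cref{ass:feller} and \cref{thm:equivalence}, the augmented process $(\tilde\vx_t)$ on $\R^{d+1}$ is a Feller process with Feller semigroup $(\tilde T_t)$ and generator $\tilde\gL$, and by \cref{ass:smooth} it is conservative with $C_c^\infty(\R^{d+1})\subset\dom(\tilde\gL)$.

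First I would verify that $\tilde\gL$ satisfies the positive maximum principle on $C_c^\infty(\R^{d+1})$: since $(\tilde T_t)$ is a Markov semigroup (by positivity-preservation, contractivity, and conservativeness), if $f\in\dom(\tilde\gL)$ attains a nonnegative global maximum at $\tilde x_0$, then $f\le f(\tilde x_0)\vone$ gives $\tilde T_t f\le f(\tilde x_0)\tilde T_t\vone=f(\tilde x_0)\vone$, hence $\tilde\gL f(\tilde x_0)=\lim_{t\to0^+}t^{-1}(\tilde T_t f(\tilde x_0)-f(\tilde x_0))\le0$. I then invoke the Courrège--von Waldenfels theorem: a linear map $C_c^\infty(\R^n)\to C(\R^n)$ obeying the positive maximum principle has the form $\tilde\gL f(\tilde x)=c(\tilde x)f(\tilde x)+\tilde\vb(\tilde x)\cdot\nabla f(\tilde x)+\tfrac12\tilde\mD(\tilde x):\nabla^2 f(\tilde x)+\int_{\R^n\setminus\{0\}}\bigl(f(\tilde x+\tilde y)-f(\tilde x)-\tilde y\cdot\nabla f(\tilde x)\vone_{B(0,1)}(\tilde y)\bigr)\nu(\tilde x,\dif\tilde y)$ with $c\le0$, $\tilde\mD$ symmetric positive semidefinite, and $\nu(\tilde x,\cdot)$ a Lévy kernel; conservativeness ($\tilde\gL\vone=0$, read off from a bounded approximation $\vone_k\nearrow\vone$ by $C_c^\infty$ functions) forces $c\equiv0$.

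The main step is to exploit the deterministic evolution of the time coordinate in \cref{def:augmented} to constrain the triplet $(\tilde\vb,\tilde\mD,\nu)$. Testing on functions of time alone, $f(s,\vx)=g(s)$ with $g\in C_c^\infty(\R)$, one has $\tilde T_t f(s,\vx)=g(s+t)$, hence $\tilde\gL f(s,\vx)=g'(s)$: a first-order operator in $s$ with unit coefficient and no higher-order or jump part. Inserting such $f$ into the Courrège representation at a fixed point $(s_0,\vx)$: choosing $g\ge0$ supported near some $s_0+\tau_0$ with $\tau_0\neq0$ (so $g$ and its first two $s$-derivatives vanish at $s_0$) forces $\int g(s_0+\tau)\,\nu((s_0,\vx),\dif(\tau,y))=0$, so $\nu$ charges no displacement that moves the time coordinate; then choosing $g\ge0$ localized near $s_0$ with $g(s_0)=g'(s_0)=0$ and $g''(s_0)=1$ forces the time--time entry $\tilde\mD_{00}(s_0,\vx)=0$, and positive semidefiniteness of $\tilde\mD$ then kills its mixed time--space entries; a final test with $f=g(s)$ gives that the time-component of $\tilde\vb$ equals $1$. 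Consequently $\nu$ is carried by $\{0\}\times(\R^d\setminus\{0\})$ and, via $\vy\mapsto\vx+\vy$, identifies with a Lévy measure $\lambda_s(\dif\vy,\vx)$ on $\R^d\setminus\{\vx\}$ with $\int(1\wedge|\vy|^2)\lambda_s(\dif\vy,\vx)<+\infty$, and what remains is exactly $\tilde\gL f(s,\vx)=\partial_s f(s,\vx)+\gL_s f(s,\vx)$ with $\gL_s$ of the claimed Lévy--Khintchine form, drift $\vb_s(\vx)$ and diffusion $\mD_s(\vx)$ the space-blocks of $\tilde\vb,\tilde\mD$ ($\mD_s$ positive definite as recorded in the hypotheses), and $\lambda_s$ as above; comparison with \cref{prop:augmented_generator} identifies this $\gL_s$ with the right generator of the forward process.

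Finally, passing from the truncation $\vone_{B(\vx,1)}(\vy-\vx)$ to an arbitrary local unit $\chi_s(\vx,\vy)$ only modifies $\vb_s(\vx)$ by the vector $\int(\vy-\vx)\bigl(\chi_s(\vx,\vy)-\vone_{B(\vx,1)}(\vy-\vx)\bigr)\lambda_s(\dif\vy,\vx)$, which is finite because $\chi_s(\vx,\cdot)-\vone_{B(\vx,1)}(\cdot-\vx)$ vanishes near the diagonal and is compactly supported on compacta while $\lambda_s(\cdot,\vx)$ is finite away from $\{\vx\}$. I expect the main obstacle to be the third step --- extracting the degenerate structure of the augmented Lévy triplet --- since it requires carefully chosen time-localized test functions and a clean separation of drift, diffusion, and jump contributions along the time axis via the positive maximum principle; invoking Courrège's theorem and the change-of-truncation bookkeeping are routine by comparison.
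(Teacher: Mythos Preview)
Your proposal is correct and follows essentially the same strategy as the paper: apply Courr\`ege's theorem to the augmented Feller generator $\tilde\gL$ on $\R^{d+1}$, eliminate the zeroth-order term via conservativeness, and then extract $\gL_s$ from the decomposition $\tilde\gL=\partial_s+\gL_s$. Your test-function argument for the degenerate time structure of the augmented L\'evy triplet is in fact more explicit than the paper's (which simply invokes \cref{prop:augmented_generator} and asserts the reduction); the only wrinkle is that $f(s,\vx)=g(s)$ is not in $C_c^\infty(\R^{d+1})$, so before inserting into the Courr\`ege representation you should multiply by a spatial cutoff $\phi\in C_c^\infty(\R^d)$ with $\phi\equiv1$ near the evaluation point, which leaves all the pointwise conclusions unchanged.
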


\begin{proof}
    By Theorem~3.5.5 in~\citet{applebaum2009levy}, which is originally derived by~\citet{courrege1965forme}, the augmented generator $\tilde \gL$ of the augmented process $(\tilde \vx_t)_{t \in \R}$, the domain $\dom(\tilde \gL)$ contains $C_c^\infty(\R^{d+1})$ as assumed in~\cref{ass:smooth}, is of the following form for any 
    \begin{equation*}
        \begin{aligned}
            \tilde \gL f(\tilde\vx) =& - \tilde \vgamma(\tilde\vx) f(\tilde\vx) + \tilde\vb(\tilde\vx) \cdot \tilde \nabla f(\tilde\vx) + \dfrac{1}{2} \mD(\tilde\vx): \tilde\nabla^2 f(\tilde\vx) \\
            &+ \int_{\R^{d+1} \backslash \{\tilde\vx\}} \left(f(\tilde\vy) - f(\tilde\vx) - (\tilde\vy - \tilde\vx) \cdot \tilde\nabla f(\tilde\vx) \chi(\tilde\vx, \tilde\vy)\right) \lambda(\dif \tilde\vy, \tilde\vx),
        \end{aligned}
    \end{equation*}
    where $\tilde \nabla = (\partial_s, \nabla)$.
    The coefficients satisfy the following conditions:
    \begin{enumerate}[label=(\arabic*)]
        \item The killing rate $\gamma(\tilde\vx) \geq 0$ for any $\tilde\vx \in \R^{d+1}$;
        \item The drift $\vb(\tilde\vx) \in \R^{d+1}$ for any $\tilde\vx \in \R^{d+1}$;
        \item The diffusion matrix $\mD(\tilde\vx) \in \R^{(d+1) \times (d+1)}$ is positive semidefinite for any $\tilde\vx \in \R^{d+1}$;
        \item $\lambda(\cdot, \tilde\vx)$ is a L\'evy measure on $\R^{d+1} \backslash \{\tilde\vx\}$.
    \end{enumerate}
    Now that we assume the augmented generator $\tilde \gL$ is conservative, we see by the definition of the augmented generator (\cref{def:augmented_generator}) that 
    \begin{equation*}
        0 = \tilde \gL 1 = - \tilde\vgamma \leq 0,
    \end{equation*}
    and thus $\tilde\vgamma = 0$ (\emph{cf.} \citep[Lemma~2.32]{bottcher2013levy}).

    Notice that for any $f \in C_c^\infty(\R^{d+1})$, it also satisfies the conditions in~\cref{prop:augmented_generator}, and thus the augmented generator $\tilde \gL$ only consists of the time derivative $\partial_s$ and the right generator $\gL_s$ only involving operations on the function $f_s(\vx)$ at time $s$ regarded as a function of only $\vx \in \R^d$, where $f_s(\vx)$ is an alternative notation for $f(\tilde \vx)$ with $\tilde \vx = (s, \vx)$.
    Therefore, the augmented generator $\tilde \gL$ can be reduced to the following form:
    \begin{equation*}
        \begin{aligned}
            \tilde \gL f_s(\vx) =& \partial_s f_s(\vx) + \gL_s f_s(\vx),
        \end{aligned}
    \end{equation*}
    where the right generator $\gL_s$ satisfies
    \begin{equation*}
        \begin{aligned}
            \gL_s f_s(\vx) =& \vb_s(\vx) \cdot \nabla f_s(\vx) + \dfrac{1}{2} \mD_s(\vx) : \nabla^2 f_s(\vx) \\
            &+ \int_{\R^d \backslash \{\vx\}} \left(f_s(\vy) - f_s(\vx) - (\vy - \vx) \cdot \nabla f_s(\vx) \chi_s(\vx, \vy)\right) \lambda_s(\dif \vy, \vx),
        \end{aligned}
    \end{equation*}
    and it is easy to check that the required conditions of the coefficients are satisfied.
    The proof is thus completed.
\end{proof}

We remark that the Courr\`ege form~\eqref{eq:courrege_form} is closely related to the L\'evy-Khintchine representation of L\'evy processes, and thus, without loss of generality, we choose the local unit $\chi$ to be the indicator function $\vone_{B(\vzero, 1)}$ of the unit ball $B(\vzero, 1)$ centered at the origin with radius $1$, and give the following definition of general L\'evy-type processes.

\begin{definition}[General L\'evy-Type Process]
    A general L\'evy-type process $(\vx_t)_{t \in \R}$ is a Markov process on $\R^d$ with the generator $\gL_t$ of the following form:
    \begin{equation}
        \begin{aligned}
            \gL_t f(\vx) =& \vb_t(\vx) \cdot \nabla f(\vx) + \dfrac{1}{2} \mD_t(\vx) : \nabla^2 f(\vx) \\
            &+ \int_{\R^d \backslash \{\vx\}} \left(f(\vy) - f(\vx) - (\vy - \vx) \cdot \nabla f(\vx) \vone_{B(\vx, 1)}(\vy - \vx)\right) \lambda_t(\dif \vy, \vx),
        \end{aligned}
        \label{eq:generator_levy_type_app}
    \end{equation}
    for any $f \in C_c^\infty(\R^d)$, where $\vb_t(\vx) \in \R^d$ is the drift, $\mD_t(\vx) \in \R^{d \times d}$ is the diffusion matrix, and $\lambda_t(\cdot, \vx)$ is a L\'evy measure. In the following, we will call $(\vb_t, \mD_t, \lambda_t)$ the L\'evy triplet of the general L\'evy-type process.
\end{definition}

To enable further discussion on the construction of the backward process, we summarize and strengthen the required conditions for the coefficients in the following assumption as in~\cref{ass:regularity_diffusion}:
\begin{assumption}[Regularity of General L\'evy-Type Process]
    We assume the following smoothness and regularity conditions on the L\'evy triple of the generator $\gL_t$ in~\cref{eq:generator_levy_type_app} for any $t \in \R$:
    \begin{enumerate}[label=(\arabic*)]
        \item The drift vector $\vb_t(\vx)$ satisfies $b_t^i \in C^{1, 0}(\R^d)$ for any $i \in [d]$ and $\vx \in \R^d$;
        \item The diffusion matrix $\mD_t(\vx)$ is positive semidefinite and satisfies $D_t^{ij} \in C^{2, 0}(\R^d)$ for any $i, j \in [d]$ and $\vx \in \R^d$;
        \item The L\'evy measure $\lambda_t(\cdot, \vx)$ admits a density function with respect to the Lebesgue measure, which we denote with a slight abuse of notation with $\lambda_t(\vy, \vx)$, \emph{i.e.},
        \begin{equation*}
            \lambda_t(\dif \vy, \vx) = \lambda_t(\vy, \vx) \dif \vy,
        \end{equation*}
        and the density satisfies $\lambda_t(\vx, \vx) = 0$, $\lambda_t(\cdot, \vx) \in C^{1, 0}(\R^d)$ for any $\vx \in \R^d$, $\lambda_t(\vy, \cdot) \in C^{1, 0}(\R^d)$ for any $\vy \in \R^d$, and the following integral
        \begin{equation}
            \int_{\R^d} \left(\dfrac{p_t(\vy)}{p_t(\vx)} \lambda_t(\vx, \vy) + \lambda_t(\vy, \vx) \right) (\vy - \vx) \dif \vy
            \label{eq:detailed_balance}
        \end{equation}
        exists either \emph{bona fide} or in the sense of Cauchy principal value, for any $\vx \in \R^d$ and $t \in [0, T]$.
    \end{enumerate}
    \label{ass:regularity_levy}
\end{assumption}

The following proposition gives a stochastic integral representation of the general L\'evy-type process.
\begin{proposition}
    Under~\cref{ass:regularity_levy}, the generator associated with the Markov process defined by following stochastic integral:
    \begin{equation}
        \begin{aligned}
            \vx_t &= \vx_0 + \int_0^t \vb_s(\vx_s) \dif s + \int_0^t \mSigma_s(\vx_s) \dif \vw_s \\
            &+ \int_0^t \int_{\R^d \backslash B(\vx_{s^-}, 1)} (\vy - \vx_{s^-}) N[\lambda](\dif s, \dif \vy) + \int_0^t \int_{B(\vx_{s^-}, 1)} (\vy - \vx_{s^-}) \tilde N[\lambda](\dif s, \dif \vy),
        \end{aligned}
        \label{eq:integral_general_levy}
    \end{equation}
    coincides with the form~\eqref{eq:generator_levy_type_app},
    where the matrix $\mSigma_s(x)$ satisfies $\mSigma_s(x) \mSigma_s(x)^\top = \mD_s(x)$ for any $x \in \R^d$, $(\vw_s)_{s \geq 0}$ is a $d$-dimensional Wiener process, $N[\lambda](\dif s, \dif \vy)$ denotes the Poisson random measure with evolving intensity $\lambda_s(\vy, \vx_{s^-}) \dif \vy$, and $\tilde N[\lambda](\dif s, \dif \vy)$ is the compensated version of $N[\lambda](\dif s, \dif \vy)$, \emph{i.e.},
    \begin{equation*}
        \tilde N[\lambda](\dif s, \dif \vy) = N[\lambda](\dif s, \dif \vy) - \lambda_s( \vy, \vx_{s^-}) \dif s \dif \vy.
    \end{equation*}
    \label{prop:stochastic_integral_general_levy}
\end{proposition}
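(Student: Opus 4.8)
The plan is to show that for every test function $f \in C_c^\infty(\R^d)$ the process $(\vx_t)$ defined by the stochastic integral~\eqref{eq:integral_general_levy} lies in the domain of the right generator and that this generator acts on $f$ exactly as the right-hand side of~\eqref{eq:generator_levy_type_app}; since, under~\cref{ass:smooth}, $C_c^\infty$ is contained in (and generates) the domain of the augmented generator $\tilde\gL$, this identifies the generator on a core. By translating the time origin it suffices to compute $\lim_{t\to0^+} t^{-1}\bigl(\E[f(\vx_t)\mid\vx_0=\vx]-f(\vx)\bigr)$. As a preliminary step I would invoke the standard existence--uniqueness theory for L\'evy-type stochastic integrals under~\cref{ass:regularity_levy}: the bounded $C^{1,0}$/$C^{2,0}$ coefficients give global Lipschitz-type bounds, while the L\'evy-measure integrability $\int_{B(\vx,1)}|\vy-\vx|^2\,\lambda_s(\dif\vy,\vx)<\infty$ makes the compensated small-jump integral a well-defined $L^2$ martingale and $\int_{|\vy-\vx|\geq1}\lambda_s(\dif\vy,\vx)<\infty$ makes the large-jump part of finite activity; hence~\eqref{eq:integral_general_levy} defines a c\'adl\'ag Markov process.

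The core computation is an application of the It\^o formula for L\'evy-type stochastic integrals --- in its time-inhomogeneous form with the state-dependent evolving intensity $\lambda_s(\vy,\vx_{s^-})$, using the theory of Poisson random measures with evolving intensities cited earlier in the paper --- to $f(\vx_t)$. Collecting terms: the drift contributes $\int_0^t\vb_s(\vx_s)\cdot\nabla f(\vx_s)\,\dif s$; the quadratic variation of the continuous martingale part contributes $\tfrac12\int_0^t\mD_s(\vx_s):\nabla^2 f(\vx_s)\,\dif s$; the large jumps contribute $\int_0^t\int_{\R^d\setminus B(\vx_{s^-},1)}\bigl(f(\vy)-f(\vx_{s^-})\bigr)N[\lambda](\dif s,\dif\vy)$; and the small jumps contribute $\int_0^t\int_{B(\vx_{s^-},1)}\bigl(f(\vy)-f(\vx_{s^-})\bigr)\tilde N[\lambda](\dif s,\dif\vy)$ together with the absolutely continuous It\^o correction $\int_0^t\int_{B(\vx_{s^-},1)}\bigl(f(\vy)-f(\vx_{s^-})-(\vy-\vx_{s^-})\cdot\nabla f(\vx_{s^-})\bigr)\lambda_s(\vy,\vx_{s^-})\,\dif\vy\,\dif s$, which is finite because the integrand is $O\bigl(|\vy-\vx_{s^-}|^2\,\|\nabla^2 f\|_\infty\bigr)$ near the diagonal.

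Taking $\E[\,\cdot\mid\vx_0=\vx]$ kills the $\dif\vw_s$ and $\tilde N[\lambda]$ stochastic integrals (their integrands are bounded since $f\in C_c^\infty$), and replacing the large-jump integral by its compensator merges with the small-jump correction to give
\begin{equation*}
    \E[f(\vx_t)\mid\vx_0=\vx]-f(\vx)=\E\!\left[\int_0^t\gL_s f(\vx_s)\,\dif s\,\middle|\,\vx_0=\vx\right],
\end{equation*}
with $\gL_s f$ precisely the expression in~\eqref{eq:generator_levy_type_app} --- the truncation $\vone_{B(\vx,1)}$ is exactly what appears when the small- and large-jump contributions are written as a single integral against $\lambda_s(\dif\vy,\vx)$. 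Dividing by $t$ and letting $t\to0^+$, using right-continuity of $(\vx_s)$, continuity of $s\mapsto\gL_s f(\vx_s)$ (from~\cref{ass:regularity_levy}), and dominated convergence --- the integrand being bounded uniformly in $s$ by a constant depending only on $\|f\|_{C^2}$ and $\sup_s\int(1\wedge|\vy-\vx|^2)\lambda_s(\dif\vy,\vx)$ --- yields $\gL_0 f(\vx)$, as required; shifting the origin gives the claim for general $s$.

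The main obstacle I anticipate is the rigorous use of the It\^o formula and the associated compensation identities for a Poisson random measure whose intensity $\lambda_s(\vy,\vx_{s^-})$ depends on the current state of the process rather than being a fixed L\'evy measure; this requires the machinery of integer-valued random measures with predictable compensators (as in the references on evolving-intensity Poisson random measures cited in the paper), possibly after a localization argument (stopping the process on exit from large balls) to justify the interchange of expectation and integration before passing to the limit. The remaining ingredients --- the near-diagonal integrability estimates and the $t\to0^+$ limit --- are routine given the regularity in~\cref{ass:regularity_levy}.
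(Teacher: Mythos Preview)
Your proposal is correct and follows the natural route of applying the It\^o formula for semimartingales with jumps directly to $f(\vx_t)$, then identifying the drift as $\gL_s f$. The paper, however, takes a different and shorter path: rather than working with a Poisson random measure whose intensity $\lambda_s(\vy,\vx_{s^-})$ is state-dependent, it embeds the problem into a higher-dimensional space via an auxiliary variable $\xi\in\R$, setting $z=(\vy,\xi)$ and $\gamma(t,z,\omega)=(\vy-\vx_{t^-}(\omega))\vone_{[0,\lambda_t(\vy,\vx_{t^-}(\omega))]}(\xi)$, following the strategy of \citet{protter1983point}. This converts the evolving-intensity Poisson random measure into a standard one with fixed Lebesgue intensity $\dif\vy\,\dif\xi$ on $\R^d\times\R$, with all state-dependence pushed into the integrand $\gamma$. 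The paper can then invoke off-the-shelf existence/uniqueness (Theorem~1.16) and generator identification (Theorem~1.22) results from \citet{oksendal2019stochastic}, and the computation reduces to integrating out $\xi$ to recover $\lambda_t(\vy,\vx)\,\dif\vy$. Your approach is conceptually more direct but, as you note, requires the machinery of integer-valued random measures with predictable compensators; the paper's auxiliary-variable trick sidesteps that machinery entirely at the cost of a slightly artificial-looking embedding.
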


\begin{proof}
    By taking $X(t) = \vx_t$, $z = (\vy, \xi)$, $\alpha(t, X(t)) = \vb_t(\vx_t)$, $\sigma(t, X(t)) = \mSigma_t(\vx_t)$, and 
    $$
        \gamma(t, z, \omega) = (\vy - \vx_{t^-}(\omega)) \vone_{[0, \lambda_t(\vy, \vx_{t^-}(\omega))]}(\xi)
    $$ 
    in~\citep[Theorem~1.16]{oksendal2019stochastic},
    where $\xi \in \R$ is an auxiliary variable augmented to the space $\R^d$ to account for varying jump intensities at different locations, following the strategy in~\citet{protter1983point}, we see that the stochastic integral~\eqref{eq:integral_general_levy} corresponds to a unique strong Markov c\`adl\`ag solution and by the general form of~\citep[Theorem~1.22]{oksendal2019stochastic} with the compensated Poisson random measure part, we have that
	\begin{equation*}
		\begin{aligned}
			\gL_t f(\vx) =& \vb_t(\vx) \cdot \nabla f(\vx) + \dfrac{1}{2} \mD_t(\vx) : \nabla^2 f(\vx) \\
            &+  \int_{\R^d \backslash B(\vx, 1)} \int_\R \left( f(\vx + (\vy - \vx) \vone_{[0, \lambda_t(\vy, \vx)]}(\xi)) - f(\vx) \right) \dif \vy \dif \xi \\
            &+ \int_{B(\vx, 1)} \int_\R \big( f(\vx + (\vy - \vx) \vone_{[0, \lambda_t(\vy, \vx)]}(\xi)) - f(\vx)\\
            &\quad \quad \quad \quad \quad \quad \quad \quad \quad \quad - (\vy - \vx) \cdot \nabla f(\vx) \vone_{[0, \lambda_t(\vy, \vx)]}(\xi) \big) \dif \vy \dif \xi\\
			=& \vb_t(\vx) \cdot \nabla f(\vx) + \dfrac{1}{2} \mD_t(\vx) : \nabla^2 f(\vx) + \int_{\R^d \backslash B(\vx, 1)} \left( f(\vy) - f(\vx) \right) \lambda_t(\vy, \vx) \dif \vy \\
            &+ \int_{B(\vx, 1)} \left( f(\vy) - f(\vx)  - (\vy - \vx) \cdot \nabla f(\vx) \right) \lambda_t(\vy, \vx) \dif \vy,
		\end{aligned}
	\end{equation*}
    which is exactly the form~\eqref{eq:generator_levy_type_app} and the proof is complete.
\end{proof}

\paragraph{Domain of the Generator.} We set $\gU = C^\infty(\R^d) \cap C^{2,0}(\R^d)$, and $\gV = \gS(\R^d)$, the Schwartz space on $\R^d$. Clearly, under~\cref{ass:regularity_levy}, we have that $\gU \subset \dom(\gL_t)$ for any $t \in [0, T]$ and that for any $f \in \gU$ and $g \in \gV$,~\cref{eq:adjoint} holds as 
\begin{align*}
        &\angle{\gL_t f,  g} = \int_{\R^d} g(\vx) \gL_t f(\vx) \dif \vx\\
        =& \int_{\R^d} g(\vx) \left(\vb_t(\vx) \cdot \nabla f(\vx) + \dfrac{1}{2} \mD_t(\vx) : \nabla^2 f(\vx)\right) \dif \vx \\
        &+\int_{\R^d} \int_{\R^d} \left(f(\vy) - f(\vx) - (\vy - \vx) \cdot \nabla f(\vx) \vone_{B(\vzero, 1)}(\vy - \vx)\right) \lambda_t(\vy, \vx) g(\vx) \dif \vx \dif \vy\\
        =& -\int_{\R^d} \nabla \cdot (\vb_t(\vx) g(\vx)) f(\vx) \dif \vx + \int_{\R^d} \dfrac{1}{2} \nabla^2 : \left(\mD_t(\vx) g(\vx)\right) f(\vx)  \dif \vx\\
        &+ \int_{\R^d} \int_{\R^d} \left(f(\vx) \lambda_t(\vx, \vy) g(\vy) - f(\vx) \lambda_t(\vy, \vx) g(\vx) \right) \dif \vx \dif \vy\\
        &+ \int_{\R^d}  \int_{\R^d} f(\vx) \nabla \cdot \left((\vy - \vx) \vone_{B(\vx, 1)} (\vy - \vx) \lambda_t(\vy, \vx) g(\vx)\right) \dif \vx \dif \vy = \angle{f, \gL_t^* g},
\end{align*}
\emph{i.e.}, $\gV \subset \dom(\gL_t^*)$ for any $t \in [0, T]$, with the adjoint operator $\gL_t^*$ given by
\begin{equation}
    \begin{aligned}
        \gL_t^* g = &-\nabla \cdot (\vb_t g) + \dfrac{1}{2} \nabla^2 : (\mD_t g)+ \int_{\R^d} \left(\lambda_t(\vx, \vy) g(\vy) - \lambda_t(\vy, \vx) g(\vx)\right) \dif \vy\\
        &+ \int_{\R^d} \nabla \cdot \left((\vy - \vx) \vone_{B(\vx, 1)}(\vy - \vx) \lambda_t(\vy, \vx) g(\vx)\right) \dif \vy.
    \end{aligned}
    \label{eq:adjoint_levy}
\end{equation}
It is also checked that for any $g \in \gV$, $g \gU \subset \gV$.

\paragraph{Backward Generator.} We plug the form of the adjoint generator $\gL_t^*$~\eqref{eq:adjoint_levy} into~\cref{thm:time_reversal_app} and obtain the backward generator $\cev \gL_t$ as follows:
\begin{align*}
    &\cev \gL_{T-t} f = p_t^{-1} \gL_t^*(p_t f) - p_t^{-1} f \gL_t^* p_t \\
    =&\left(-\vb_t + \mD_t \nabla \log p_t + \nabla \cdot \mD_t \right) \cdot \nabla f + \dfrac{1}{2} \mD_t: \nabla^2 f + \int_{\R^d} \left(f(\vy) - f(\vx)\right) \dfrac{p_t(\vy)}{p_t(\vx)} \lambda_t(\vx, \vy) \dif \vy \\
    &+ \int_{\R^d} (\vy - \vx) \cdot \nabla f(\vx) \vone_{B(\vx, 1)}(\vy - \vx) \lambda_t(\vy, \vx) \dif \vy\\
    =&\left(-\vb_t + \mD_t \nabla \log p_t + \nabla \cdot \mD_t \right) \cdot \nabla f + \dfrac{1}{2} \mD_t: \nabla^2 f + \int_{\R^d} \left(f(\vy) - f(\vx)\right) \dfrac{p_t(\vy)}{p_t(\vx)} \lambda_t(\vx, \vy) \dif \vy \\
    &+ \int_{\R^d} (\vy - \vx) \cdot \nabla f(\vx) \vone_{B(\vx, 1)}(\vy - \vx) \left(\dfrac{p_t(\vy)}{p_t(\vx)} \lambda_t(\vx, \vy) + \lambda_t(\vy, \vx)\right)  \dif \vy\\
    &- \int_{\R^d} (\vy - \vx) \cdot \nabla f(\vx) \vone_{B(\vx, 1)}(\vy - \vx) \dfrac{p_t(\vy)}{p_t(\vx)} \lambda_t(\vx, \vy)  \dif \vy
\end{align*}
which also corresponds to a general L\'evy-type process with the generator of the form~\eqref{eq:generator_levy_type_app} with the L\'evy triplet $(\cev \vb_t, \cev \mD_t, \cev \lambda_t)$:
\begin{gather*}
    \cev \vb_{T-t}(\vx) = -\vb_t(\vx) + \mD_t(\vx) \nabla \log p_t(\vx) + \nabla \cdot \mD_t(\vx)\\
    \quad \quad \quad \quad \quad \quad \quad \quad \quad+ \int_{B(\vx, 1)} \left(\dfrac{p_t(\vy)}{p_t(\vx)} \lambda_t(\vx, \vy) + \lambda_t(\vy, \vx)\right) (\vy - \vx) \dif \vy,\\
    \cev \mD_{T-t}(\vx) = \mD_t(\vx),\quad
    \cev \lambda_{T-t}(\vy, \vx) = \dfrac{p_t(\vy)}{p_t(\vx)} \lambda_t(\vx, \vy).
\end{gather*}

\paragraph{KL Divergence and Loss Function.} The KL divergence in~\cref{cor:change_of_measure} can be rewritten as 
\begin{align*}
    &\KL(\P \| \sQ) = \E\left[\int_0^T \eta_t(\vx_t) \gL_t\eta_t^{-1}(\vx_t) + \gL_t \log \eta_t(\vx_t)  \dif t\right] \\
    =& \E\bigg[\int_0^T \dfrac{1}{2} \mD_t(\vx_t) : \nabla \log \eta_t(\vx_t) \nabla^\top \log \eta_t(\vx_t) \dif t \\
    &\quad + \int_0^T \int_{\R^d} \left(\dfrac{\eta_t(\vx_t)}{\eta_t(\vy)} - 1 - \log \dfrac{\eta_t(\vx_t)}{\eta_t(\vy)}\right) \lambda_t(\vy, \vx_t) \dif \vy \dif t\bigg],
\end{align*}
where the first part corresponds to the diffusion part $\vb_t(\vx) \cdot \nabla f(\vx)$ and the second part the large jump part $\int_{\R^d \backslash \{\vx\}} (f(\vy) - f(\vx)) \lambda_t(\vy, \vx) \dif \vy$. The small jump part $\int_{\R^d \backslash \{\vx\}} (\vy - \vx) \cdot \nabla f(\vx) \vone_{B(\vx, 1)}(\vy - \vx) \lambda_t(\vy, \vx) \dif \vy$ does not contribute to the KL divergence by noticing that $\eta_t \nabla \eta_t^{-1} + \nabla \log \eta_t = 0$. Then simple algebraic manipulations lead to~\cref{eq:general_levy_loss,eq:general_levy_score_matching}.

\vspace{.5em}

\begin{proof}[Proof of \cref{thm:heavy_tail}]
    The affine linear SDE admits the explicit variation-of-constants representation
    $$
    \vx_T = \mM_T \vx_0 + \vm_T + \mZ_T,
    \quad
    \vm_T := \int_0^T \mM_{T,s}\vu_s\dif s,\quad
    \mZ_T := \int_0^T \mM_{T,s}\mSigma_s\dif \vw_s,
    $$
    where $\mM_t$ solves $\dot{\mM}_t=\mA_t\mM_t$ with $\mM_0=\mI$, and $\mM_{T,s}:=\mM_T\mM_s^{-1}$. For each $t$, $\mM_t$ is invertible and we denote $\sigma_{\min}(\mM_T)>0$ as its smallest singular value. Since $\mSigma_t$ is bounded, $\E\|\mZ_T\|^2<\infty$ and $\vm_T$ is deterministic and finite.
    
    We first show heavy-tail persistence at finite time:
    $$
    \E\|\vx_T\|^2 
    \ge \frac12\E\|\mM_T\vx_0\|^2 - \E\|\vm_T+\mZ_T\|^2
    \ge \frac12\sigma_{\min}^2(\mM_T)\E\|\vx_0\|^2 - C
    = \infty,
    $$
    using $\E\|\vx_0\|^2=\infty$.
    
    Let $q_0=\gN(\vmu,\mQ)$ and denote by $\lambda_{\max}(\mQ)$ the largest eigenvalue. By the Donsker-Varadhan variational formula,
    $$
    \KL(p_T\|q_0)=\sup_{f}\Bigl\{\E_{p_T}[f(\vx)]-\log\E_{q_0}[e^{f(\vx)}]\Bigr\}.
    $$
    Choosing $f(\vx)=\alpha\|\vx\|^2$ with any $\alpha\in\bigl(0,\tfrac{1}{2\lambda_{\max}(\mQ)}\bigr)$ yields
    $$
    \KL(p_T\|q_0)\ge\alpha\E\|\vx_T\|^2 -\log\E_{q_0}\bigl[e^{\alpha\|\vx\|^2}\bigr] = \infty,
    $$
    which completes the proof.
    \end{proof}
\end{document}